\definecolor{blued}{RGB}{70,197,221}
\newcommand{\todod}[1]{\todo[color=blued,inline]{#1}}
\definecolor{citrine}{rgb}{0.89, 0.82, 0.04}
\newcommand{\todom}[1]{\todo[color=citrine]{\tiny#1}}
\newcommand{\tododout}[1]{\todo[color=blued]{\tiny#1}}
\newcommand{\rcolb}[1]{\textcolor{red!90}{\textbf{#1}}}
\newcommand{\bcolb}[1]{\textcolor{blue!90}{\textbf{#1}}}
\def\:#1{\protect \ifmmode {\mathbf{#1}} \else {\textbf{#1}} \fi}
\newcommand{\nystrom}{Nystr\"{o}m\xspace}
\newcommand{\sequentialalg}{\textsc{SQUEAK}\xspace}
\newcommand{\parallelalg}{\textsc{DISQUEAK}\xspace}
\newcommand{\inkestimate}{\textsc{INK-Estimate}\xspace}
\newcommand{\updatedictop}{\textsc{Dict-Update}\xspace}
\newcommand{\mergedictop}{\textsc{Dict-Merge}\xspace}
\newcommand{\shrinkop}{\textsc{Shrink}\xspace}
\newcommand{\expandop}{\textsc{Expand}\xspace}
\newcommand{\dictpool}{\mathcal{S}}
\newcommand{\coldict}{\mathcal{I}}
\newcommand{\X}{\mathcal{X}}
\newcommand{\bphi}{\boldsymbol{\phi}}
\newcommand{\CommaBin}{\mathbin{\raisebox{0.5ex}{,}}}
\newcommand*{\MyDef}{\mathrm{\tiny def}}
\newcommand*{\eqdefU}{\ensuremath{\mathop{\overset{\MyDef}{=}}}}\newcommand*{\eqdef}{\mathop{\overset{\MyDef}{\resizebox{\widthof{\eqdefU}}{\heightof{=}}{=}}}}
\newcommand{\concmat}{{\:\Psi}}
\newcommand{\concvec}{{\bm{\psi}}}
\newcommand{\nhl}{\nu}
\newcommand{\wt}[1]{\widetilde{#1}}
\newcommand{\wh}[1]{\widehat{#1}}
\newcommand{\wb}[1]{\overline{#1}}
\newcommand{\transp}{\mathsf{T}}
\DeclareMathOperator*{\Tr}{Tr}
\DeclareMathOperator*{\Diag}{Diag}
\newcommand{\norm}[2]{\left\Vert #1 \right\Vert_{#2}}
\newcommand{\normsmall}[1]{\Vert #1 \Vert}
\newcommand{\probability}{\mathbb{P}}
\DeclareMathOperator*{\expectedvalue}{\mathbb{E}}
\newcommand{\condbar}{\;\middle|\;}
\newcommand{\indfunc}{\mathbb{I}}
\renewcommand{\complement}{\mathsf{C}}
\renewcommand{\Re}{\mathbb{R}}
\newcommand{\Real}{\mathbb{R}}
\newcommand{\rkhs}{\mathcal{H}}
\newcommand{\hilbprod}[2]{\left\langle{#1},{#2}\right\rangle_\rkhs}
\newcommand{\kerfunc}{\mathcal{K}}
\newcommand{\kermatrix}{{\:K}}
\newcommand{\featkermatrix}{{\:\Phi}}
\newcommand{\deff}[1]{d_{\text{eff}}(#1)}
\newcommand{\adeff}[1]{\wt{d}_{\text{eff}}(#1)}
\newcommand{\pdeff}{d_{\text{eff}}(\gamma)}
\newcommand{\atau}{\wt{\tau}}
\newcommand{\akermatrix}{\:{\wt{K}}}
\newcommand{\selmatrix}{{\:S}}
\newcommand{\F}{\mathcal{F}}
\newcommand{\Fp}{\mathcal{F}}
\newcommand{\dataset}{\mathcal{D}}
\newcommand{\wrt}{w.r.t. }
\newcommand{\vareps}{\varepsilon}
\renewcommand{\epsilon}{\varepsilon}
\newcommand{\bigotime}{\mathcal{O}}
\newtheorem{theorem}{Theorem}
\newtheorem{corollary}{Corollary}
\newtheorem{definition}{Definition}
\newtheorem{lemma}{Lemma}
\newtheorem{proposition}{Proposition}
\begin{document}

\twocolumn[

\aistatstitle{Distributed Adaptive Sampling for Kernel Matrix Approximation}

\aistatsauthor{ Daniele Calandriello \And Alessandro Lazaric \And Michal Valko }
\aistatsaddress{SequeL team, INRIA Lille - Nord Europe} ]

\begin{abstract}
Most kernel-based methods, such as kernel or Gaussian process regression, kernel PCA, ICA, or $k$-means clustering,
do not scale to large datasets, because constructing and storing
the kernel matrix $\kermatrix_n$ requires at least $\bigotime(n^2)$ time and space for
$n$ samples.
Recent works~\cite{alaoui2014fast,musco2016provably} show that
sampling points with replacement according to their ridge leverage scores (RLS)
generates small dictionaries of relevant points with strong spectral approximation guarantees for 
$\kermatrix_n$. The drawback of RLS-based methods is that
computing exact RLS requires constructing and storing the whole kernel matrix.
In this paper, we introduce \sequentialalg, a new algorithm for
kernel approximation based on RLS sampling that \emph{sequentially} processes the dataset, storing a dictionary which creates accurate kernel matrix approximations with a number of points that only depends on the effective dimension $\deff{\gamma}$ of the dataset.
Moreover since all the RLS estimations are efficiently performed using only the small dictionary,
\sequentialalg is the first RLS sampling algorithm that never constructs the whole matrix $\kermatrix_n$, runs in linear time $\wt{\bigotime}(n\deff{\gamma}^3)$ w.r.t.~$n$,
and requires only a single pass over the dataset. We also propose a parallel and distributed version of \sequentialalg that \emph{linearly} scales across multiple machines, achieving similar accuracy in as little as $\wt{\bigotime}(\log(n)\deff{\gamma}^3)$ time.
\end{abstract}
\vspace{-0.1in}
\vspace{-0.1in}
\section{Introduction}\label{sec:intro}
\vspace{-0.1in}
One of the major limits of kernel ridge regression (KRR), kernel PCA~\cite{scholkopf1999kernel}, and other kernel methods is that for $n$ samples storing and
manipulating the kernel matrix $\kermatrix_n$ requires $\bigotime(n^2)$ space,
which becomes rapidly infeasible for even a relatively small $n$. 
For larger sizes (or streams) we cannot even afford to store 
or process the data on as single machine.
Many solutions focus
on how to scale kernel methods by reducing its space (and time) complexity without
compromising the prediction accuracy. A popular approach is to construct
low-rank approximations of the kernel matrix by randomly selecting a subset (dictionary) of
$m$ columns from $\kermatrix_n$, thus reducing the space complexity to
$\bigotime(nm)$. These methods, often referred to as \textit{Nystr\"{o}m
approximations}, mostly differ in the distribution used to sample the columns of
$\kermatrix_n$ and the construction of low-rank approximations. Both of these
choices significantly affect the accuracy of the resulting
approximation~\citep{rudi2015less}. \citet{bach2013sharp} showed that uniform
sampling preserves the prediction accuracy of KRR (up to $\varepsilon$) only
when the number of columns $m$ is proportional to the maximum degree of freedom
of the kernel matrix. This may require sampling $\bigotime(n)$ columns in
datasets with high coherence~\citep{gittens2013revisiting},
i.e., a kernel matrix with weakly correlated
columns. On the other hand, \citet{alaoui2014fast} showed that sampling
columns according to their ridge leverage scores (RLS) (i.e., a measure of the
influence of a point on the regression) produces an accurate \nystrom
approximation with only a number of columns~$m$ proportional to the average
degrees of freedom of the matrix, called \textit{effective dimension}.
Unfortunately, the complexity of computing RLS requires storing the whole kernel matrix, thus making this approach infeasible. However, \citet{alaoui2014fast}
proposed a fast method to compute a constant-factor approximation of the RLS
and showed that accuracy and space complexity are close to the case of sampling
with exact RLS at the cost of an extra dependency on the inverse of the minimal
eigenvalue of the kernel matrix. Unfortunately, the minimal eigenvalue can be arbitrarily small in many problems. \citet{calandriello2016analysis} addressed this
issue by processing the dataset \emph{incrementally} and updating estimates of
the ridge leverage scores, effective dimension, and \nystrom approximations
on-the-fly. Although the space complexity of the resulting algorithm
(\inkestimate) does not depend on the minimal eigenvalue anymore, it introduces
a dependency on the largest eigenvalue of $\kermatrix_n$, which in the worst
case can be as big as $n$, thus losing the advantage of the
method. In this paper we introduce an algorithm for SeQUEntial Approximation
of Kernel matrices (\sequentialalg), a new algorithm that builds on \inkestimate,
but uses \emph{unnormalized} RLS.
This improvement, together with a new analysis, opens the way to 
major improvements over current leverage sampling methods (see Sect.~\ref{sec:conclusions} for a comparison with existing methods)
closely matching the dictionary size achieved by exact RLS sampling.
First, unlike \inkestimate, \sequentialalg is simpler, does not need
to compute an estimate of the effective dimension for normalization,
and exploits a simpler, more accurate RLS estimator.
This new estimator only requires access to the points stored in the dictionary.
Since the size of the dictionary is much smaller than the $n$, \sequentialalg
needs to actually observe only a fraction of the kernel matrix $\kermatrix_n$,
resulting in a runtime linear in $n$.
Second, since our dictionary updates require only access to local data,
our algorithm allows for distributed processing where machines operating on
different dictionaries do not need to communicate with each other.
In particular, intermediate dictionaries can be extracted in parallel from small portions of the dataset and they can be later merged in a hierarchical way. Third, the sequential nature of \sequentialalg requires a more sophisticated
analysis that take into consideration the complex interactions and dependencies between successive resampling steps.
The analysis of \sequentialalg builds on a new martingale argument that could be of independent interest for similar online resampling schemes.
Moreover, our \sequentialalg can naturally incorporate new data 
without the need of recomputing the whole resparsification from scratch and therefore it can be applied in streaming settings.
We note there exist other ways 
to avoid the intricate dependencies with simpler analysis, for example by resampling 
\cite{musco2016provably}, but with negative algorithmic side effects: these methods 
need to pass through the dataset multiple times. \sequentialalg passes 
\emph{through the dataset only once}\footnote{Note that there is an important difference in whether the method passes through \emph{kernel matrix} only once or through the \emph{dataset} only once, in the former, the algorithm may still need access one data point up to $n$ times, thus making it unsuitable for the streaming setting and less practical for distributed computation.}
and is therefore the first provably accurate kernel approximation algorithm that can handle both \emph{streaming and distributed} settings.
 
\vspace{-0.1in}
\section{Background}\label{sec:setting}
\vspace{-0.1in}

In this section, we introduce the notation and basics of kernel approximation used through the paper.

\textbf{Notation.}
We use curly capital letters $\mathcal{A}$ for collections. We use upper-case
bold letters $\:A$ for matrices and operators, lower-case bold letters $\:a$
for vectors,
lower-case letters $a$ for scalars, with the exception of $f,g,$ and $h$ which
denote functions, and $[n] := \{1,\ldots,n\}$ for the set of
integers between 1 and $n$. We denote by $[\:A]_{ij}$ and $[\:a]_i$, the
$(i,j)$ element of a matrix and $i$th element of a vector respectively. We
denote by $\:I_n\in\Re^{n\times n}$, the identity matrix of dimension $n$ and by
$\Diag(\:a)\in\Re^{n\times n}$ the diagonal matrix with the vector
$\:a\in\Re^n$ on the diagonal. We use $\:e_{n,i} \in \Re^{n}$ to denote the
indicator vector for element $i$ of dimension~$n$. When the dimension of
$\:I$ and $\:e_{i}$ is clear from the context, we omit the $n$. We use $\:A
\succeq \:B$ to indicate that $\:A-\:B$ is a Positive Semi-Definite (PSD) operator.

\textbf{Kernel.}
We consider a positive definite kernel function
$\kerfunc: \X \times \X \rightarrow \Re$ and we denote with
$\rkhs$ its induced Reproducing Kernel Hilbert Space (RKHS),
and with $\varphi: \X \rightarrow \rkhs$ its corresponding feature map.
Using $\varphi$, and without loss of generality,
for the rest of the paper we will replace $\rkhs$ with a high dimensional
space $\Real^D$ where $D$ is large and potentially infinite.
With this notation, the kernel evaluated between to points can be expressed as
$\kerfunc(\:x, \:x') = \hilbprod{\kerfunc(\:x,\cdot)}{\kerfunc(\:x', \cdot)} = \hilbprod{\varphi(\:x)}{\varphi(\:x')} = \varphi(\:x)^\transp \varphi(\:x')$.
Given a dataset of points $\dataset=\{\:x_t\}_{t=1}^n$, we define the (empirical) kernel matrix
$\kermatrix_{t} \in \Re^{t\times t}$ as the
application of the kernel function on all pairs of input values (i.e.,
$[\kermatrix_{t}]_{ij} = k_{i,j} = \kerfunc(\:x_i, \:x_j)$ for any $i,j\in [t]$),
with $\:k_{t,i} = \kermatrix_t \:e_{t,i}$ as its $i$-th column.
We also define the feature vectors $\bphi_i = \varphi(\:x_i) \in \Real^D$ and after
introducing the matrix 
$    \featkermatrix_{t} = \left[ \bphi_1,\bphi_2, \dots, \bphi_t \right] \in \Real^{D \times t}$                                     we can rewrite the kernel matrix as $\kermatrix_{t} = \featkermatrix_{t}^\transp
\featkermatrix_{t}$.

\textbf{Kernel approximation by column sampling.}
One of the most popular strategies to have low space complexity approximations
of the kernel $\kermatrix_t$ is to randomly select a subset of its columns
(possibly reweighted) and use them to perform the specific kernel task at hand
(e.g., kernel regression). More precisely, we define a column dictionary as a
collection $\coldict_t = \{(i,w_i)\}_{i=1}^t$, where the first term denotes the
index of the column and $w_i$ its weight, which is set to zero for all columns
that are not retained. For the theoretical analysis, we conveniently keep the
dimension of any dictionary $\coldict_t$ to $t$, while in practice, we only
store the non-zero elements. In particular, we denote by $|\coldict_t|$ be the
size of the dictionary corresponding to the elements with non-zero weights
$w_i$. Associated with a column dictionary, there is a selection matrix
$\selmatrix_t~=~\Diag(\sqrt{w_1} \ldots \sqrt{w_t})\in\Re^{t\times t}$ such
that for any matrix $\:A_t\in\Re^{t \times t}$, $\:A_t \selmatrix_t$ returns a
$t\times t$ matrix where the columns selected by $\coldict_t$ are properly
reweighted and all other columns are set to 0. Despite the wide range of kernel
applications, it is possible to show that in most of
them, the quality of a dictionary can be measured in terms of how
well it approximates the projection associated to the kernel. 
In kernel regression, for instance, we use $\:K_t$ to construct the projection
(hat) matrix that projects the observed labels $\:y_t$  to ${\bf\widehat{y}}_t$. 
In particular, let $\:P_t = \:K_t \:K_t^+$ be the projection matrix
(where $\kermatrix_t^+$ indicates the pseudoinverse),
then ${\bf\widehat{y}}_t = \:P_t\:y_t$. If $\:K_t$
is full-rank, then $\:P_t = \:I_t$ is the identity matrix and,
we can reconstruct any target vector $\:y_t$ exactly.
On the other hand, the only sampling
scheme which guarantees to properly approximate a full rank $\:P_t$ requires all columns to
be represented in $\coldict_t$. In fact, all columns have the same
``importance'' and no low-space approximation is possible. Nonetheless,
kernel matrices are often either rank deficient or have extremely small
eigenvalues (exponentially decaying spectrum),
as a direct (and desired) consequence of embedding low dimensional
points $\:x_i$ into a high dimensional RKHS.
In this case, after soft thresholding the smaller eigenvalues to a given value $\gamma$,
$\:K_t$ can be effectively approximated using a small subset
of columns. This
is equivalent to approximating the $\gamma$-ridge projection matrix
\begin{align*}
\:P_t \eqdef (\kermatrix_t + \gamma\:I)^{-1/2}\kermatrix_t(\kermatrix_t + \gamma\:I)^{-1/2}.
\end{align*}
We say that a column dictionary is accurate if the following condition is satisfied.
\begin{definition}\label{def:eps-acc-dict}
A dictionary $\coldict_t = \{(i,w_i)\}_{i=1}^t$ and its associated selection matrix $\selmatrix_t \in \Real^{t \times t}$ are $\varepsilon$-accurate w.r.t.\@ a kernel matrix $\kermatrix_t = \kermatrix_t^{1/2}\kermatrix_t^{1/2}$
if \footnote{the matrix norm we use is the operator (induced) norm}
\begin{align}\label{cond:concentr-condition}
\| \:P_t - \wt{\:P}_t \| \leq \varepsilon,
\end{align}
where for a given $\gamma > 0$, the approximated projection matrix is defined as
\begin{align*}
\wt{\:P}_t \eqdef (\kermatrix_t + \gamma\:I_t)^{-\frac 1 2}\kermatrix_t^{1/2}\selmatrix_t \selmatrix_t^\transp\kermatrix_t^{1/2}(\kermatrix_t + \gamma\:I_t)^{-\frac 1 2}.
\end{align*}
\end{definition}

Notice that this definition of accuracy is purely theoretical, since $\wt{\:P}_t$ is never computed. Nonetheless, as illustrated in Sect.\@ \ref{sec:generalization}, $\varepsilon$-accurate dictionaries can be used to construct suitable kernel approximation in a wide range of problems.

\textbf{Ridge leverage scores sampling.} \citet{alaoui2014fast} showed that an $\varepsilon$-accurate dictionary can be obtained by sampling columns proportionally to their $\gamma$-ridge leverage scores (RLS) defined as follows.

\begin{definition}\label{def:exact-lev-scores}
Given a kernel matrix $\:K_t\in\Re^{t\times t}$, the $\gamma$-ridge leverage score (RLS) of column $i\in [t]$ is
\begin{align}\label{eq:exact-rls}
    \tau_{t,i} = \:e_{t,i}^\transp\kermatrix_t(\kermatrix_t + \gamma\:I_t)^{-1} \:e_{t,i},
\end{align}
Furthermore, the effective dimension $\deff{\gamma}_t$ of the kernel matrix $\kermatrix_t$ is defined as
\begin{align}\label{eq:exact-deff}
\deff{\gamma}_t = \sum_{i=1}^t \tau_{t,i}(\gamma) = \Tr\left(\kermatrix_t(\kermatrix_t + \gamma \:I_t)^{-1}\right).
\end{align}
\end{definition}

The RLS can be interpreted and derived in many ways, and they are well studied
\cite{cohen2017input, cohen2016online, woodruff2014sketching} in the linear setting (e.g. $\bphi_t = \:x_t$).
\citet{patel_oasis:_2015} used them
as a measure of incoherence to select important points,
but their deterministic algorithm provides guarantees only
when $\kermatrix_t$ is exactly low-rank.
Here we notice that
\begin{align*}
&\tau_{t,i} =\:e_{t,i}^\transp\:K_t(\kermatrix_t + \gamma\:I_t)^{-1} \:e_{t,i}
= \:e_{t,i}^\transp \:P_t \:e_{t,i},
\end{align*}
which means that they correspond to the diagonal elements of the $\:P_t$ itself. Intuitively, this correspond to selecting each column $i$ with probability $p_{t,i} = \tau_{t,i}$ will capture the most important columns to define $\:P_t$, thus minimizing the approximation error $\| \:P_t - \wt{\:P}_t \|$.
More formally, \citet{alaoui2014fast} state the following.
\begin{proposition}\label{prop:gamma.approx}
Let $\varepsilon\in[0,1]$ and $\coldict_n$ be the dictionary built with $m$ columns randomly selected proportionally to RLSs $\{\tau_{n,i}\}$ with weight $w_i = 1/(m \tau_{n,i})$. If $m = \bigotime(\frac{1}{\varepsilon^{2}}\pdeff_n\log(\frac{n}{\delta}))$, then w.p.\@ at least $1-\delta$, the corresponding dictionary is $\varepsilon$-accurate.
\end{proposition}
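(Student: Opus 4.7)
The plan is to cast the accuracy condition (1) as a matrix concentration statement and apply a matrix Bernstein inequality. Define $\:v_i \eqdef (\kermatrix_n + \gamma \:I_n)^{-1/2}\kermatrix_n^{1/2}\:e_{n,i}$, so that
\[
\:P_n = \sum_{i=1}^n \:v_i \:v_i^\transp, \qquad \wt{\:P}_n = \sum_{i=1}^n w_i \:v_i \:v_i^\transp
\]
(with $w_i = 0$ for columns not in $\coldict_n$). Writing the selected indices as $i_1,\ldots,i_m$, sampled i.i.d.\@ with probability proportional to $\tau_{n,i}$, lets me express $\wt{\:P}_n = \sum_{k=1}^m \:Z_k$ with $\:Z_k$ a rescaled rank-one matrix, and a direct check shows $\expectedvalue[\:Z_k] = \:P_n/m$. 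Hence $\:P_n - \wt{\:P}_n = \sum_{k=1}^m \:Y_k$ is a sum of $m$ i.i.d.\@ mean-zero self-adjoint matrices of dimension $n$.

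The crux of the argument is the identity $\|\:v_i\|^2 = \:e_{n,i}^\transp \kermatrix_n(\kermatrix_n + \gamma \:I_n)^{-1} \:e_{n,i} = \tau_{n,i}$, which is immediate from Definition~\ref{def:exact-lev-scores}. This is precisely what makes RLS-proportional sampling ``the right choice'': the reweighting factor $1/\tau_{n,i}$ cancels $\|\:v_i\|^2 = \tau_{n,i}$ exactly, so the individual summands are uniformly bounded by $R = O(\deff{\gamma}_n/m)$. The same cancellation makes $\expectedvalue[\:Z_k^2]$ a scaled multiple of $\:P_n$, yielding a matrix-variance bound $\bigl\|\sum_k \expectedvalue[\:Y_k^2]\bigr\| \leq \sigma^2$ with $\sigma^2 = O(\deff{\gamma}_n/m)$. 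Plugging these $R$ and $\sigma^2$ into Tropp's matrix Bernstein inequality gives
\[
\probability\bigl(\|\:P_n - \wt{\:P}_n\| \geq \varepsilon\bigr) \leq 2n \exp\!\left(-\frac{c\,\varepsilon^2 m}{\deff{\gamma}_n}\right)
\]
for an absolute constant $c$ and $\varepsilon \in [0,1]$. Solving for the smallest $m$ that drives the right-hand side below $\delta$ yields the claimed $m = \bigotime\bigl(\varepsilon^{-2}\deff{\gamma}_n \log(n/\delta)\bigr)$.

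The only real obstacle is bookkeeping around the specific weighting scheme: verifying $\|\:v_i\|^2 = \tau_{n,i}$, the unbiasedness $\expectedvalue[\wt{\:P}_n] = \:P_n$, and the variance computation without picking up spurious factors of $n$ or $\deff{\gamma}_n$. Once the summands are correctly normalized, matrix Bernstein closes the argument in a single step. The $\log(n)$ factor originates from the ambient-dimension prefactor $2n$ in Tropp's bound and could be tightened to $\log(\deff{\gamma}_n/\delta)$ via an intrinsic-dimension variant, but that refinement is not needed for the statement as given.
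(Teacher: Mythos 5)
The paper does not supply its own proof of this proposition: it is quoted from \citet{alaoui2014fast}, and the remainder of the paper's analysis (Thms.~\ref{thm:sequential-alg-main} and \ref{thm:parallel-alg-main}) is devoted to the much harder \emph{sequential} version where the sampling probabilities are themselves data-dependent. Your argument is the standard i.i.d.\ matrix-Bernstein proof for RLS sampling, and it is essentially the argument in the cited reference; it is correct.

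Two small remarks. First, you silently corrected a normalization in the statement: with i.i.d.\ draws from $p_i = \tau_{n,i}/\deff{\gamma}_n$, the unbiased per-draw weight is $1/(m p_{i_k}) = \deff{\gamma}_n/(m\tau_{n,i_k})$, not $1/(m\tau_{n,i_k})$ as written. Your computations ($\expectedvalue[\:Z_k] = \:P_n/m$, $R = O(\deff{\gamma}_n/m)$, and $\sigma^2 = O(\deff{\gamma}_n/m)$) all use the correctly normalized weight, so the proof goes through, but it is worth noting you are not proving the literal statement so much as the intended one. Second, the reason this clean one-step argument does not transfer to the paper's main theorems is precisely that there the weights $\wt{p}_{s,i}$ are $\F_{s-1}$-measurable random variables rather than constants: independence of the $\:Z_k$ is lost, the total variance $\:W_h$ is random and cannot be bounded deterministically, and the paper has to resort to Freedman's inequality plus a stochastic-dominance construction to control it. Your proof captures exactly what the i.i.d.\ structure buys you and why the adaptive setting requires the heavier machinery.
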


Unfortunately, computing exact RLS requires storing $\kermatrix_n$ and this is seldom possible in practice. In the next section, we introduce \sequentialalg, an RLS-based incremental algorithm able to preserve the same accuracy of Prop.~\ref{prop:gamma.approx} \emph{without} requiring to know the RLS in advance. We prove that it generates a dictionary only a constant factor larger than exact RLS sampling.

 \vfil
\vspace{-0.1in}
\section{Sequential RLS Sampling}\label{sec:sequential-alg}
\vspace{-0.1in}

\begin{algorithm}[t!]
    \begin{algorithmic}[1]
        \renewcommand\algorithmiccomment[1]{        \hfill\(\triangleright\){#1}}
        \renewcommand{\algorithmicrequire}{\textbf{Input:}}
        \renewcommand{\algorithmicensure}{\textbf{Output:}}
        \REQUIRE Dataset {$\dataset$}, parameters $\gamma, \varepsilon, \delta$
        \ENSURE $\coldict_n$
        \STATE Initialize $\mathcal{I}_0$ as empty, $\wb{q}$ (see Thm.~\ref{thm:sequential-alg-main})
        \FOR{$t = 1,\dots,n$}
            \STATE Read point $\:x_t$ from $\dataset$
            \STATE $\wb{\coldict} = \coldict_{t-1} \cup \{(t,\wt{p}_{t-1,t}\!=\!1,q_{t-1,t}\!=\!\wb{q})\}$ \COMMENT{\expandop}
            \STATE $\coldict_{t} = \updatedictop(\wb{\coldict})$ using Eq.~\ref{eq:rls-estimator}
        \ENDFOR
    \end{algorithmic}
       \caption{The \sequentialalg algorithm}
    \label{alg:sequentialalg}
\end{algorithm}
\begin{Subroutine}[t!]
       \begin{algorithmic}[1]
        \renewcommand\algorithmiccomment[1]{        \hfill \(\triangleright\){#1}}
        \renewcommand{\algorithmicrequire}{\textbf{Input:}}
        \renewcommand{\algorithmicensure}{\textbf{Output:}}
        \REQUIRE $\wb{\coldict}$
        \ENSURE $\coldict_{t}$
        \STATE Initialize $\coldict_{t} = \emptyset$
        \FORALL[\shrinkop]{$i \in \{1,\dots,t\}$}
        \IF{$q_{t-1,i} \neq 0$}
        \STATE Compute $\atau_{t,i}$ using $\wb{\coldict}$
        \STATE Set $\wt{p}_{t,i} = \min\{\atau_{t,i}, \wt{p}_{t-1,i}\}$
        \STATE Set $q_{t,i} \sim \mathcal{B}(\wt{p}_{t,i}/\wt{p}_{t-1,i}, q_{t-1,i})$
        \ELSE
        \STATE $\wt{p}_{t,i} = \wt{p}_{t-1,i}$ and $q_{t,i} = q_{t-1,i}$
        \ENDIF
        \ENDFOR
    \end{algorithmic}
    \caption{The \updatedictop algorithm}
    \label{alg:updatedictop}
\end{Subroutine}

In the previous section, we showed that sampling proportionally to the RLS
$\{\tau_{t,i}\}$ leads to a dictionary such that $\| \:P_t - \wt{\:P}_t \| \leq
\varepsilon$. Furthermore, since the RLS correspond to the diagonal entries of
$\:P_t$, an accurate approximation $\wt{\:P}_t$ may be used in turn to compute
accurate estimates of $\tau_{t,i}$. The  \sequentialalg algorithm
(Alg.~\ref{alg:sequentialalg}) builds on this intuition to sequentially process
the kernel matrix $\:K_n$ so that exact RLS computed on a small matrix ($\:K_t$
with $t\ll n$) are used to create an $\varepsilon$-accurate dictionary, which
is then used to estimate the RLS for bigger kernels, which are in turn used to
update the dictionary and so on. 
While \sequentialalg shares a similar structure with
\inkestimate~\citep{calandriello2016analysis}, the sampling probabilities are
computed from different estimates of the RLS $\tau_{t,i}$ and no renormalization by
an estimate of $\pdeff_t$ is needed. Before giving the details of the
algorithm, we redefine a dictionary as a collection $\coldict =
\{(i,\wt{p}_i, q_i)\}_i$, where $i$ is the index of the point $\:x_i$ stored
in the dictionary, $\wt{p}_i$ tracks the probability used to sample it, and $q_i$ is the
number of copies (multiplicity) of $i$. The weights are then computed as
$w_i~=~q_i/(\wb{q}\wt{p}_i)$, where $\wb{q}$ is an algorithmic parameter discussed
later. We use $\wt{p}_i$ to stress the fact that these probabilities will be
computed as approximations of the actual probabilities that should be used to
sample each point, i.e., their RLS $\tau_i$.

\sequentialalg receives as input a dataset $\dataset=\{\:x_t\}_{t=1}^n$ and processes it \emph{sequentially}. Starting with an empty dictionary $\coldict_0$, at each time step $t$, \sequentialalg receives a new point $\:x_t$.
Adding a new point $\:x_t$ to the kernel matrix can either decrease
the importance of points observed before (i.e., if they are correlated with the new point) or leave it unchanged (i.e., if their corresponding kernel columns are orthogonal) and thus for any $i\leq t$, the RLS evolves as follows.

\begin{lemma}\label{lem:monotone-decrease-prob}
    For any kernel matrix $\kermatrix_{t-1}$ at time $t-1$ and its extension
    $\kermatrix_{t}$ at time $t$, we have that the RLS
    are monotonically decreasing and the effective dimension is monotonically increasing,
\begin{align*}
\frac{1}{\tau_{t-1,i} + 1}\tau_{t-1,i} \leq \tau_{t,i} \leq \tau_{t-1,i},&&
\deff{\gamma}_{t} \geq \deff{\gamma}_{t-1}.
\end{align*}
\end{lemma}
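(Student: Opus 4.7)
The plan is to recast the RLS in the feature space so that the update from $t-1$ to $t$ becomes a rank-one perturbation. Using the push-through identity $\featkermatrix^\transp(\featkermatrix\featkermatrix^\transp+\gamma\:I)^{-1}\featkermatrix = \featkermatrix^\transp\featkermatrix(\featkermatrix^\transp\featkermatrix+\gamma\:I)^{-1}$, one rewrites $\tau_{s,i} = \bphi_i^\transp \:A_s^{-1}\bphi_i$ with $\:A_s := \featkermatrix_s\featkermatrix_s^\transp + \gamma\:I$, so adding point $\:x_t$ gives the rank-one update $\:A_t = \:A_{t-1} + \bphi_t\bphi_t^\transp$. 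The upper bound $\tau_{t,i} \leq \tau_{t-1,i}$ then follows at once from $\:A_t \succeq \:A_{t-1}$, the operator anti-monotonicity of inversion ($\:A_t^{-1} \preceq \:A_{t-1}^{-1}$), and the quadratic form with $\bphi_i$.

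For the lower bound, apply the Sherman--Morrison identity to $\:A_t^{-1}$ to obtain the exact expression
\[
\tau_{t,i} \;=\; \tau_{t-1,i} - \frac{r^2}{1+\alpha}, \qquad r := \bphi_i^\transp\:A_{t-1}^{-1}\bphi_t, \quad \alpha := \bphi_t^\transp\:A_{t-1}^{-1}\bphi_t.
\]
Cauchy--Schwarz in the PSD inner product induced by $\:A_{t-1}^{-1}$ bounds $r^2 \leq \tau_{t-1,i}\,\alpha$, from which substitution yields $\tau_{t,i} \geq \tau_{t-1,i}/(1+\alpha)$. To finish, one must verify the equivalent scalar inequality $r^2(1+\tau_{t-1,i}) \leq \tau_{t-1,i}^2(1+\alpha)$. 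The intuition guiding this step is that the worst case is a duplicate point $\bphi_t = \bphi_i$, where Cauchy--Schwarz is tight and $\alpha = \tau_{t-1,i}$, collapsing the bound to the claimed equality $\tau_{t,i} = \tau_{t-1,i}/(1+\tau_{t-1,i})$; for any other $\bphi_t$ the RLS decreases less. I expect the careful manipulation of this scalar inequality, leveraging the normalization of the feature vectors, to be the main technical obstacle.

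For the effective dimension, use the feature-space trace representation $\deff{\gamma}_s = \Tr(\:B_s(\:B_s+\gamma\:I)^{-1})$ with $\:B_s := \featkermatrix_s\featkermatrix_s^\transp$, obtained again via the push-through identity applied inside the trace. Since $\:B_t = \:B_{t-1} + \bphi_t\bphi_t^\transp \succeq \:B_{t-1}$ and the map $\:X \mapsto \:X(\:X+\gamma\:I)^{-1} = \:I - \gamma(\:X+\gamma\:I)^{-1}$ is operator monotone on PSD operators (as a composition of $-\gamma$ times the operator anti-monotone inverse, shifted by $\:I$), taking trace preserves the ordering and yields $\deff{\gamma}_t \geq \deff{\gamma}_{t-1}$ immediately.
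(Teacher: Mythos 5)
Your route is essentially the paper's route: you push to feature space, use Sherman--Morrison to write $\tau_{t,i}=\tau_{t-1,i}-r^2/(1+\alpha)$ with $r=\bphi_i^\transp\:A_{t-1}^{-1}\bphi_t$ and $\alpha=\bphi_t^\transp\:A_{t-1}^{-1}\bphi_t$, and bound $r^2\le\tau_{t-1,i}\,\alpha$ by Cauchy--Schwarz in the $\:A_{t-1}^{-1}$-inner-product. The paper does exactly the same thing, writing the Cauchy--Schwarz step as $\cos^2(\:y,\:x)\le 1$ with $\:y=\:A_{t-1}^{-1/2}\bphi_t$ and $\:x=\:A_{t-1}^{-1/2}\bphi_i$. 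Your treatment of the upper bound $\tau_{t,i}\le\tau_{t-1,i}$ via $\:A_t\succeq\:A_{t-1}$ and operator anti-monotonicity of the inverse, and of $\deff{\gamma}_t\ge\deff{\gamma}_{t-1}$ via the trace of the operator-monotone map $\:X\mapsto\:I-\gamma(\:X+\gamma\:I)^{-1}$, are clean self-contained arguments for facts the paper simply imports from \citet[Lem.~1]{calandriello2016analysis}; these pieces are correct and slightly more transparent than the paper's citation.

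However, your proof of the lower bound is incomplete at exactly the step you flag. Cauchy--Schwarz yields $\tau_{t,i}\ge\tau_{t-1,i}/(1+\alpha)$, whereas the claim is $\tau_{t,i}\ge\tau_{t-1,i}/(1+\tau_{t-1,i})$, and the scalar inequality you write down, $r^2(1+\tau_{t-1,i})\le\tau_{t-1,i}^2(1+\alpha)$, is not implied by $r^2\le\tau_{t-1,i}\,\alpha$ alone: after substitution it reduces to $\alpha\le\tau_{t-1,i}$, which need not hold, since $\alpha=\bphi_t^\transp\:A_{t-1}^{-1}\bphi_t$ can be arbitrarily large when $\|\bphi_t\|$ is large. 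You correctly intuit that the inequality collapses to equality at $\bphi_t=\bphi_i$, but you never establish that this is the global worst case; absent an additional constraint tying $\|\bphi_t\|$ to $\|\bphi_i\|$ (e.g., a normalized kernel with $k(\:x,\:x)$ constant, so that Cauchy--Schwarz being tight forces $\:y=\pm\:x$ and hence $\alpha=\tau_{t-1,i}$), the chain is broken. For what it is worth, the paper's own justification of this exact step --- asserting that $\frac{\|\:y\|^2}{1+\|\:y\|^2}\cos^2(\:y,\:x)$ is ``maximized when $\:y=\:x$'' --- is equally informal: that claim also requires $\|\:y\|\le\|\:x\|$, which is not argued. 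So you have faithfully reproduced the paper's argument, including its soft spot, and have been more candid about where it is. But as a standalone proof, you must supply the missing scalar step (via a normalization assumption or a different bound such as the halving bound $\tau_{t,i}\ge\tau_{t-1,i}/2$ when $\alpha\le 1$) before the lower bound can be considered proved.
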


The previous lemma also shows that the RLS cannot decrease too quickly  and since $\tau_{t-1,i}\leq 1$, they can at most halve when $\tau_{t-1,i}=1$.
After receiving the new point $\:x_t$, we need to update our dictionary $\coldict_{t-1}$
to reflect the changes of the $\tau_{t,i}$. We proceed in two phases.
During the \expandop phase, we directly add the new element
$\:x_{t}$ to $\coldict_{t-1}$ and obtain a
temporary dictionary $\wb{\coldict}$, where the new element $t$ is added with a
sampling probability $\wt{p}_{t-1,t}=1$ and a number of
copies $q_{t-1,t} = \wb{q}$, i.e., $\wb{\coldict} = \coldict_{t-1} \cup
\{(t,\wt{p}_{t-1,t} = 1,q_{t-1,t} = \wb{q})\}$. This increases our memory
usage, forcing us to update the dictionary using \updatedictop,
in order to decrease its size.
Given as input $\wb{\coldict}$, we use the following estimator to compute
the approximate RLS $\atau_{t,i}$,
    \begin{align}\label{eq:rls-estimator}
        \atau_{t,i} &= (1-\varepsilon)\bphi_i^\transp(\featkermatrix_t\wb{\:S}\wb{\:S}^\transp\featkermatrix_t^\transp + \gamma\:I)^{-1}\bphi_i\nonumber\\
        & =\tfrac{1-\varepsilon}{\gamma}(k_{i,i} - \:k_{t,i}^\transp\wb{\selmatrix}(\wb{\selmatrix}^\transp\kermatrix_t\wb{\selmatrix} + \gamma\:I_t)^{-1}\wb{\selmatrix}^\transp\:k_{t,i}),
    \end{align}
where $\varepsilon$ is the accuracy parameter, $\gamma$ is the regularization
and $\wb{\selmatrix}$ is the selection matrix associated to
$\wb{\coldict}$. This estimator follows naturally from a reformulation of the RLS.
In particular, if we consider $\bphi_i$, the RKHS representation of $\:x_i$,
the RLS $\tau_{t,i}$ can be formulated as $\tau_{t,i} =
\bphi_i^\transp(\featkermatrix_t\:I_t\featkermatrix_t^\transp +
\gamma\:I)^{-1}\bphi_i$, where we see that the importance of point $\:x_i$ is
quantified by how orthogonal (in the RKHS) it is w.r.t.\@ the other points.
Because we do not have access to all the columns
($\wb{\selmatrix}\wb{\selmatrix}^\transp \neq \:I_t$),
similarly to what \cite{cohen2016online} did for the special case $\bphi_i = \:x_i$,
we choose to use
$\atau_{t,i} \approx
\bphi^\transp(\featkermatrix_t\wb{\selmatrix}\wb{\selmatrix}^\transp\featkermatrix_t^\transp +
\gamma\:I)^{-1}\bphi_i$, and then we use the kernel trick
to derive a form that we can actually compute, resulting in Eq.~\ref{eq:rls-estimator}.
The approximate RLSs are then used to define the new sampling probabilities as
$\wt{p}_{t,i} = \min\{\atau_{t,i}, \wt{p}_{t-1,i}\}$.
For each element in~$\wb{\coldict}$, the \shrinkop step draws a sample from
the binomial $\mathcal{B}(\wt{p}_{t,i}/\wt{p}_{t-1,i}, q_{t-1,i})$, where the
minimum taken in the definition of $\wt{p}_{t,i}$ ensures that the binomial probability is well
defined (i.e., $\wt{p}_{t,i} \leq \wt{p}_{t-1,i}$). This resampling step
basically \textit{tracks} the changes in the RLS and constructs a new dictionary
$\coldict_{t}$, which is \textit{as if} it was created from scratch using all
the RLS up to time~$t$ (with high probability).
We 
see that the new element~$\:x_t$ is only added to
the dictionary with a large number of copies (from 0 to $\wb{q}$) if
its estimated relevance $\wt{p}_{t,t}$ is high,
and that over time elements originally in $\coldict_{t-1}$ are stochastically reduced to
reflect the reductions of the RLSs. The lower $\wt{p}_{t,i}$ w.r.t.\
$\wt{p}_{t-1,i}$, the lower the number of copies
$q_{t,i}$ w.r.t.\ $q_{t-1,i}$. If the probability $\wt{p}_{t,i}$
continues to decrease over time, then~$q_{t,i}$ may become zero, and the column
$i$ is completely dropped from the dictionary (by setting its weight to zero).
The approximate RLSs enjoy the following guarantees.

\begin{lemma}\label{lem:fast-rls}
    Given an $\varepsilon$-approximate dictionary $\coldict_{t-1}$
    of matrix $\kermatrix_{t-1}$, construct $\wb{\coldict}$ by adding element
    $(t, 1, \wb{q})$ to it, and compute the selection matrix $\wb{\selmatrix}$.
    Then for all $i$ in $\wb{\coldict}$ such that $q_{t-1,i} \neq 0$, the estimator in Eq.~\ref{eq:rls-estimator} is $\alpha$-accurate, i.e., it satisfies
                    $\tau_{t,i}/\alpha \leq \atau_{t,i} \leq \tau_{t,i}$,
        with $\alpha~=~(1+\varepsilon)/(1-\varepsilon)$. Moreover, given RLS $\tau_{t-1,i}$ and $\tau_{t,i}$, and two $\alpha$-accurate RLSs, $\atau_{t-1,i}$ and $\atau_{t,i}$,
    the quantity
    $\min\left\{\atau_{t,i},\; \atau_{t-1,i} \right\}$ is also an $\alpha$-accurate RLS.
\end{lemma}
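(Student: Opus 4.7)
The plan is to derive the estimator guarantee from a Loewner sandwich in feature space, then reduce the second (minimum-preserving) claim to monotonicity of the RLS.

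First, I would restate the $\varepsilon$-accuracy of $\coldict_{t-1}$ as a two-sided PSD bound. Conjugating $\|\:P_{t-1}-\wt{\:P}_{t-1}\|\leq\varepsilon$ from Def.~\ref{def:eps-acc-dict} by $(\kermatrix_{t-1}+\gamma\:I)^{1/2}$ and rearranging gives
\begin{align*}
(1-\varepsilon)(\kermatrix_{t-1}+\gamma\:I)\preceq \kermatrix_{t-1}^{1/2}\selmatrix_{t-1}\selmatrix_{t-1}^\transp\kermatrix_{t-1}^{1/2}+\gamma\:I\preceq (1+\varepsilon)(\kermatrix_{t-1}+\gamma\:I).
\end{align*}
Since $\featkermatrix_{t-1}^\transp\featkermatrix_{t-1}=\kermatrix_{t-1}$ and $\featkermatrix_{t-1}\featkermatrix_{t-1}^\transp$ share the same non-zero spectrum (and likewise for the dictionary-side operators), and since both regularized objects act as $\gamma\:I$ on $\Ran(\featkermatrix_{t-1})^{\perp}$, this lifts to the equivalent feature-side bound
\begin{align*}
(1-\varepsilon)(\featkermatrix_{t-1}\featkermatrix_{t-1}^\transp+\gamma\:I)\preceq \featkermatrix_{t-1}\selmatrix_{t-1}\selmatrix_{t-1}^\transp\featkermatrix_{t-1}^\transp+\gamma\:I\preceq (1+\varepsilon)(\featkermatrix_{t-1}\featkermatrix_{t-1}^\transp+\gamma\:I).
\end{align*}

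Next, since \expandop appends the new point with weight $w_t=1$, I have the rank-one updates $\featkermatrix_t\wb{\selmatrix}\wb{\selmatrix}^\transp\featkermatrix_t^\transp=\featkermatrix_{t-1}\selmatrix_{t-1}\selmatrix_{t-1}^\transp\featkermatrix_{t-1}^\transp+\bphi_t\bphi_t^\transp$ and $\featkermatrix_t\featkermatrix_t^\transp=\featkermatrix_{t-1}\featkermatrix_{t-1}^\transp+\bphi_t\bphi_t^\transp$. Adding $\bphi_t\bphi_t^\transp$ to each part of the sandwich, and using $(1-\varepsilon)\bphi_t\bphi_t^\transp\preceq\bphi_t\bphi_t^\transp\preceq (1+\varepsilon)\bphi_t\bphi_t^\transp$ to absorb it into the $(1\pm\varepsilon)$-coefficients, I obtain
\begin{align*}
(1-\varepsilon)(\featkermatrix_t\featkermatrix_t^\transp+\gamma\:I)\preceq \featkermatrix_t\wb{\selmatrix}\wb{\selmatrix}^\transp\featkermatrix_t^\transp+\gamma\:I\preceq (1+\varepsilon)(\featkermatrix_t\featkermatrix_t^\transp+\gamma\:I).
\end{align*}
Inverting reverses the ordering with reciprocal coefficients, and evaluating the resulting quadratic form at $\bphi_i$ (using the push-through identity $\tau_{t,i}=\bphi_i^\transp(\featkermatrix_t\featkermatrix_t^\transp+\gamma\:I)^{-1}\bphi_i$) produces $\tfrac{1-\varepsilon}{1+\varepsilon}\tau_{t,i}\leq\atau_{t,i}\leq\tau_{t,i}$, which is exactly the claimed $\alpha$-accuracy with $\alpha=(1+\varepsilon)/(1-\varepsilon)$.

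For the minimum-preserving claim, I would invoke Lemma~\ref{lem:monotone-decrease-prob} to get $\tau_{t,i}\leq\tau_{t-1,i}$ and split on which term realizes the minimum. If $\min\{\atau_{t-1,i},\atau_{t,i}\}=\atau_{t,i}$, both bounds are immediate from the $\alpha$-accuracy of $\atau_{t,i}$. Otherwise $\atau_{t-1,i}\leq\atau_{t,i}\leq\tau_{t,i}$ gives the upper bound while $\atau_{t-1,i}\geq\tau_{t-1,i}/\alpha\geq\tau_{t,i}/\alpha$ gives the lower bound, where the last step uses monotonicity. The main obstacle is the first step—lifting the operator-norm condition of Def.~\ref{def:eps-acc-dict} into a feature-side Loewner sandwich—since the rank-one update from \expandop and the two-case analysis then reduce to routine bookkeeping.
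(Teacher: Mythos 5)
Your proposal is correct and follows essentially the same approach as the paper: both derive a two-sided Loewner sandwich for $\featkermatrix_t\wb{\selmatrix}\wb{\selmatrix}^\transp\featkermatrix_t^\transp + \gamma\:I$ in feature space, invert, and evaluate the quadratic form at $\bphi_i$. The only structural difference is that the paper first proves the kernel-to-feature-space equivalence as a standalone lemma (Lem.~\ref{lem:concentration-equivalence}) and derives the bound for a general merge of $k$ $\varepsilon$-accurate dictionaries before specializing, whereas you handle the rank-one update from \expandop directly and argue Part 2 by case analysis instead of the paper's one-line min-preservation argument; both routes are sound.
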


This result is based on the property that whenever $\coldict_{t-1}$ is
$\varepsilon$-accurate for $\kermatrix_{t-1}$, the projection matrix $\:P_t$
can be approximated by $\wb{\:P}_t$ constructed using the temporary dictionary
$\wb{\coldict}$ and thus, the RLSs can be accurately estimated and used to
update $\coldict_{t-1}$ and obtain a new $\varepsilon$-accurate dictionary for
$\kermatrix_{t}$. Since $\wt{\tau}_{t,i}$ is used to sample the new dictionary
$\coldict_t$, we need each point to be sampled \textit{almost} as frequently as
with the true RLS $\tau_{t,i}$, which is guaranteed by the lower bound of
Lem.~\ref{lem:fast-rls}. Since RLSs are always smaller or equal than~1, this
could be trivially achieved by setting $\atau_{t,i}$ to~1. Nonetheless, this
would keep all columns in the dictionary. Consequently, we need to force the RLS estimate  to decrease as much as possible, so
that low probabilities allow reducing the space as much as possible.
This is obtained by the upper bound in Lem.~\ref{lem:fast-rls}, which
guarantees that the estimated RLS are always smaller than the exact RLS. As a
result, \shrinkop sequentially preserves the overall accuracy of the dictionary
and \textit{at the same time} keeps its size as small as possible, as shown
in the following theorem.

\begin{theorem}\label{thm:sequential-alg-main} Let $\varepsilon>0$ be the
accuracy parameter, $\gamma > 1$ the regularization, and $0 < \delta <1$ the probability of failure. Given an arbitrary dataset $\dataset$ in
input together with parameters $\varepsilon$, $\gamma$, and $\delta$, we run
\sequentialalg with
\begin{align*}
\wb{q} =  \frac{39\alpha\log\left(2n/\delta\right)}{\varepsilon^{2}}\CommaBin
\end{align*}
where  $\alpha = (1+\varepsilon)/(1-\varepsilon)$. Then, w.p.\@ at least $1-\delta$,
 \sequentialalg generates a sequence of random dictionaries $\{\coldict_t\}_{t=1}^n$ that are $\varepsilon$-accurate (Eq.~\ref{cond:concentr-condition}) w.r.t.\@ any of the intermediate kernels~$\kermatrix_t$, and the size of the dictionaries is bounded as $\max\limits_{t=1,\ldots,n} |\coldict_t| \leq 3\wb{q}\deff{\gamma}_n.$

As a consequence, on a successful run the overall complexity of \sequentialalg is bounded as
\begin{align*}
 \text{space complexity} &=  \Big(\max_{t=1,\ldots,n} |\coldict_t|\Big)^2 \leq \left(3\wb{q}\deff{\gamma}_n\right)^2,\\
 \text{time complexity} &=\bigotime\left(n \deff{\gamma}_n^3\wb{q}^3\right).
\end{align*}
\end{theorem}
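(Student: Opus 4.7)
I would prove the theorem by induction on $t$, simultaneously establishing (a) the $\varepsilon$-accuracy of $\coldict_t$ and (b) the size bound $|\coldict_t|\leq 3\wb{q}\deff{\gamma}_n$, and then obtaining the global statement by a union bound over the $n$ time steps, each absorbing failure probability of order $\delta/n$. The base case $t=0$ is trivial, as the empty dictionary is $\varepsilon$-accurate for the empty kernel. For the inductive step, assume $\coldict_{t-1}$ is $\varepsilon$-accurate for $\kermatrix_{t-1}$. By Lemma~\ref{lem:fast-rls}, the estimators $\atau_{t,i}$ computed from $\wb{\coldict}$ are $\alpha$-accurate, and the same holds for the running minimum $\wt{p}_{t,i}=\min\{\atau_{t,i},\wt{p}_{t-1,i}\}$ (via the second part of the same lemma used along the induction).

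The key observation that lets me pass from accuracy of $\wt{p}_{t,i}$ to accuracy of $\coldict_t$ is a compositional property of binomial resampling: if $X\sim\mathcal{B}(p,N)$ and $Y\mid X\sim\mathcal{B}(r,X)$, then $Y\sim\mathcal{B}(rp,N)$. Unrolling the chain of \shrinkop updates from the moment column $i$ entered the dictionary yields $q_{t,i}\mid\mathcal{F}_t\sim\mathcal{B}(\wt{p}_{t,i},\wb{q})$, where $\mathcal{F}_t$ is the $\sigma$-algebra generated by all probability computations up to time~$t$. Equivalently, one can realize the process by $\wb{q}$ i.i.d.\ uniform variables $U_{i,j}\in[0,1]$ drawn when column $i$ was born, declaring copy $j$ alive at time $t$ iff $U_{i,j}\leq \wt{p}_{t,i}$. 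This monotone coupling decouples the ``fresh'' sampling randomness from the adaptive choice of the probabilities and is what makes a martingale treatment admissible.

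Under this representation I would write $\wt{\:P}_t-\:P_t$ as a sum of bounded mean-zero matrices of the form $\:X_{t,i,j}=\tfrac{1}{\wb{q}}\big(\tfrac{\indfunc\{U_{i,j}\leq\wt{p}_{t,i}\}}{\wt{p}_{t,i}}-1\big)\:M_{t,i}$, with $\:M_{t,i}$ the rank-one ridge-leveraged outer product $\:V_t^{-1/2}\bphi_i\bphi_i^\transp\:V_t^{-1/2}$ (where $\:V_t=\featkermatrix_t\featkermatrix_t^\transp+\gamma\:I$) whose sum across $i$ reproduces the projector in RKHS form. The lower bound $\wt{p}_{t,i}\geq \tau_{t,i}/\alpha$ controls each summand's operator norm by $\alpha/\wb{q}$ and the conditional matrix variance by at most $\alpha\,\deff{\gamma}_t/\wb{q}$, placing us in exactly the parameter regime of Prop.~\ref{prop:gamma.approx}. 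A matrix Freedman (adaptive matrix Bernstein) inequality then yields $\|\wt{\:P}_t-\:P_t\|\leq\varepsilon$ with probability at least $1-\delta/(2n)$ as soon as $\wb{q}\geq 39\alpha\log(2n/\delta)/\varepsilon^2$, which matches the stated choice. The size bound is delivered by a scalar Bernstein on $|\coldict_t|=\sum_i q_{t,i}$ around its mean $\wb{q}\sum_i\wt{p}_{t,i}\leq \wb{q}\deff{\gamma}_t\leq \wb{q}\deff{\gamma}_n$, where the inequalities use Lem.~\ref{lem:monotone-decrease-prob}; the log factor already present in $\wb{q}$ supplies the multiplicative constant~$3$.

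The main obstacle, which the sketch above only hints at, is the rigorous handling of the adaptive dependence: the probabilities $\wt{p}_{t,i}$ are themselves measurable w.r.t.\ all previous randomness, so a vanilla i.i.d.\ matrix Bernstein does not apply. The binomial-coupling representation is precisely what rescues the argument, because after conditioning on $\mathcal{F}_t$ the indicators $\indfunc\{U_{i,j}\leq\wt{p}_{t,i}\}$ are conditionally independent with the right means, and the martingale version of Bernstein can be invoked; this is the ``new martingale argument'' mentioned in the introduction. Once accuracy and size bounds hold simultaneously for every $t$ by a single union bound, the complexity claims are immediate: each iteration inverts an at most $|\coldict_t|\times|\coldict_t|$ kernel submatrix to evaluate Eq.~\ref{eq:rls-estimator} for each surviving column, costing $\bigotime(|\coldict_t|^3)$ per iteration, which multiplied by $n$ and the size bound gives the stated $\bigotime(n\deff{\gamma}_n^3\wb{q}^3)$ time and $(3\wb{q}\deff{\gamma}_n)^2$ space.
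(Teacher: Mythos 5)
Your sketch has the right scaffolding (per-copy uniform coupling, matrix Freedman, union bound over time) but leaves the two hardest steps of the paper asserted rather than proved, and one assertion, if taken at face value, would sink the claimed dictionary size. \emph{Variance bound.} You claim the conditional matrix variance is at most $\alpha\deff{\gamma}_t/\wb{q}$ and that this places you "in exactly the parameter regime" of Prop.~\ref{prop:gamma.approx}. Feeding $\sigma^2 = \alpha\deff{\gamma}_t/\wb{q}$ into Freedman and demanding the exponent beat $\log(n/\delta)$ forces $\wb{q}\gtrsim\alpha\deff{\gamma}_t\log(n/\delta)/\varepsilon^2$, i.e.\ a dictionary of size $\Theta(\deff{\gamma}_n^2\log n/\varepsilon^2)$ --- a factor $\deff{\gamma}_n$ larger than the theorem states and inconsistent with the $\wb{q}$ you wrote down. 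What is actually needed is $\normsmall{\:W_h}\lesssim\alpha/\wb{q}$, and the crude inequality $\wt{p}_{t,i}\geq\tau_{t,i}/\alpha$ does not yield it: it only gives $\:W_h\preceq(\alpha^2/\wb{q})\sum_i\tau_{h,i}^{-1}\concvec_i\concvec_i^\transp$, whose operator norm can be far larger than $1/\wb{q}$. The paper's Lem.~\ref{lem:prob-dominance} resolves this by rewriting $\:W_h$ in terms of the per-copy quantity $\max_s\wb{z}_{s,i,j}/\wb{p}_{s,i,j}^2$, proving it is stochastically dominated by an i.i.d.\ variable $1/w_{0,i,j}^2$ whose mean is of order $\alpha/\tau_{h,i}$ (not $\alpha^2/\tau_{h,i}^2$), and applying a matrix Chernoff bound to the dominating i.i.d.\ sum. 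Skipping this is skipping the heart of the argument.

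\emph{Freezing.} Your induction conditions on $\coldict_{t-1}$ being $\varepsilon$-accurate, but once an intermediate dictionary fails, the estimates $\atau_{s,i}$ and hence the martingale increments can be arbitrarily wrong for all subsequent $s$, so per-step failure probabilities cannot simply be summed. The paper replaces $\:Y_s$ with a frozen process $\wb{\:Y}_s$ that stops at the first step where the error exceeds $\varepsilon$, proves $\probability(\normsmall{\:Y_h}\geq\varepsilon)\leq\probability(\normsmall{\wb{\:Y}_h}\geq\varepsilon)$, and establishes $\wb{p}_{s,i,j}\geq\tau_{h,i}/\alpha$ uniformly for the frozen probabilities (Lem.~\ref{lem:wbp-always-good}); this is what makes the Freedman argument legitimate. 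Your proposal never addresses this, and without it the union bound does not close. A smaller but real flaw: the composition $X\sim\mathcal{B}(p,N)$, $Y\mid X\sim\mathcal{B}(r,X)\Rightarrow Y\sim\mathcal{B}(rp,N)$ requires $r$ deterministic, while here $r=\wt{p}_{t,i}/\wt{p}_{t-1,i}$ is computed from $\coldict_{t-1}$, which includes $q_{t-1,i}$ itself; and your difference terms $\:X_{t,i,j}$ (time-$t$ deviation indexed over copies) are not conditionally mean-zero given $\mathcal{F}_{t-1}$, unlike the paper's $\wb{\:X}_{\{s,i,j\}}$, which track the increment between consecutive steps.
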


We show later that Thm.~\ref{thm:sequential-alg-main} is special case of Thm.~\ref{thm:parallel-alg-main}
and give a sketch of the proof with the statement of Thm.~\ref{thm:parallel-alg-main}.
We postpone the discussion about this result and the comparison with previous
results to Sect.~\ref{sec:conclusions} and focus now on the space
and time complexity. Note that while the dictionaries $\coldict_t$ always
contain $t$ elements for notational convenience, \shrinkop actually \emph{never} 
updates the probabilities of the elements with $q_{t-1,i} = 0$. This feature is
particularly important, since at any step~$t$, it only requires to compute
approximate RLSs for the elements which are actually included in
$\coldict_{t-1}$ and the new point $\:x_t$ (i.e., the elements in
$\wb{\coldict}$) and thus it does not require recomputing the RLSs of points
$\:x_s$ ($s < t$) that have been dropped before!
This is why \sequentialalg  computes an $\varepsilon$-accurate dictionary
with a\emph{ single pass over the dataset}. Furthermore, the estimator in
Eq.~\ref{eq:rls-estimator} does not require computing the whole kernel column
$\:k_{t,i}$ of dimension $t$. In fact, the components of $\:k_{t,i}$,
corresponding to points which are no longer in $\wb{\coldict}$, are directly set
to zero when computing $\:k_{t,i}^\transp \wb{\selmatrix}$. As a result, for any new
point $\:x_t$ we need to evaluate $\kerfunc(\:x_s,\:x_t)$ only for the indices
$s$ in $\wb{\coldict}$.
Therefore, \sequentialalg never performs more than $n(3\wb{q}\deff{\gamma}_n)^2$
kernel evaluations, which means that it does not even need to observe
large portions of the kernel matrix.
Finally, the runtime is dominated by the $n$ matrix inversions used in Eq.~\ref{eq:rls-estimator}.
Therefore, the total runtime is $\bigotime(n\big(\max_{t=1,\ldots,n} |\coldict_t|\big)^3)
= \bigotime(n \deff{\gamma}_n^3\wb{q}^3)$. In the next section, we introduce
\parallelalg, which improves the runtime by independently constructing separate dictionaries
in parallel and then merging them recursively to construct a final $\varepsilon$-accurate
dictionary.

\vspace{-0.1in}
\section{Distributed RLS Sampling}\label{sec:parallel-alg}
\vspace{-0.1in}

\begin{algorithm}[t!]
        \begin{algorithmic}[1]
        \renewcommand\algorithmiccomment[1]{        \hfill \(\triangleright\)\eqparbox{COMMENT}{#1}}
        \renewcommand{\algorithmicrequire}{\textbf{Input:}}
        \renewcommand{\algorithmicensure}{\textbf{Output:}}
        \REQUIRE Dataset $\dataset$, parameters $\gamma, \varepsilon, \delta$
        \ENSURE $\coldict_{\dataset}$
        \STATE Partition $\dataset$ into disjoint sub-datasets $\dataset_i$
        \STATE Initialize $\coldict_{\dataset_i} = \{(j,\wt{p}_{0,i} = 1, q_{0,i} = \wb{q}) : j \in \dataset_i\}$
        \STATE Build set $\dictpool_{1} = \{\coldict_{\dataset_i}\}_{i=1}^k$
        \FOR{$h = 1,\dots,k-1$}
            \IF[\mergedictop]{$|\dictpool_h| > 1$}
                \STATE Pick two dictionaries $\coldict_{\dataset}, \coldict_{\dataset'}$ from $\dictpool_h$
                \STATE $\wb{\coldict} = \coldict_{\dataset} \cup \coldict_{\dataset'}$
                \STATE $\coldict_{\dataset,\dataset'} = \updatedictop(\wb{\coldict})$ using Eq.~\ref{eq:rls-estimator-merge}
                \STATE Place $\coldict_{\dataset,\dataset'}$ back into $\dictpool_{h+1}$
            \ELSE
                \STATE $\dictpool_{h+1} = \dictpool_h$
            \ENDIF
        \ENDFOR
        \STATE Return $\coldict_{\dataset}$, the last dictionary in $\dictpool_{k}$
    \end{algorithmic}
    \caption{The distributed \sequentialalg algorithm}
    \label{alg:distributedalg}
\end{algorithm}

In this section, we show that a minor change in the structure of \sequentialalg
allows us to parallelize and distribute the computation of the dictionary
$\coldict_n$ over multiple machines, thus reducing even further its time complexity. Beside the computational advantage, a distributed architecture is needed as soon as the input dimension $d$ and the number of points $n$ is so large that having the dataset on a single machine is impossible.
Furthermore, distributed processing can reduce contention \todom{i guess this is some distributed lingo?} on bottleneck data sources
such as databases or network connections.
DIstributed-\sequentialalg (\parallelalg, Alg.~\ref{alg:distributedalg}) partitions $\dataset$ over multiple machines and
the (small) dictionaries that are generated from different portions of the
dataset are integrated in a hierarchical way. The initial dataset is
partitioned over $k$ disjoint sub-datasets $\dataset_i$ with $i=1,\ldots,k$ and $k$
dictionaries $\coldict_{\dataset_i} = \{(j,\wt{p}_{0,i} = 1, q_{0,i} = \wb{q}) : j \in \dataset_i\}$
are initialized simply by placing all samples in $\dataset_i$ into $\coldict$
with weight 1 and multiplicity $\wb{q}$. Alternatively, if the datasets
$\dataset_i$ are too large to fit in memory, we can run \sequentialalg
to generate the initial dictionaries.
The dictionaries $\coldict_{\dataset_i}$ are added to a dictionary collection
$\dictpool_1$, and at each step $h \in [k]$ Alg.~\ref{alg:distributedalg}
arbitrarily chooses two dictionaries $\coldict_{\dataset}$ and $\coldict_{\dataset'}$
from $\dictpool_{h}$, and merges them.
\mergedictop first combines them into a single dictionary $\wb{\coldict}$ (the equivalent of
the \expandop phase in \sequentialalg) and then \updatedictop is run on the merged dictionaries
 to create an updated dictionary $\coldict_{\dataset\cup \dataset'}$, which
 is placed back in the dictionary collection $\dictpool$.
This sequence of merges can be represented using a binary merge tree,
as in Fig.~\ref{fig:arbitrary.merge-trees}.
Since \mergedictop only takes the two dictionaries as input and does not
require any information on the dictionaries in the rest of the tree,
separate branches can be run simultaneously on different machines, and only
the resulting (small) dictionary needs to be propagated to the parent node
for the future \mergedictop.
Unlike in \sequentialalg, \updatedictop is run on the union of two distinct
dictionaries rather than one dictionary and a new single point. As a result, we
need to derive the ``distributed'' counterparts of
Lemmas~\ref{lem:monotone-decrease-prob} and~\ref{lem:fast-rls} to analyze the
behavior of the RLSs and the quality of the estimator.
\begin{lemma}\label{lem:monotone-decrease-prob-merge}
    Given two disjoint datasets $\dataset, \dataset'$,
    for every $i \in \dataset \cup \dataset'$,
    $\tau_{i,\dataset} \geq \tau_{i,\dataset \cup \dataset'}$
    and
    \begin{align*}
        2\deff{ \gamma }_{\dataset \cup \dataset'} \geq \deff{ \gamma }_{\dataset} + \deff{ \gamma }_{\dataset'} \geq \deff{ \gamma }_{\dataset \cup \dataset'}.
    \end{align*}
\end{lemma}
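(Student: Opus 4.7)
The plan is to reduce both claims to elementary Loewner-order (PSD) manipulations on the feature-space covariance operators $\:M_\dataset \eqdef \featkermatrix_\dataset \featkermatrix_\dataset^\transp$. The structural observation enabling everything is that, because $\dataset$ and $\dataset'$ are disjoint, the merged feature covariance simply adds: $\:M_{\dataset \cup \dataset'} = \:M_\dataset + \:M_{\dataset'}$. I will also use the RKHS form of the RLS, $\tau_{i,\dataset} = \bphi_i^\transp(\:M_\dataset + \gamma\:I)^{-1}\bphi_i$, which is well-defined for any index $i$ (even $i \in \dataset'$) and extends the single-point analysis behind Lem.~\ref{lem:monotone-decrease-prob} to block updates.

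For the first claim I would argue that $\:M_{\dataset'} \succeq 0$ gives $\:M_\dataset + \gamma\:I \preceq \:M_{\dataset \cup \dataset'} + \gamma\:I$; since inversion reverses the Loewner order, evaluating the resulting PSD inequality at $\bphi_i$ yields $\tau_{i,\dataset} \geq \tau_{i,\dataset \cup \dataset'}$. Summing this per-index bound over all $i \in \dataset \cup \dataset'$, splitting the sum across $\dataset$ and $\dataset'$ using disjointness, immediately gives the right half of the effective-dimension sandwich: $\deff{\gamma}_{\dataset \cup \dataset'} \leq \deff{\gamma}_\dataset + \deff{\gamma}_{\dataset'}$.

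The only nontrivial part is the left inequality $\deff{\gamma}_\dataset + \deff{\gamma}_{\dataset'} \leq 2\deff{\gamma}_{\dataset \cup \dataset'}$. I would prove it by restricting all operators to the finite-dimensional span $V$ of the features of $\dataset \cup \dataset'$, where $\Tr(\:I_V) = \Dim(V) < \infty$, and using the identity $\:M(\:M+\gamma\:I)^{-1} = \:I - \gamma(\:M+\gamma\:I)^{-1}$ to rewrite each effective dimension as $\Dim(V) - \gamma\Tr((\:M_\star + \gamma\:I)^{-1})$. After the $\Dim(V)$ terms cancel, the target inequality becomes
\begin{align*}
\Tr\bigl((\:M_\dataset + \gamma\:I)^{-1}\bigr) + \Tr\bigl((\:M_{\dataset'} + \gamma\:I)^{-1}\bigr) \geq 2\,\Tr\bigl((\:M_{\dataset \cup \dataset'} + \gamma\:I)^{-1}\bigr),
\end{align*}
which follows from PSD monotonicity: since $\:M_\dataset, \:M_{\dataset'} \preceq \:M_{\dataset \cup \dataset'}$, each summand on the left dominates $(\:M_{\dataset \cup \dataset'} + \gamma\:I)^{-1}$ in Loewner order, so the sum dominates twice the right-hand operator and tracing preserves the inequality.

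The main obstacle is not computational but bookkeeping: in the infinite-dimensional RKHS one cannot literally take $\Tr(\:I)$, so one must be careful to restrict to the finite-rank subspace $V$ before invoking the $\:I - \gamma(\:M+\gamma\:I)^{-1}$ rewriting. An equivalent route that sidesteps feature space entirely is to use the kernel-matrix identity $\deff{\gamma}_\dataset = |\dataset| - \gamma\Tr((\:K_\dataset + \gamma\:I)^{-1})$, exploiting that $\:K_\dataset$ and $\:K_{\dataset'}$ appear as principal diagonal blocks of $\:K_{\dataset \cup \dataset'}$ and relating their inverses via a Schur-complement argument; the final PSD-monotonicity step is essentially identical.
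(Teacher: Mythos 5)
Your proof is correct, and all three pieces go through. For the per-index monotonicity $\tau_{i,\dataset}\geq\tau_{i,\dataset\cup\dataset'}$ you give a direct Loewner-order argument ($\:M_{\dataset'}\succeq 0$ plus operator-monotonicity of inversion), whereas the paper simply adds one point of $\dataset'$ at a time and invokes Lemma~\ref{lem:monotone-decrease-prob}; these are equivalent, and your one-shot block version is arguably cleaner. The right-hand inequality $\deff{\gamma}_{\dataset\cup\dataset'}\leq\deff{\gamma}_\dataset+\deff{\gamma}_{\dataset'}$ is obtained identically in both proofs by summing the per-index bound over $\dataset$ and $\dataset'$.

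The genuine divergence is the left-hand inequality $\deff{\gamma}_\dataset+\deff{\gamma}_{\dataset'}\leq 2\deff{\gamma}_{\dataset\cup\dataset'}$. The paper dispatches it in one line: $\deff{\gamma}_\dataset+\deff{\gamma}_{\dataset'}\leq 2\max\{\deff{\gamma}_\dataset,\deff{\gamma}_{\dataset'}\}\leq 2\deff{\gamma}_{\dataset\cup\dataset'}$, where the last step is just the monotonicity of the effective dimension already established in Lemma~\ref{lem:monotone-decrease-prob}. You instead restrict to the finite-dimensional span $V$ of the features, rewrite each $\deff{\gamma}_\star$ as $\Dim(V)-\gamma\Tr\bigl((\:M_\star+\gamma\:I)^{-1}|_V\bigr)$, and reduce to a trace inequality proved by PSD-monotonicity of the resolvent. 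Your route is sound (including the care about working on $V$ so that the trace of the identity is finite), but notably heavier machinery than what the statement needs: the paper's argument avoids any trace/identity rewriting and needs no finite-dimensional restriction because it stays at the level of scalars $\deff{\gamma}_\star$. Your approach does yield the slightly sharper resolvent-trace inequality as a byproduct, but for the purpose of this lemma the shorter $a+b\leq 2\max(a,b)$ argument would be preferable.
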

\vspace{-.5\baselineskip}
\tododout{need to reflect the fact that we don't need bound on RLS decrease anymore}
While in \sequentialalg we were merging an $\varepsilon$-accurate dictionary $\coldict_t$ and a new
point, which is equivalent to a perfect, $0$-accurate dictionary,
in \parallelalg both dictionaries used in a merge are only $\varepsilon$-accurate.
To balance this change, we introduce a new
estimator,
    \begin{align}\label{eq:rls-estimator-merge}
        &\atau_{\dataset \cup \dataset',i} = \tfrac{1-\varepsilon}{\gamma}(k_{i,i}- \:k_{i}^\transp\wb{\selmatrix}(\wb{\selmatrix}^\transp\kermatrix\wb{\selmatrix} + (1+\varepsilon)\gamma\:I)^{-1}\wb{\selmatrix}^\transp\:k_{i}),
    \end{align}

\vspace{-\baselineskip}
where $\wb{\selmatrix}$ is the selection matrix associated with the temporary dictionary $\wb{\coldict} = \coldict_{\dataset} \cup \coldict_{\dataset'}$.
Eq.~\ref{eq:rls-estimator-merge} has similar guarantees as Lem.~\ref{lem:fast-rls},
with only a slightly larger $\alpha$.

\begin{figure}[t]
\begin{center}
\vspace{-0.05in}
\includegraphics[height=0.95\columnwidth]{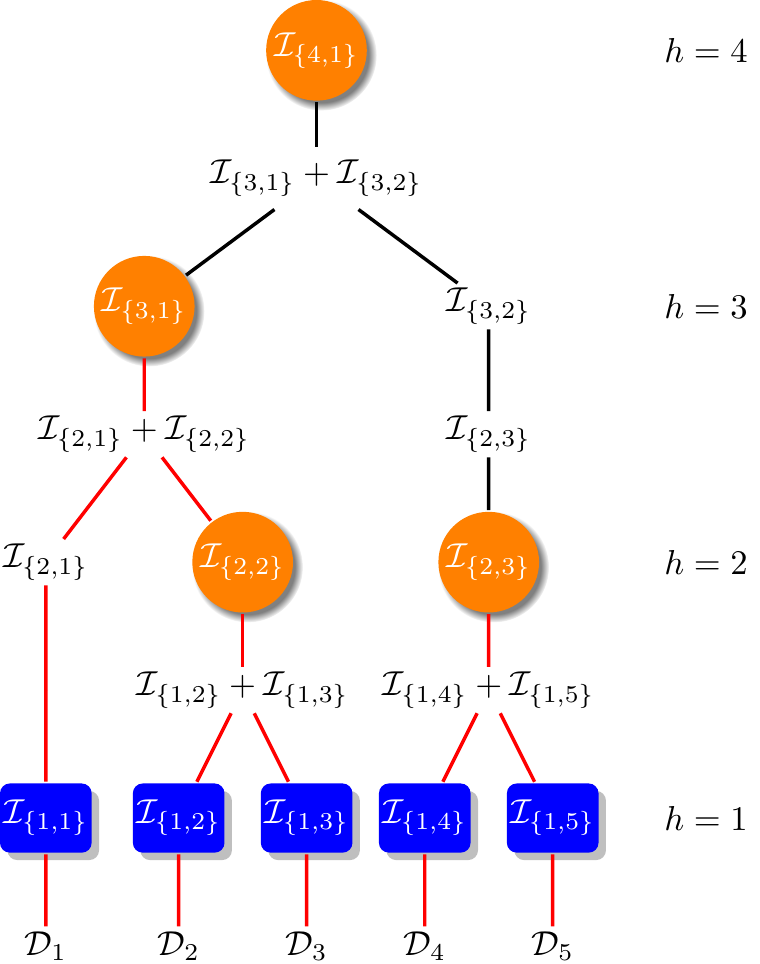}
\vspace{-0.15in}
\end{center}
\caption{Merge tree for Alg.~\ref{alg:distributedalg} with an arbitrary partitioning and merging scheme.}\label{fig:arbitrary.merge-trees}
\vspace{-0.15in}
\end{figure}
\begin{lemma}\label{lem:fast-rls-merge}
    Given two disjoint datasets $\dataset, \dataset'$,
    and two $\varepsilon$-approximate dictionaries
    $\coldict_{\dataset}$, $\coldict_{\dataset'}$,
    let $\wb{\coldict} = \coldict_{\dataset} \cup \coldict_{\dataset'}$
    and $\wb{\selmatrix}$ be the associated selection matrix.
    Let $\kermatrix$ be the kernel matrix computed on $\dataset \cup \dataset'$,
    $\:k_{i}$ its $i$-th column, and $\tau_{\dataset \cup \dataset',i}$
    the RLS of $\:k_i$.
    Then for all $i$ in $\wb{\coldict}$ such that $q_{i} \neq 0$, the estimator in Eq.~\ref{eq:rls-estimator-merge} is
    $\alpha$-accurate, i.e. it satisfies $\tau_{\dataset \cup \dataset',i}/\alpha \leq \atau_{\dataset \cup \dataset',i} \leq \tau_{\dataset \cup \dataset',i}$, with $\alpha~=~( 1-\varepsilon )/( 1+3\varepsilon ).$
                \end{lemma}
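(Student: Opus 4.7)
The strategy parallels the proof of Lem.~\ref{lem:fast-rls}, but with the twist that neither half of the merged dictionary is exact. The first move is to apply the matrix inversion lemma to Eq.~\ref{eq:rls-estimator-merge} in order to pass from the kernel-space expression to the equivalent feature-space form
\begin{align*}
\atau_{\dataset\cup\dataset',i} \;=\; (1-\varepsilon)\,\bphi_i^\transp\bigl(\featkermatrix\wb{\selmatrix}\wb{\selmatrix}^\transp\featkermatrix^\transp + (1+\varepsilon)\gamma\:I\bigr)^{-1}\bphi_i ,
\end{align*}
which lives in the same space as the true RLS $\tau_{\dataset\cup\dataset',i} = \bphi_i^\transp(\featkermatrix\featkermatrix^\transp + \gamma\:I)^{-1}\bphi_i$. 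The problem then reduces to a two-sided Loewner comparison of $\featkermatrix\wb{\selmatrix}\wb{\selmatrix}^\transp\featkermatrix^\transp + (1+\varepsilon)\gamma\:I$ with $\featkermatrix\featkermatrix^\transp + \gamma\:I$, after which sandwiching between $\bphi_i^\transp$ and $\bphi_i$ and multiplying by $(1-\varepsilon)$ delivers the claimed bounds.

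The next step is to translate the $\varepsilon$-accuracy condition $\|\:P_\dataset - \wt{\:P}_\dataset\| \leq \varepsilon$ (Def.~\ref{def:eps-acc-dict}) into its feature-space spectral form
\begin{align*}
(1-\varepsilon)\bigl(\featkermatrix_\dataset\featkermatrix_\dataset^\transp + \gamma\:I\bigr) \;\preceq\; \featkermatrix_\dataset\selmatrix_\dataset\selmatrix_\dataset^\transp\featkermatrix_\dataset^\transp + \gamma\:I \;\preceq\; (1+\varepsilon)\bigl(\featkermatrix_\dataset\featkermatrix_\dataset^\transp + \gamma\:I\bigr),
\end{align*}
and analogously for $\dataset'$. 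This equivalence is standard for ridge leverage approximations and follows by conjugating the definition by $(\featkermatrix_\dataset\featkermatrix_\dataset^\transp + \gamma\:I)^{1/2}$. Because the two datasets are disjoint the cross terms vanish, so $\featkermatrix\featkermatrix^\transp = \featkermatrix_\dataset\featkermatrix_\dataset^\transp + \featkermatrix_{\dataset'}\featkermatrix_{\dataset'}^\transp$ and $\featkermatrix\wb{\selmatrix}\wb{\selmatrix}^\transp\featkermatrix^\transp = \featkermatrix_\dataset\selmatrix_\dataset\selmatrix_\dataset^\transp\featkermatrix_\dataset^\transp + \featkermatrix_{\dataset'}\selmatrix_{\dataset'}\selmatrix_{\dataset'}^\transp\featkermatrix_{\dataset'}^\transp$, and I can add the two bounds coordinate-wise.

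\textbf{Main obstacle.} Summing the two lower (resp.~upper) bounds introduces an unwanted $2\gamma\:I$ on the left (resp.~a residual $2\varepsilon\gamma\:I$ on the right), since each sub-dictionary carries its own $\gamma\:I$ regularizer. This regularization mismatch is exactly what forces the extra $(1+\varepsilon)\gamma\:I$ term inside the inverse in Eq.~\ref{eq:rls-estimator-merge}. The plan for closing the argument is to rearrange the summed lower bound as $(1-\varepsilon)\featkermatrix\featkermatrix^\transp - 2\varepsilon\gamma\:I \preceq \featkermatrix\wb{\selmatrix}\wb{\selmatrix}^\transp\featkermatrix^\transp$, then add $(1+\varepsilon)\gamma\:I$ to both sides to obtain $(1-\varepsilon)(\featkermatrix\featkermatrix^\transp + \gamma\:I) \preceq \featkermatrix\wb{\selmatrix}\wb{\selmatrix}^\transp\featkermatrix^\transp + (1+\varepsilon)\gamma\:I$; and symmetrically to combine the upper bounds into $\featkermatrix\wb{\selmatrix}\wb{\selmatrix}^\transp\featkermatrix^\transp + (1+\varepsilon)\gamma\:I \preceq (1+\varepsilon)\featkermatrix\featkermatrix^\transp + (1+3\varepsilon)\gamma\:I \preceq (1+3\varepsilon)(\featkermatrix\featkermatrix^\transp + \gamma\:I)$. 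The precise choice of the $(1+\varepsilon)$ coefficient is what makes the lower-side residual cancel cleanly while keeping the upper-side blow-up at $(1+3\varepsilon)$, and this is where the constant $\alpha = (1+3\varepsilon)/(1-\varepsilon)$ comes from.

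Finally, inverting the PSD chain reverses the order and multiplying through by $(1-\varepsilon)$ gives
\begin{align*}
\tfrac{1-\varepsilon}{1+3\varepsilon}\bigl(\featkermatrix\featkermatrix^\transp + \gamma\:I\bigr)^{-1} \;\preceq\; (1-\varepsilon)\bigl(\featkermatrix\wb{\selmatrix}\wb{\selmatrix}^\transp\featkermatrix^\transp + (1+\varepsilon)\gamma\:I\bigr)^{-1} \;\preceq\; \bigl(\featkermatrix\featkermatrix^\transp + \gamma\:I\bigr)^{-1}.
\end{align*}
Sandwiching with $\bphi_i$ yields the desired two-sided bound on $\atau_{\dataset\cup\dataset',i}$; the condition $q_i \neq 0$ is only used to guarantee that $\bphi_i$ appears among the columns on which $\wb{\selmatrix}$ is defined, so the Woodbury manipulation in the first step is well-posed.
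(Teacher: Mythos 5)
Your proof is correct and follows essentially the same approach as the paper: translate the kernel-space estimator to feature space via Woodbury, convert each dictionary's $\varepsilon$-accuracy into a two-sided Loewner bound, sum across the disjoint datasets (no cross terms), and observe that the $(1+\varepsilon)\gamma\:I$ regularizer makes the lower-side residual cancel exactly while inflating the upper side to $(1+3\varepsilon)$, yielding $\alpha = (1+3\varepsilon)/(1-\varepsilon)$ after inverting and sandwiching. The paper does exactly this computation for a general number $k$ of merged dictionaries and then instantiates $k=2$; also note the lemma statement in the paper has $\alpha$ inverted, and your value $(1+3\varepsilon)/(1-\varepsilon)$ matches the paper's actual derivation and Theorem~\ref{thm:parallel-alg-main}.
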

\vspace{-.5\baselineskip}
Given these guarantees, the analysis of \parallelalg follows similar steps as \sequentialalg. Given
$\varepsilon$-accurate dictionaries, we obtain $\alpha$-accurate RLS estimates
$\atau_{\dataset \cup \dataset',i}$ that can be used to resample all points in
$\wb{\coldict}$ and generate a new dictionary $\coldict_{\dataset,\dataset'}$
that is $\varepsilon$-accurate.
To formalize a result equivalent to Thm.~\ref{thm:sequential-alg-main}, we  introduce additional notation: We index each node in the merge tree by its height
$h$ and position $l$. We denote  the dictionary associated
to node $\{h,l\}$ by $\coldict_{\{h,l\}}$ and the collection of
all dictionaries available at height~$h$ of the merge tree by $\dictpool_{h} = \{\coldict_{\{h,l\}}\}$. We also use
$\kermatrix_{\{h,l\}}$ to refer to the kernel matrix constructed from the
datasets $\dataset_{\{h,l\}}$, which contains all points present in the leaves reachable from node $\{h,l\}$.
For instance in
Fig.~\ref{fig:arbitrary.merge-trees}, node $\{3,1\}$ is associated with $\coldict_{\{3,1\}}$,
which is an $\varepsilon$-approximate dictionary of the kernel matrix $\kermatrix_{\{3,1\}}$
constructed from the dataset $\dataset_{\{3,1\}}$. $\dataset_{\{3,1\}}$ contains $\dataset_1$,
$\dataset_2$, $\dataset_3$ (descendent nodes are highlighted in red) and it has dimension
$(|\dataset_1|\!+\!|\dataset_2|\!+\!|\dataset_3|)$.
Theorem~\ref{thm:parallel-alg-main} summarizes the guarantees for \parallelalg.

\begin{theorem}\label{thm:parallel-alg-main} 
Let $\varepsilon>0$ be the
accuracy parameter, $\gamma > 1$ the regularization factor, and $0 < \delta
< 1$ the prob.~of failure. Given an arbitrary dataset $\dataset$ and a merge tree structure of height $k$ as
input together with parameters $\varepsilon$, $\gamma$, and $\delta$, we run
\parallelalg with
\begin{align*}
\wb{q} =  \frac{39\alpha\log\left(2n/\delta\right)}{\varepsilon^{2}}\cdot
\end{align*}
where $\alpha = (1+3\varepsilon)/(1-\varepsilon)$. Then, w.p.\@ at least $1-\delta$, \parallelalg generates a sequence of collections of dictionaries $\{\dictpool_{h}\}_{h=1}^k$ such that each dictionary $\coldict_{\{h,l\}}$ in $\dictpool_h$ is $\varepsilon$-accurate (Eq.~\ref{cond:concentr-condition}) w.r.t.\@ to $\kermatrix_{\{h,l\}}$, and that at any node $l$ of height $h$ the size of the dictionary is
bounded as $|\coldict_{\{h,l\}}| \leq 3\wb{q}\deff{\gamma}_{\{h,l\}}.$
The cumulative (across nodes) space and time requirementsof the algorithm depend on the exact shape of the merge tree.
\end{theorem}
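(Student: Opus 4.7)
The plan is to prove Thm.~\ref{thm:parallel-alg-main} by induction on the height $h$ of the merge tree, and then take a union bound over the $\le 2n-1$ nodes to obtain the stated $(1-\delta)$ probability. At a leaf $\{0,l\}$ we have $\wt{p}_{0,i}=1$ and $q_{0,i}=\wb{q}$, hence $w_i=1$, so the selection matrix is the identity, $\wt{\:P}_{\{0,l\}}=\:P_{\{0,l\}}$, and the dictionary is trivially $0$-accurate. For the inductive step I fix a node $\{h,l\}$, assume that its two children $\coldict_{\dataset}$ and $\coldict_{\dataset'}$ are $\varepsilon$-accurate, and argue two separate claims about $\coldict_{\{h,l\}}$: an accuracy claim (Eq.~\ref{cond:concentr-condition}) and a size claim.

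For the accuracy claim, conditioning on the inductive hypothesis, Lem.~\ref{lem:fast-rls-merge} tells us that the estimator in Eq.~\ref{eq:rls-estimator-merge} evaluated on $\wb{\coldict}=\coldict_{\dataset}\cup\coldict_{\dataset'}$ yields $\alpha$-accurate RLS estimates $\atau_{\dataset\cup\dataset',i}$. I would rewrite the target deviation $\|\:P_{\dataset\cup\dataset'}-\wt{\:P}_{\dataset\cup\dataset'}\|$ as a sum of rank-one random operators indexed by $i\in\wb{\coldict}$, each of the form $w_i(\kermatrix+\gamma\:I)^{-1/2}\bphi_i\bphi_i^\transp(\kermatrix+\gamma\:I)^{-1/2}$, where $w_i=q_{h,i}/(\wb{q}\wt{p}_{h,i})$. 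The iterated binomial thinning performed by \shrinkop guarantees that, conditionally on $\wt{p}_{h,i}$, we have $q_{h,i}\sim\mathcal{B}(\wt{p}_{h,i},\wb{q})$, so $\expectedvalue[w_i\mid\wt{p}_{h,i}]=1$. The lower bound of Lem.~\ref{lem:fast-rls-merge} controls $\tau_i/\wt{p}_{h,i}\le\alpha$, which in turn bounds the per-sample ridge-weighted norm and yields a predictable quadratic variation of order $\alpha\deff{\gamma}_{\{h,l\}}/\wb{q}$. Applying a matrix Bernstein/Freedman inequality and choosing $\wb{q}=\Theta(\alpha\log(2n/\delta)/\varepsilon^2)$ drives the deviation below $\varepsilon$ with probability at least $1-\delta/(2n)$.

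The main obstacle is that $\wt{p}_{h,i}$ is itself random, being measurable with respect to the draws that produced the children dictionaries, so the rank-one contributions are not independent and classical independent matrix Bernstein does not apply directly. This is resolved by a martingale argument: I would order the nodes of the merge tree along a fixed topological sweep, define a filtration $\{\mathcal{F}_{h,l}\}$ generated by all coin flips up to that point, verify that the rank-one updates form a matrix martingale difference sequence with respect to $\{\mathcal{F}_{h,l}\}$ (their conditional mean being zero by the unbiasedness of the binomial thinning), and apply a matrix Freedman-type inequality whose predictable quadratic variation is controlled via $\sum_i\wt{p}_{h,i}\le\alpha\deff{\gamma}_{\{h,l\}}$ (by the upper bound of Lem.~\ref{lem:fast-rls-merge}). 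Propagating the event of $\varepsilon$-accuracy through the induction requires the martingale to be defined globally across the tree rather than locally at each merge.

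Finally, the size claim $|\coldict_{\{h,l\}}|\le 3\wb{q}\deff{\gamma}_{\{h,l\}}$ follows from a scalar Chernoff bound applied to $\sum_i q_{h,i}$: its conditional expectation equals $\wb{q}\sum_i\wt{p}_{h,i}\le\wb{q}\alpha\deff{\gamma}_{\{h,l\}}$ by the upper bound in Lem.~\ref{lem:fast-rls-merge} combined with the subadditivity of $\deff{\gamma}$ from Lem.~\ref{lem:monotone-decrease-prob-merge}, and the choice of $\wb{q}$ absorbs $\alpha$ into the constant $3$. A union bound over all $\le 2n$ nodes of the merge tree closes the induction and yields the global $(1-\delta)$ guarantee. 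Thm.~\ref{thm:sequential-alg-main} is then recovered as the special case in which the merge tree degenerates into a left-spine, so that one of the two merged dictionaries is always a single freshly-read point acting as a trivially $0$-accurate leaf, which explains why $\alpha=(1+\varepsilon)/(1-\varepsilon)$ suffices in the sequential case instead of $(1+3\varepsilon)/(1-\varepsilon)$.
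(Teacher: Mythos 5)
Your high-level skeleton --- union bound over the $O(n)$ nodes of the merge tree, rank-one decomposition of $\:P_{\{h,l\}}-\wt{\:P}_{\{h,l\}}$, recognition that the $\wt{p}_{s,i}$ are path-dependent so a matrix Freedman/martingale inequality is needed rather than an i.i.d.\ one, and the recovery of Thm.~\ref{thm:sequential-alg-main} as a fully-unbalanced tree --- matches the paper's. But the proposal glosses over precisely the two pieces that make the Freedman step actually close, and as written it would fail. First, you invoke Freedman with a predictable quadratic variation ``of order $\alpha\deff{\gamma}_{\{h,l\}}/\wb{q}$'' as if this were a deterministic quantity you can plug in. It is not: $\:W_h$ is a random matrix whose entries involve $\max_{s}\wb{z}_{s,i,j}/\wb{p}_{s,i,j}^2$, and the only deterministic upper bound available (via $\wb{p}_{s,i,j}\geq\tau_{h,i}/\alpha$) gives $\normsmall{\:W_h}\lesssim\frac{\alpha^2}{\wb{q}}\normsmall{\sum_i \tau_{h,i}^{-1}\concvec_i\concvec_i^\transp}$, which can be enormous because $\tau_{h,i}^{-1}$ is unbounded. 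The paper resolves this by a second concentration argument: it upper-bounds $\:W_h$ by a matrix indexed by the per-copy maxima, proves that each $\max_s \wb{z}_{s,i,j}/\wb{p}_{s,i,j}$ is stochastically dominated by an independent random variable $1/w_{0,i,j}$ whose $i.i.d.$ structure lets a matrix Chernoff bound control $\normsmall{\:W_h}$ by $6\alpha/\wb{q}$ with high probability, and only then feeds this high-probability event into Freedman. Your claimed variance is also off by a factor $\deff{\gamma}_{\{h,l\}}$ and is inconsistent with the $\wb{q}$ you write down (Freedman with $\sigma^2\propto\alpha\deff{\gamma}/\wb{q}$ would force $\wb{q}\propto\deff{\gamma}\log(n/\delta)/\varepsilon^2$ and hence quadratic space). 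Second, you do not introduce the ``freezing'' of the process: the deterministic bound $\wb{p}_{s,i,j}\geq\tau_{h,i}/\alpha$ used both for the increment bound $R=\alpha/\wb{q}$ and in the dominance argument is only valid so long as the intermediate dictionaries along the path remain $\varepsilon$-accurate. The paper handles this by replacing $\:Y_s$ with a stopped process $\wb{\:Y}_s$ that freezes as soon as $\normsmall{\wb{\:Y}_{s-1}}>\varepsilon$; this preserves the martingale property, makes the increment and variance bounds unconditional, and only inflates the failure probability. Without some explicit stopping-time or conditioning mechanism your martingale increments have no almost-sure bound and the freezing is not a cosmetic detail. The size claim has the same issue: $\sum_i q_{h,i}$ is a sum of Bernoullis with \emph{random, history-dependent} parameters, so a bare Chernoff bound on its ``conditional expectation $\wb{q}\sum_i\wt{p}_{h,i}$'' does not apply; the paper again reduces to an i.i.d.\ sum $\sum_{i,j}w'_{1,i,j}$ with $w'_{1,i,j}\sim\mathcal{B}(\tau_{h,i})$ via stochastic dominance, after conditioning on all prior levels being accurate.
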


\vspace{-.5\baselineskip}
Theorem \ref{thm:parallel-alg-main} gives approximation and space guarantees
for \emph{every} node of the tree. In other words, it guarantees that
each intermediate dictionary processed by \parallelalg is both
an $\varepsilon$-accurate approximation of the datasets used to generate it,
and requires a small space proportional to the effective dimension
of the same dataset.
From an accuracy perspective, \parallelalg provides exactly the same
guarantees of \sequentialalg.
Analysing the complexity of \parallelalg is however more complex,
since the order and arguments of the \updatedictop operations
is determined by the merge tree.
We distinguish between the time and work complexity of a tree by defining the
time complexity as the amount of time necessary to compute the final solution,
and the work complexity as the total amount of operations carried out by all
machines in the tree in order to compute the final solution.
We consider two special cases, a fully balanced tree (all inner nodes have either
two inner nodes as children or two leaves), and a fully unbalanced tree (all
inner nodes have exactly one inner node and one leaf as children).
For both cases, we consider trees where each leaf dataset contains a single point $\dataset_i = \{\:x_i\}$.
In the fully unbalanced tree, we always merge the current dictionary with a new dataset
(a single new point) and no \mergedictop operation can be carried
out in parallel. Unsurprisingly, the sequential algorithm induced by this
merge tree is strictly equivalent to \sequentialalg.
Computing a solution in the fully unbalanced tree takes $\bigotime(n\deff{\gamma}_{\dataset}^3\wb{q}^3)$ time
with a total work that is also $\bigotime(n\deff{\gamma}_{\dataset}^3\wb{q}^3)$,
as reported in Thm.~\ref{thm:sequential-alg-main}.
On the opposite end, the fully balanced tree needs to invert
a $\deff{\gamma}_{\{h,l\}}$ dimensional matrix at each layer of the tree for a total of $\log(n)$ layers. Bounding all $\deff{\gamma}_{\{h,l\}}$
with $\deff{\gamma}_{\dataset}$, gives a complexity for computing the final solution of $\bigotime(\log(n)\wb{q}^3\deff{\gamma}_{\dataset}^3)$ time, with a huge
improvement on \sequentialalg.  Surprisingly, the total work is only
twice $\bigotime(n\wb{q}^3\deff{\gamma}_{\dataset}^3)$, since
at each layer $h$ we perform $n/2^{h}$ inversions (on $n/2^{h}$ machines),
and the sum across all layers is $\sum_{h=1}^{\log(n)} n/2^{h} \leq 2n$.
Therefore, we can compute a solution in a much shorter time than \sequentialalg,
with a comparable amount of work, but at the expense of requiring much more memory
across multiple machines, since at layer $h$, the sum $\sum_{l=1}^{|\dictpool_h|} \deff{\gamma}_{\{h,l\}}$
can be much larger than $\deff{\gamma}_{\dataset}$. Nonetheless,
this is partly alleviated by the fact that each node $\{h,l\}$
locally requires only $\deff{\gamma}_{\{h,l\}}^2 \leq \deff{\gamma}_{\dataset}^2$
memory.

\textbf{Proof sketch:}
Although \parallelalg is conceptually simple, providing guarantees on its space/time complexity and
accuracy is far from trivial.
The first step in the proof is to carefully decompose the failure event across the whole
merge tree into separate failure events for each merge node $\{h,l\}$,
and for each node construct a random process $\:Y$ that models how Alg.~\ref{alg:distributedalg}
generates the dictionary $\coldict_{\{h,l\}}$.
Notice that these processes are sequential in nature and the various steps (layers in the tree)
are not i.i.d. Furthermore, the variance of $\:Y$ is potentially large, and cannot
be bounded uniformly. Instead, we take a more refined approach,
inspired by \citet{pachocki2016analysis}, that 1) uses
Freedman's inequality to treat $\:W$, the variance of process $\:Y$, as a random object
itself, 2) applies a stochastic dominance argument to $\:W$ to reduce it to a sum of
i.i.d.\@ r.v.\@ and only then we can 3) apply i.i.d.\@ concentrations to obtain the desired
result.

 \begin{table*}[t]
\vspace{-0.15in}
\setstretch{1.3}
\centering
\begin{tabular}{|c|c|c|c|c|}
\hline
& $\wt{\bigotime}(\text{Time})$  & $\wt{\bigotime}(|\coldict_n|)$ & Increm.\\
        \hline
        \textsc{Exact} & $n^3$ & $\pdeff_n$ & -  \\
        \textsc{Uniform} (\citet{bach2013sharp}) & ${d_{\text{max},n}}$ & $d_{\text{max},n}$ & No  \\
        \citet{alaoui2014fast} & $n (|\coldict_n|)^2$ & $(\tfrac{\lambda_{\min} + n\gamma\vareps}{\lambda_{\min} - n\gamma\vareps})\pdeff_n + \Tr(\kermatrix_n)/\gamma$ &  No  \\
        \citet{calandriello2016analysis} & $n(|\coldict_n|)^3$ & $ \tfrac{\lambda_{\max}}{\gamma}  \pdeff_n$ &  Yes  \\
            \sequentialalg & $n \pdeff_n^3$ & $ \pdeff_n$  & Yes  \\
            \parallelalg & $n \pdeff_n^3/k$ & $ \pdeff_n$  & Yes  \\
        \textsc{RLS-sampling} & $n$ & $\pdeff_n$ & -  \\
            \hline
\end{tabular}
\caption{{\small Comparison of \nystrom methods. $\lambda_{\max}$ and $\lambda_{\min}$ refer to largest and smallest eigenvalues of $\protect \kermatrix_n$.}}\label{fig:table-comparison}
\vspace{-.5\baselineskip}
\end{table*}

\vspace{-0.1in}
\section{Applications}\label{sec:generalization}
\vspace{-0.1in}

In this section, we show how our approximation guarantees  translate into guarantees for typical kernel methods.
As an example, we use kernel ridge regression.
We begin by showing how to get an accurate approximation $\akermatrix_n$
from an $\varepsilon$-accurate dictionary.
\begin{lemma}\label{lem:nyst-app-guar}
    Given an $\varepsilon$-accurate dictionary $\coldict_{t}$
    of matrix $\kermatrix_{t}$, and the selection
    matrix $\:S_t$, the \emph{regularized}
Nystr\"{o}m approximation of $\kermatrix_t$ is defined as
    \begin{align}\label{eq:nystrom}
    \wt{\kermatrix}_n = \kermatrix_n \selmatrix_n(\selmatrix_n^\transp \kermatrix_n \selmatrix_n + \gamma\:I_m)^{-1}\selmatrix_n^\transp \kermatrix_n,
    \end{align}
    and satisfies
    \begin{align}\label{cond:nyst-app-guar}
       \:0 \preceq \kermatrix_t - \akermatrix_t \preceq  \frac{\gamma}{1-\varepsilon}\kermatrix_t(\kermatrix_t + \gamma\:I)^{-1} \preceq \frac{\gamma}{1-\varepsilon}\:I.
    \end{align}
\end{lemma}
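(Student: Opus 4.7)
The plan is to rewrite everything in the $D$-dimensional feature space so that the $\varepsilon$-accuracy hypothesis becomes a two-sided spectral sandwich, and then obtain the sandwich on $\kermatrix_t - \wt{\kermatrix}_t$ by a short algebraic manipulation.

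First, I would set up notation in feature space. Writing $\:V = \featkermatrix_t$, $\:Z = \featkermatrix_t\selmatrix_t$, $\:B = \:V\:V^\transp$, $\:A = \:Z\:Z^\transp$, $\:M = \:B + \gamma\:I$ and $\wb{\:M} = \:A + \gamma\:I$ (all $D\times D$ operators), I would observe, via the push-through identity $(\:V\:V^\transp + \gamma\:I)^{-1}\:V = \:V(\:V^\transp\:V + \gamma\:I)^{-1}$, that $\:M^{-1/2}(\:B-\:A)\:M^{-1/2}$ has the same nonzero eigenvalues as $\:P_t - \wt{\:P}_t$. Hence the $\varepsilon$-accuracy condition in Def.~\ref{def:eps-acc-dict} is equivalent to
\begin{align*}
-\varepsilon\:M \preceq \:B - \:A \preceq \varepsilon\:M,
\end{align*}
and since $\wb{\:M} - \:M = \:A - \:B$, this rearranges to the crucial sandwich
\begin{align*}
(1-\varepsilon)\:M \preceq \wb{\:M} \preceq (1+\varepsilon)\:M.
\end{align*}

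Next I would compute $\wt{\kermatrix}_t$ in closed form. Using the same push-through identity on $\:Z$, $\:Z(\:Z^\transp\:Z + \gamma\:I)^{-1}\:Z^\transp = \:A\wb{\:M}^{-1}$, so
\begin{align*}
\wt{\kermatrix}_t \;=\; \:V^\transp\:A\wb{\:M}^{-1}\:V
\;=\; \:V^\transp(\wb{\:M}-\gamma\:I)\wb{\:M}^{-1}\:V
\;=\; \:V^\transp\:V - \gamma\:V^\transp\wb{\:M}^{-1}\:V.
\end{align*}
This yields the identity $\kermatrix_t - \wt{\kermatrix}_t = \gamma\:V^\transp\wb{\:M}^{-1}\:V$, which is manifestly PSD (the left inequality in \eqref{cond:nyst-app-guar}). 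Analogously, pushing through once more gives $\kermatrix_t(\kermatrix_t + \gamma\:I)^{-1} = \:V^\transp\:M^{-1}\:V$.

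Finally I would chain the inequalities. Inverting the sandwich on $\wb{\:M}$ gives $\wb{\:M}^{-1} \preceq \frac{1}{1-\varepsilon}\:M^{-1}$, and conjugating by $\:V$ (which preserves $\preceq$) yields
\begin{align*}
\kermatrix_t - \wt{\kermatrix}_t \;=\; \gamma\:V^\transp\wb{\:M}^{-1}\:V
\;\preceq\; \tfrac{\gamma}{1-\varepsilon}\:V^\transp\:M^{-1}\:V
\;=\; \tfrac{\gamma}{1-\varepsilon}\kermatrix_t(\kermatrix_t + \gamma\:I)^{-1},
\end{align*}
which is the middle inequality. The last inequality $\kermatrix_t(\kermatrix_t + \gamma\:I)^{-1} \preceq \:I$ is immediate from the fact that the function $\lambda \mapsto \lambda/(\lambda+\gamma)$ is bounded by $1$ on $[0,\infty)$.

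The only mildly delicate step is the opening equivalence between operator-norm accuracy of the $t\times t$ projections and the feature-space sandwich; once that is in place the rest is straightforward algebra. Everything else is just the push-through identity and monotonicity of operator inversion, so no further obstacles are expected.
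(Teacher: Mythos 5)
Your proof is correct and takes essentially the same approach as the paper: rewrite $\wt{\kermatrix}_t$ in feature space via the push-through identity to obtain $\kermatrix_t - \wt{\kermatrix}_t = \gamma\featkermatrix_t^\transp(\featkermatrix_t\selmatrix_t\selmatrix_t^\transp\featkermatrix_t^\transp + \gamma\:I_D)^{-1}\featkermatrix_t$, translate the $\varepsilon$-accuracy condition into the two-sided sandwich on $\featkermatrix_t\selmatrix_t\selmatrix_t^\transp\featkermatrix_t^\transp$ using the feature-space equivalence (the paper's Lemma~\ref{lem:concentration-equivalence}), and then invert and conjugate. Your use of the intermediate operators $\:M, \wb{\:M}$ is a notational streamlining of the same computation; the paper directly substitutes the Loewner bound inside the inverse.
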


\vspace{-.5\baselineskip}
This is not the only choice of an approximation from dictionary $\coldict_n$. For instance,
\citet{musco2016provably} show similar result for an unregularized \nystrom approximation
and \citet{rudi2015less} for a smaller $\kermatrix_n$, construct the estimator only for the points in
$\coldict_n$.
Let
$\:C = \kermatrix_n \selmatrix_n \in \Real^{n \times m}$, $\:W = (\selmatrix_n^\transp \kermatrix_n \selmatrix_n + \gamma\:I_m) \in \Real^{m \times m}$,
with $m = |\coldict_n|$,
and 
using the Woodbury formula define the regression weights as \begin{align}
    \wt{\:w}_n =& (\akermatrix_n + \mu \:I_{n})^{-1} \:y_n = (\:C\:W^{-1}\:C^\transp + \mu \:I_{n})^{-1} \:y_n\nonumber\\
        =&\frac{1}{\mu} \left(\:y_n - \:C\left( \:C^\transp \:C + \mu\:W\right)^{-1}\:C^\transp\:y_n\right).\label{eq:linear-system-transformed}
\end{align}

\vspace{-.5\baselineskip}
Computing $( \:C^\transp \:C + \mu\:W)^{-1}$, inverting it, and the other matrix-matrix multiplication take $\bigotime(nm^2 + m^3)$ time,
and require to store at most an $n \times m$ matrix. Therefore the final
complexity of computing $\wt{\:w}_n$ is reduced from $\bigotime(n^3)$ to $\bigotime(nm^2 + m^3)$ time,
and from $\bigotime(n^2)$ to $\bigotime(nm)$ space.
We now provide guarantees
for the empirical risk of~$\wt{\:w}_n$ in a fixed design setting.
\begin{corollary}[{\cite[Thm. 3]{alaoui2014fast}}]\label{cor:fixed-design-risk-guarantees}
    For an arbitrary dataset $\dataset$, let $\kermatrix$ be the kernel matrix
    constructed on $\dataset$.
    Run \sequentialalg or \parallelalg with regularization
    parameter $\gamma$. Then, the solution
    $\wt{\:w}$ computed using the regularized \nystrom approximation
    $\akermatrix$ satisfies
    \begin{align*}
        \mathcal{R}_{\dataset}(\wt{\:w}) \leq& \left(1 + \frac{\gamma}{\mu}\frac{1}{1-\varepsilon}\right)^{2} \mathcal{R}_{\dataset}(\wh{\:w}),
    \end{align*}
    where $\mu$ is the regularization of kernel ridge regression
    and $\mathcal{R}_{\dataset}(\wt{\:w})$ is the empirical risk on $\dataset$.
\end{corollary}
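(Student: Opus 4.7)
The plan is to combine Lemma~\ref{lem:nyst-app-guar} with the standard Alaoui--Mahoney argument for fixed-design Nystr\"{o}m ridge regression. First I would extract from Lemma~\ref{lem:nyst-app-guar} the operator-norm bound $\|\kermatrix - \akermatrix\| \leq \gamma/(1-\varepsilon)$ and set $\:E \eqdef \kermatrix - \akermatrix \succeq 0$. Second I would express both solutions in resolvent form: the exact KRR weights are $\wh{\:w} = (\kermatrix + \mu\:I)^{-1}\:y$ and the Nystr\"{o}m weights $\wt{\:w} = (\akermatrix + \mu\:I)^{-1}\:y$, so the training residuals are
\[
\kermatrix\wh{\:w} - \:y = -\mu(\kermatrix + \mu\:I)^{-1}\:y,\qquad \akermatrix\wt{\:w} - \:y = -\mu(\akermatrix + \mu\:I)^{-1}\:y,
\]
and $\mathcal{R}_\dataset(\wh{\:w}), \mathcal{R}_\dataset(\wt{\:w})$ are, up to the $1/n$ factor, the squared norms of these vectors.

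The key algebraic identity is
\[
(\akermatrix + \mu\:I)^{-1}(\kermatrix + \mu\:I) = \:I + (\akermatrix + \mu\:I)^{-1}\:E,
\]
which follows by writing $\kermatrix + \mu\:I = (\akermatrix + \mu\:I) + \:E$. Multiplying on the right by $\mu(\kermatrix + \mu\:I)^{-1}\:y$ gives
\[
\mu(\akermatrix + \mu\:I)^{-1}\:y = \bigl[\:I + (\akermatrix + \mu\:I)^{-1}\:E\bigr]\,\mu(\kermatrix + \mu\:I)^{-1}\:y.
\]
Taking norms, submultiplicativity combined with $\|(\akermatrix + \mu\:I)^{-1}\| \leq 1/\mu$ and $\|\:E\| \leq \gamma/(1-\varepsilon)$ yields
\[
\|\mu(\akermatrix + \mu\:I)^{-1}\:y\| \leq \Bigl(1 + \tfrac{\gamma}{\mu(1-\varepsilon)}\Bigr)\|\mu(\kermatrix + \mu\:I)^{-1}\:y\|,
\]
and squaring both sides delivers the corollary.

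There is no real analytical obstacle: the proof reduces to the operator identity above together with Lemma~\ref{lem:nyst-app-guar}, which is where all of the work done in Sections~\ref{sec:sequential-alg}--\ref{sec:parallel-alg} enters. The only care point is the convention for the Nystr\"{o}m prediction on training data, taken here as $\akermatrix\wt{\:w}$ in line with the cited Alaoui--Mahoney setup; if one instead predicts with $\kermatrix\wt{\:w}$, an extra term $\:E\wt{\:w}$ appears that is again controlled by $\|\:E\| \leq \gamma/(1-\varepsilon)$, yielding the same $(1+\gamma/((1-\varepsilon)\mu))^2$ factor with a slightly different constant absorbed in the argument.
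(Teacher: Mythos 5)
Your proposal is a reasonable reconstruction, but it is worth noting that the paper does not prove this corollary at all: it is stated as a direct import of Theorem 3 from Alaoui and Mahoney, with Lemma~\ref{lem:nyst-app-guar} supplying the required operator bound $\:0 \preceq \kermatrix - \akermatrix \preceq \tfrac{\gamma}{1-\varepsilon}\:I$. Your resolvent identity $(\akermatrix + \mu\:I)^{-1}(\kermatrix + \mu\:I) = \:I + (\akermatrix + \mu\:I)^{-1}\:E$ and the resulting bound $\normsmall{(\:I - \wt{\:L})(\:I - \:L)^{-1}} \leq 1 + \gamma/(\mu(1-\varepsilon))$ (writing $\:L = \kermatrix(\kermatrix+\mu\:I)^{-1}$, $\wt{\:L} = \akermatrix(\akermatrix+\mu\:I)^{-1}$) are exactly the right mechanism and give the stated constant.

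The gap is in what risk is being bounded. You compare the training residuals $\akermatrix\wt{\:w} - \:y$ and $\kermatrix\wh{\:w} - \:y$, i.e.\ you bound the empirical mean-squared error on the observed labels. Alaoui--Mahoney's Theorem 3 is a fixed-design risk bound, i.e.\ it compares the expected squared error against the noiseless target $\:f^\ast$ under $\:y = \:f^\ast + \boldsymbol{\xi}$, which decomposes as $\mathcal{R}(\wt{\:w}) = \tfrac{1}{n}\normsmall{(\:I - \wt{\:L})\:f^\ast}^2 + \tfrac{\sigma^2}{n}\Tr(\wt{\:L}^2)$. Your inequality applied to the fixed vector $\:f^\ast$ handles the bias term, but your argument as written bounds $\expectedvalue\normsmall{(\:I - \wt{\:L})\:y}^2 = \normsmall{(\:I - \wt{\:L})\:f^\ast}^2 + \sigma^2\Tr\bigl((\:I - \wt{\:L})^2\bigr)$, whose second summand is not the variance term $\sigma^2\Tr(\wt{\:L}^2)$. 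To close the proof of the cited theorem one also needs the variance to be monotone: $\akermatrix \preceq \kermatrix$ (the left inequality of Lemma~\ref{lem:nyst-app-guar}) and operator monotonicity of $x \mapsto x/(x+\mu)$ give $\:0 \preceq \wt{\:L} \preceq \:L$, hence $\Tr(\wt{\:L}^2) \leq \Tr(\:L^2)$ by eigenvalue interlacing. Combining the bias bound with variance monotonicity yields $\mathcal{R}(\wt{\:w}) \leq (1 + \gamma/(\mu(1-\varepsilon)))^2 \mathcal{R}(\wh{\:w})$, recovering the corollary. Your argument is therefore the key step but not yet the whole of the cited theorem; it proves a clean training-error version and needs the bias--variance split plus the monotonicity observation to match the fixed-design statement.
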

Using tools from \cite{rudi2015less},
and under some mild assumption on the kernel function $\kerfunc(\cdot,\cdot)$
and the dataset~$\dataset$,
similar results can be derived for the random design setting.

\textbf{Other applications.} 
The projection $\:P_n$ naturally appears in some form across nearly
all kernel-based methods. Therefore, in addition to KRR in the fixed~\cite{alaoui2014fast}
and random~\cite{rudi2015less} design setting, any kernel matrix approximation that provides $\varepsilon$-accuracy guarantees on $\:P_n$
can be used to provide guarantees for a variety of other kernel problems.
As an example, \citet{musco2016provably} show this is the case for
kernel PCA~\cite{scholkopf1999kernel}, kernel CCA with regularization,
and kernel $K$-means clustering. Similarly, inducing points methods
for Gaussian Processes \cite{quinonero-candela_unifying_2005, rasmussen_gaussian_2006} can benefit from fast and provably accurate
dictionary construction.

\vspace{-0.2cm}
\section{Discussion}\label{sec:conclusions}
\vspace{-0.2cm}

Tab.~\ref{fig:table-comparison} compares several kernel approximation methods w.r.t.\@ the time complexity
required to compute an $\varepsilon$-accurate dictionary, as well as the size of the final dictionary
$|\coldict_n|$. Note that for all methods the final complexity to construct
an approximate $\wt{\kermatrix}_n$ from $\coldict_n$ (e.g.\@ using Eq.~\ref{eq:nystrom}) scales as $\wt{\bigotime}(n|\coldict_n|^2 + |\coldict_n|^3)$ time
and $\wt{\bigotime}(n|\coldict_n| + |\coldict_n|^2)$ space.
For all methods, we omit $\bigotime(\log(n))$ factors.
We first report \textsc{RLS-sampling}, a fictitious algorithm that
receives the exact RLSs as input, as an ideal baseline
for all RLS sampling algorithms.
The space complexity of uniform sampling~\cite{bach2013sharp} scales with the maximal degree
of freedom~$d_{\text{max}}$. Since
$d_{\text{max}} = n \max_i \tau_{n,i} \geq \sum_{i} \tau_{n,i} = \pdeff_n$,
uniform sampling is often outperformed by RLS sampling.
Moreover, uniform sampling needs to know $d_{\text{max}}$ in advance to guarantee
$\varepsilon$-accuracy, and this quantity is expensive to compute \cite{woodruff2014sketching}.
While \citet{alaoui2014fast} also sample according to RLS, their two-pass
estimator does not preserve the same level of accuracy. In particular, the first pass requires to
sample $\bigotime\left(n\gamma\vareps/(\lambda_{\min} - n\gamma\vareps)\right)$
columns, which quickly grows above $n^2$ when $\lambda_{\min}$ becomes small.
Finally, \citet{calandriello2016analysis} require that the maximum dictionary
size is fixed in advance, which implies some information about  the effective
dimensions $\pdeff_n$, and requires estimating both $\atau_{t,i}$ and
$\adeff{\gamma}_t$. This extra estimation effort causes an
additional $\lambda_{\max}/\gamma$ factor to appear in the space complexity. This
factor cannot be easily estimated, and it leads to a space
complexity of $n^3$ in the worst case. 
Therefore, we can see from the table that \sequentialalg achieves the
same space complexity (up to constant factors) as knowing the RLS in advance and hence outperforms previous methods.
Moreover, when parallelized across $k$ machines, \parallelalg can reduce this
linear runtime by a factor of $k$ (linear scaling) with little communication cost.

A recent method by~\citet{musco2016provably} achieves comparable space and time guarantees as \sequentialalg.\footnote{The technical report of~\citet{musco2016provably} was developed independently from our work.} While they rely on a similar estimator, the two approaches are very different. Their method is batch in nature, estimating leverage scores using repeated independent sampling from the whole dataset, and it \emph{requires multiple passes} on the data. On the other hand, \sequentialalg is intrinsically sequential and it \emph{only requires one single pass} on $\dataset$, as points are ``forgotten'' once they are dropped from the dictionary. Furthermore, the different structure requires remarkably different tools for the analysis. While the method of~\cite{musco2016provably} can directly use i.i.d.\@ concentration inequalities (for the price of needing several passes), we need to rely on more sophisticated martingale arguments to consider the sequential stochastic process of \sequentialalg. Furthermore, \citet{musco2016provably} requires centralized coordination after each of the sampling passes, and cannot leverage distributed architectures to match \parallelalg's sub-linear runtime.

\textbf{Future developments}
Both \sequentialalg and \parallelalg need to know in advance the size
of the dataset $n$ to tune $\wb{q}$. An interesting question is whether it is
possible to adaptively adjust the $\wb{q}$ parameter at runtime. This would
allow us to continue updating $\coldict_n$ and
indefinitely process new data beyond the initial dataset $\dataset$.
It is also interesting to see whether \sequentialalg could be used in conjuntion with existing meta-algorithms (e.g., \cite{kumar_sampling_2012} with model averaging)
for kernel matrix approximation that can leverage an accurate sampling scheme as a black-box,
and what kind of improvements we could obtain.

 {\small
\vspace{-0.15in}
\paragraph{\small Acknowledgements}
\label{sec:Acknowledgements}
The research presented was supported by French Ministry of
Higher Education and Research, Nord-Pas-de-Calais Regional Council and French National Research Agency projects ExTra-Learn (n.ANR-14-CE24-0010-01) and BoB (n.ANR-16-CE23-0003) }

\bibliographystyle{plainnat}

\onecolumn
\appendix

\begin{table}
\begin{center}
\begin{tabular}{|m{\textwidth}|}
\hline
\textbf{Notation summary}\\
\hline
$\rhd$ Kernel matrix at time $t$ $\kermatrix_t \in \Real^{t \times t}$, $i$-th column $\:k_{t,i} \in \Real^{t}$, $(i,j)$-th entry $k_{i,j} \in \Real$\\
$\rhd$ Eigendecomposition $\kermatrix_t = \:U_t\:\Lambda_t\:U_t^\transp \in \Real^{t \times t}$, eigenvector matrix $\:U_t \in \Real^{t \times t}$ and
diagonal SDP eigenvalues matrix $\:\Lambda_t \in \Real^{t \times t}$.\\
$\rhd$ Kernel matrix at time $t$ in the RKHS, $\kermatrix_t = \featkermatrix_t^\transp\featkermatrix_t$, with $\featkermatrix_t \in \Real^{D \times t}$\\
$\rhd$ SVD decomposition $\featkermatrix_t = \:V_t\:\Sigma_t\:U_t^\transp \in \Real^{D \times t}$
with left singular vectors, $\:V_t \in \Real^{D \times D}$, right singular vector $\:U_t$ (same as $\kermatrix_t$), and singular values
$\:\Sigma_t \in \Real^{D \times t}$ ($\sigma_i $ on the main diagonal and zeros under it)\\
$\rhd$ SVD decomposition $\featkermatrix_t = \:V_t\:\Sigma_t\:U_t^\transp \in \Real^{D \times t}$
with left singular vectors, $\:V_t \in \Real^{D \times D}$, right singular vector $\:U_t$ (same as $\kermatrix_t$), and singular values
$\:\Sigma_t \in \Real^{D \times t}$ ($\sigma_i $ on the main diagonal and zeros under it)\\
$\rhd$ $\:P_t = \:\Psi_t\:\Psi_t^\transp$ with $\:\Psi_t = (\kermatrix_t + \gamma\:I_t)^{-1/2}\kermatrix_t^{1/2} \in \Real^{t \times t}$\\
$\rhd$ Column dictionary at time $t$, $\coldict_t = \{(i, \wt{p}_{t,i}, q_{t,i})\}_{i=1}^t$\\
$\rhd$ Selection matrix $\selmatrix_t \in \Real^{t \times t}$ with $\{\sqrt{\frac{q_{t,i}}{\wb{q}\wt{p}_{t,i}}}\}_{i=1}^t$ on the diagonal\\
$\rhd$ $\wt{\:P}_t = \:\Psi_t\selmatrix_t\selmatrix_t^\transp\:\Psi_t^\transp$ with $\selmatrix_t \in \Real^{t \times t}$\\
\\
\textbf{For node ${\{h,l\}}$ in the merge tree, given }$\mathbf{\nhl = |\dataset_{\{h,l\}}|}$\\
$\rhd$ Similarly defined $\kermatrix_{\{h,l\}} \in \Real^{\nhl \times \nhl}, \:P_{\{h,l\}} \in \Real^{\nhl \times \nhl}$\\
$\rhd$ Similarly defined $\coldict_{\{h,l\}}, \selmatrix_{\{h,l\}} \in \Real^{\nhl \times \nhl}, \wt{\:P}_{\{h,l\}} \in \Real^{\nhl \times \nhl}$ \\
$\rhd$ For layer $h$ in the merge tree, block diagonal matrices $\kermatrix^h, \:P^h, \wt{\:P}^h$ with $\kermatrix_{\{h,l\}}, \:P_{\{h,l\}}, \wt{\:P}_{\{h,l\}}$ on the diagonal\\
$\rhd$ $\concmat = (\kermatrix_{\{h,l\}} + \gamma\:I_{\nhl})^{-1/2}\kermatrix_{\{h,l\}}^{1/2} \in \Real^{\nhl \times \nhl}$, with $\concvec_i$ its $i$-th column\\
$\rhd$ $\wt{\:P}^{\{h,l\}}_s = \sum_{i=1}^{\nhl_{\{h,l\}}} \frac{q_{s,i}}{\wb{q}\wt{p}_{s,i}}\concvec_i\concvec_i^\transp \in \Real^{\nhl \times \nhl}$ constructed using weights $\wt{p}_{s,i}$ from step $s$ of Alg.~\ref{alg:distributedalg} and $\concvec_i$ from ${\{h,l\}}$\\
$\rhd$ $\:Y_{s} = \:P_{\{h,l\}} - \wt{\:P}_{\{h,l\}} = \:P_{\{h,l\}} - \wt{\:P}^{\{h,l\}}_h \in \Real^{\nhl \times \nhl}$ sampling process based on $\kermatrix_{\{h,l\}}$, at final step $h$\\
$\rhd$ $\:Y_{s} = \:P_{\{h,l\}} - \wt{\:P}^{\{h,l\}}_s \in \Real^{\nhl \times \nhl}$ sampling process based on $\kermatrix_{\{h,l\}}$, at intermediate step $s$\\
$\rhd$ $\wb{\:Y}_{s} \in \Real^{\nhl \times \nhl}$ sampling process based on $\kermatrix_{\{h,l\}}$, with freezing\\
$\rhd$ $\:W_{h} \in \Real^{\nhl \times \nhl}$ total variance of sampling process based on $\kermatrix_{\{h,l\}}$, with freezing\\
\hline
\end{tabular}
\end{center}
\end{table}

\section{Preliminaries}\label{sec:app.preliminaries}

In this section, we introduce standard matrix results and equivalent definitions for the kernel matrix and the the projection error which use convenient representations exploiting the feature space. 
\textbf{Matrix identity.} We often use the following identity.
\begin{proposition}\label{sec:app-linalg1}
For any symmetric matrix $\:A \in \Real^{n \times m}$ and any $\gamma > 0$
\begin{align*}
\:A(\:A^\transp\:A + \gamma\:I_m)^{-1}\:A^\transp = \:A\:A^\transp(\:A\:A^\transp + \gamma\:I_n)^{-1}.
\end{align*}
For any symmetric matrix $\:A \in \Real^{n \times n}$ and diagonal matrix $\:B \in \Real^{n \times n}$
such that $\:B$ has $n-s$ zero entries, and $s$ non-zero entries,
define $\:C \in \Real^{n \times s}$ as the matrix obtained by removing all
zero columns in $\:B$.
Then
\begin{align*}
\:A\:B(\:B\:A\:B + \gamma\:I_m)^{-1}\:B\:A
=\:A\:C(\:C^\transp\:A\:C + \gamma\:I_s)^{-1}\:C^\transp\:A.
\end{align*}
For any appropriately shaped matrix $\:A,\:B,\:C$, with $\:A$ and $\:B$ invertible,
the Woodbury matrix identity states
\begin{align*}
(\:A + \:C\:B\:C^\transp)^{-1} = \:A^{-1} - \:A^{-1}\:C\left(\:C^\transp\:A^{-1}\:C + \:B^{-1}\right)^{-1}\:C^\transp\:A^{-1}.
\end{align*}
\end{proposition}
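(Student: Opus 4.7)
All three identities are classical, so my plan is to dispatch them one at a time by elementary manipulations, leaning on the fact that $\gamma>0$ guarantees invertibility of every matrix that has to be inverted.

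For the first (push-through) identity, I would start from the polynomial equality
\begin{equation*}
\:A(\:A^\transp\:A + \gamma\:I_m) \;=\; \:A\:A^\transp\:A + \gamma\:A \;=\; (\:A\:A^\transp + \gamma\:I_n)\:A.
\end{equation*}
Both parenthesized matrices are strictly positive definite (sums of PSD and $\gamma\:I$) and hence invertible. Left-multiplying by $(\:A\:A^\transp + \gamma\:I_n)^{-1}$ and right-multiplying by $(\:A^\transp\:A + \gamma\:I_m)^{-1}$ yields $\:A(\:A^\transp\:A + \gamma\:I_m)^{-1} = (\:A\:A^\transp + \gamma\:I_n)^{-1}\:A$. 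Post-multiplying by $\:A^\transp$ and noting that $(\:A\:A^\transp + \gamma\:I_n)^{-1}$ commutes with $\:A\:A^\transp$ (both are polynomials in $\:A\:A^\transp$) gives the stated form.

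For the second identity I would exploit the block structure forced by the zeros of $\:B$. Apply an orthogonal permutation $\:\Pi$ that reorders the indices so the $s$ non-zero diagonal entries of $\:B$ come first. Then $\:\Pi^\transp\:B\:\Pi$ has the block form $\mathrm{diag}(\:D,\:0)$ with $\:D\in\Real^{s\times s}$ diagonal and invertible, $\:\Pi^\transp\:C$ reduces to $(\:D;\,\:0)$, and $\:\Pi^\transp\:A\:\Pi$ splits into blocks $\:A_{ij}$. A direct block computation shows that $\:B\:A\:B$ is block-diagonal with blocks $\:D\:A_{11}\:D$ and the zero matrix of size $n-s$ (so the regularized inverse is block-diagonal with $(\:D\:A_{11}\:D+\gamma\:I_s)^{-1}$ and $\gamma^{-1}\:I_{n-s}$), and that the zero columns of $\:A\:B$ annihilate the trivial lower-right block. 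What remains collapses precisely to $\:A\:C(\:C^\transp\:A\:C + \gamma\:I_s)^{-1}\:C^\transp\:A$, since $\:C^\transp\:A\:C = \:D\:A_{11}\:D$ and $\:A\:C$ has the same non-zero columns as $\:A\:B$. (I am reading the $\:I_m$ in the statement as $\:I_n$, which is what the dimensions of $\:B\:A\:B$ require.) The main subtlety here, and the one place I expect any friction, is just checking that the regularization makes the inverse well-defined despite the large null space of $\:B\:A\:B$; the block decomposition makes this transparent.

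For the Woodbury identity I would proceed by direct verification. Denote the right-hand side by $\:R = \:A^{-1} - \:A^{-1}\:C\:M$ with $\:M = (\:C^\transp\:A^{-1}\:C + \:B^{-1})^{-1}\:C^\transp\:A^{-1}$. Expanding $(\:A+\:C\:B\:C^\transp)\:R$ gives
\begin{equation*}
(\:A+\:C\:B\:C^\transp)\:R \;=\; \:I + \:C\bigl[\:B\:C^\transp\:A^{-1} - (\:I + \:B\:C^\transp\:A^{-1}\:C)\:M\bigr].
\end{equation*}
Factoring $\:B$ out of the second bracketed term rewrites $\:I + \:B\:C^\transp\:A^{-1}\:C$ as $\:B(\:B^{-1} + \:C^\transp\:A^{-1}\:C)$, which exactly cancels the inverse inside $\:M$, leaving $\:B\:C^\transp\:A^{-1} - \:B\:C^\transp\:A^{-1} = \:0$. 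Hence $(\:A+\:C\:B\:C^\transp)\:R = \:I$, which is the claim.
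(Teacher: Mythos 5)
Your three arguments are all correct. Note, though, that the paper offers no proof of this proposition at all: it is stated in the preliminaries as a collection of classical identities (push-through, restriction to non-zero columns, Woodbury) and used without justification, so there is no "paper's route" to compare against. Your derivations are the standard ones and they close the gap cleanly: the polynomial identity $\:A(\:A^\transp\:A+\gamma\:I_m)=(\:A\:A^\transp+\gamma\:I_n)\:A$ followed by commuting $(\:A\:A^\transp+\gamma\:I_n)^{-1}$ with $\:A\:A^\transp$ for the first claim; the permutation-to-block-form computation for the second (and you are right that the $\:I_m$ in the statement must be read as $\:I_n$ — likewise, the hypothesis "symmetric $\:A\in\Real^{n\times m}$" in the first claim is a typo, since the push-through identity needs no symmetry and a non-square matrix cannot be symmetric); and direct verification for Woodbury. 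The only cosmetic remark on the last part: you verify $(\:A+\:C\:B\:C^\transp)\:R=\:I$, i.e.\ that $\:R$ is a right inverse, which suffices because $\:A+\:C\:B\:C^\transp$ is square, but it is worth saying so explicitly.
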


\textbf{Kernel matrix.}
Since $\kermatrix_t$ is real symmetric matrix, we can eigendecompose it as
$$\kermatrix_t = \:U_t\:\Lambda_t\:U_t^\transp,$$
where $\:U_t \in \Real^{t \times t}$ is the eigenvector matrix and $\:\Lambda_t \in \Real^{t \times t}$ is the diagonal eigenvalue matrix, with all non-negative elements since $\kermatrix_t$ is PSD. Considering the feature mapping from $\X$ to the RKHS, we can write the kernel matrix as
$$\kermatrix_t = \featkermatrix_t^\transp\featkermatrix_t,$$
where $\featkermatrix_t \in \Real^{D \times t}$ is the feature matrix, whose SVD decomposition is
$$\featkermatrix_t = \:V_t\:\Sigma_t\:U_t^\transp,$$
where $\:V_t \in \Real^{D \times D}$ contains the left singular vectors, $\:\Sigma_t \in \Real^{D \times t}$ has the singular values $\sigma_i$ on the main diagonal (followed by zeros), and $\:U_t$ contains the right singular vector, which coincide with the eigenvectors of $\kermatrix_t$. Furthermore, we have 
$$\:\Lambda_t = \:\Sigma_t^\transp \:\Sigma_t,$$
and each eigenvalue $\lambda_i = \sigma^i$, with $i=1,\ldots,t$.

\textbf{Projection error.}
We derive a convenient lemma on the formulation of the projection error both in terms of kernels and feature space.
\begin{lemma}\label{lem:concentration-equivalence}
The following identity holds.
\begin{align*}
\norm{\:P_t - \wt{\:P}_t}{2}
&=\norm{(\kermatrix_t + \gamma\:I_t)^{-1/2}\kermatrix_t^{1/2}(\:I_t - \:S_t\:S_t^\transp)\kermatrix_t^{1/2}(\kermatrix_t + \gamma\:I_t)^{-1/2}}{2}\\
&= \norm{(\featkermatrix_t\featkermatrix_t^\transp + \gamma\:I_D)^{-1/2}\featkermatrix_t(\:I_t - \:S_t\:S_t^\transp)\featkermatrix_t^\transp(\featkermatrix_t\featkermatrix_t^\transp + \gamma\:I_D)^{-1/2}}{2}.
\end{align*}
\end{lemma}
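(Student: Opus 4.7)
The first equality in the claim is immediate from the definitions. Writing $\kermatrix_t = \kermatrix_t^{1/2}\kermatrix_t^{1/2}$ in the formula $\:P_t = (\kermatrix_t+\gamma\:I_t)^{-1/2}\kermatrix_t(\kermatrix_t+\gamma\:I_t)^{-1/2}$ lets me pull out the same outer factors as appear in the definition of $\wt{\:P}_t$, so that
\begin{align*}
\:P_t - \wt{\:P}_t = (\kermatrix_t+\gamma\:I_t)^{-1/2}\kermatrix_t^{1/2}(\:I_t - \selmatrix_t\selmatrix_t^\transp)\kermatrix_t^{1/2}(\kermatrix_t+\gamma\:I_t)^{-1/2},
\end{align*}
and taking the operator norm of both sides gives the first equality.

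For the second equality, the plan is to bring both the ``kernel'' form and the ``feature'' form into a common spectral representation via the SVD $\featkermatrix_t = \:V_t\:\Sigma_t\:U_t^\transp$, and then reduce both operator norms to the same scalar by absorbing orthogonal conjugations. Substituting the SVD into $\kermatrix_t = \featkermatrix_t^\transp\featkermatrix_t$ and $\featkermatrix_t\featkermatrix_t^\transp$, a short computation gives
\begin{align*}
(\kermatrix_t + \gamma\:I_t)^{-1/2}\kermatrix_t^{1/2} &= \:U_t\:D\:U_t^\transp,\\
(\featkermatrix_t\featkermatrix_t^\transp + \gamma\:I_D)^{-1/2}\featkermatrix_t &= \:V_t\:\wt{D}\:U_t^\transp,
\end{align*}
where $\:D \in \Real^{t \times t}$ is the diagonal matrix with entries $\sigma_i/\sqrt{\sigma_i^2 + \gamma}$, and $\:\wt{D} \in \Real^{D \times t}$ is $\:D$ stacked above a $(D-t) \times t$ zero block.

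Plugging these into the two norms in the claim and using that the spectral norm is invariant under left- and right-multiplication by the orthogonal matrices $\:U_t$ and $\:V_t$ reduces each side to the norm of a matrix whose nonzero part is exactly $\:D\:U_t^\transp(\:I_t - \selmatrix_t\selmatrix_t^\transp)\:U_t\:D$; on the feature side, the extra trailing zero rows and columns coming from $\:\wt{D}$ only zero-pad this $t \times t$ matrix into a $D \times D$ one and contribute no new nonzero singular values. Both norms therefore coincide with $\|\:D\:U_t^\transp(\:I_t - \selmatrix_t\selmatrix_t^\transp)\:U_t\:D\|_2$, finishing the proof. I do not expect a significant obstacle; the only bookkeeping is being careful about the $D \times t$ versus $t \times t$ dimensions, and Proposition~\ref{sec:app-linalg1} (the push-through identity) provides an alternative route via the fact that $A^\transp A = B^\transp B$ for $A = (\kermatrix_t+\gamma\:I_t)^{-1/2}\kermatrix_t^{1/2}$ and $B = (\featkermatrix_t\featkermatrix_t^\transp + \gamma\:I_D)^{-1/2}\featkermatrix_t$, combined with a polar-decomposition argument to pass the norm equality through the middle factor $\:I_t - \selmatrix_t\selmatrix_t^\transp$.
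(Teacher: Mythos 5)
Your proof is correct and follows essentially the same route as the paper's: both pass to the SVD $\featkermatrix_t = \:V_t\:\Sigma_t\:U_t^\transp$, use orthogonal invariance of the spectral norm to strip off the $\:U_t$ and $\:V_t$ conjugations, and observe that the feature-side matrix is the kernel-side $t\times t$ core zero-padded to $D\times D$, so both reduce to $\|\:D\:U_t^\transp(\:I_t-\selmatrix_t\selmatrix_t^\transp)\:U_t\:D\|_2$. The alternative you sketch via $A^\transp A = B^\transp B$ and a polar-decomposition argument is also valid but is not what the paper does.
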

\begin{proof}[Proof of Lemma \ref{lem:concentration-equivalence}]
Using the SVD decomposition,
\begin{align*}
&\norm{(\kermatrix_t + \gamma\:I_t)^{-1/2}\kermatrix_t^{1/2}(\:I_t - \:S_t\:S_t^\transp)\kermatrix_t^{1/2}(\kermatrix_t + \gamma\:I_t)^{-1/2}}{2}\\
&=\norm{(\:U_t\:\Lambda_t\:U^\transp + \gamma\:U_t\:I_t\:U_t^\transp)^{-1/2}\:U_t\:\Lambda_t^{1/2}\:U^\transp(\:I_t - \:S_t\:S_t^\transp)\:U_t\:\Lambda_t^{1/2}\:U^\transp(\:U_t\:\Lambda_t\:U^\transp + \gamma\:U_t\:I_t\:U_t^\transp)^{-1/2}}{2}\\
&=\norm{\:U_t(\:\Lambda_t + \gamma\:I_t)^{-1/2}\:\Lambda_t^{1/2}\:U^\transp(\:I_t - \:S_t\:S_t^\transp)\:U_t\:\Lambda_t^{1/2}(\:\Lambda_t + \gamma\:I_t)^{-1/2}\:U^\transp}{2}\\
&=\norm{\:U_t(\:\Lambda_t + \gamma\:I_t)^{-1/2}\:\Lambda_t^{1/2}\:U^\transp(\:I_t - \:S_t\:S_t^\transp)\:U_t\:\Lambda_t^{1/2}(\:\Lambda_t + \gamma\:I_t)^{-1/2}\:U^\transp}{2}\\
&=\norm{(\:\Lambda_t + \gamma\:I_t)^{-1/2}\:\Lambda_t^{1/2}\:U^\transp(\:I_t - \:S_t\:S_t^\transp)\:U_t\:\Lambda_t^{1/2}(\:\Lambda_t + \gamma\:I_t)^{-1/2}}{2}\\
&=\norm{(\:\Sigma_t\:\Sigma_t^\transp + \gamma\:I_D)^{-1/2}\:\Sigma_t\:U^\transp(\:I_t - \:S_t\:S_t^\transp)\:U_t\:\Sigma_t^\transp(\:\Sigma_t\:\Sigma_t^\transp + \gamma\:I_D)^{-1/2}}{2}\\
&=\norm{\:V_t(\:\Sigma_t\:\Sigma_t^\transp + \gamma\:I_D)^{-1/2}\:\Sigma_t\:U^\transp(\:I_t - \:S_t\:S_t^\transp)\:U_t\:\Sigma_t^\transp(\:\Sigma_t\:\Sigma_t^\transp + \gamma\:I_D)^{-1/2}\:V_t^\transp}{2}\\
&=\norm{\:V_t(\:\Sigma_t\:U_t\:U_t^\transp\:\Sigma_t^\transp + \gamma\:I_D)^{-1/2}\:V_t^\transp\:V_t\:\Sigma_t\:U^\transp(\:I_t - \:S_t\:S_t^\transp)\:U_t\:\Sigma_t^\transp\:V^\transp\:V(\:\Sigma_t\:U_t\:U_t^\transp\:\Sigma_t^\transp + \gamma\:I_D)^{-1/2}\:V_t^\transp}{2}\\
&= \norm{(\featkermatrix_t\featkermatrix_t^\transp + \gamma\:I_D)^{-1/2}\featkermatrix_t(\:I_t - \:S_t\:S_t^\transp)\featkermatrix_t^\transp(\featkermatrix_t\featkermatrix_t^\transp + \gamma\:I_D)^{-1/2}}{2}.
\end{align*}
\end{proof}

\section{Ridge Leverage Scores and Effective Dimension (Proof of Lemma~\ref{lem:monotone-decrease-prob} and \ref{lem:monotone-decrease-prob-merge})}\label{sec:app-rls-estimation-proofs}

From \citet[Lem. 1]{calandriello2016analysis} we know that $\tau_{t,i} \leq \tau_{t-1,i}$ and
$\deff{\gamma}_t \geq \deff{\gamma}_{t-1}$. We will now prove the lower bound
$\tau_{t,i} \geq \tau_{t-1,i}/(\tau_{t-1,1}+1)$.

Considering the definition of $\tau_{t,i}$ in terms of $\bphi_i$ and $\featkermatrix_t$, and applying the Sherman-Morrison formula we obtain
\begin{align*}
\tau_{t,i} &= \bphi_i^\transp(\featkermatrix_t\featkermatrix_t^\transp + \gamma\:I)^{-1}\bphi_i
= \bphi_i^\transp(\featkermatrix_{t-1}\featkermatrix_{t-1}^\transp + \bphi_t\bphi_t^\transp + \gamma\:I)^{-1}\bphi_i\\
&= \bphi_i^\transp(\featkermatrix_{t-1}\featkermatrix_{t-1}^\transp  + \gamma\:I)^{-1}\bphi_i
- \frac{\bphi_i^\transp(\featkermatrix_{t-1}\featkermatrix_{t-1}^\transp  + \gamma\:I)^{-1}\bphi_t\bphi_t^\transp(\featkermatrix_{t-1}\featkermatrix_{t-1}^\transp  + \gamma\:I)^{-1}\bphi_i}{1+\bphi_t^\transp(\featkermatrix_{t-1}\featkermatrix_{t-1}^\transp  + \gamma\:I)^{-1}\bphi_t}\\
&= \tau_{t-1,i}
- \frac{\bphi_i^\transp(\featkermatrix_{t-1}\featkermatrix_{t-1}^\transp  + \gamma\:I)^{-1}\bphi_t\bphi_t^\transp(\featkermatrix_{t-1}\featkermatrix_{t-1}^\transp  + \gamma\:I)^{-1}\bphi_i}{1+\bphi_t^\transp(\featkermatrix_{t-1}\featkermatrix_{t-1}^\transp  + \gamma\:I)^{-1}\bphi_t}\cdot
\end{align*}
Let
\begin{align*}
\:x = (\featkermatrix_{t-1}\featkermatrix_{t-1}^\transp  + \gamma\:I)^{-1/2}\bphi_i
\quad \text{and} \quad  \:y = (\featkermatrix_{t-1}\featkermatrix_{t-1}^\transp  + \gamma\:I)^{-1/2}\bphi_t.
\end{align*}
Then $\tau_{t,i}/\tau_{t-1,i}$ is equal to
\begin{align*}
\frac{\tau_{t,i}}{\tau_{t-1,i}} = 1
- \frac{(\bphi_t^\transp(\featkermatrix_{t-1}\featkermatrix_{t-1}^\transp  + \gamma\:I)^{-1}\bphi_i)^2}{(1+\bphi_t^\transp(\featkermatrix_{t-1}\featkermatrix_{t-1}^\transp  + \gamma\:I)^{-1}\bphi_t)\bphi_i^\transp(\featkermatrix_{t-1}\featkermatrix_{t-1}^\transp  + \gamma\:I)^{-1}\bphi_i}
    &= 1 -  \frac{(\:y^\transp\:x)^{2}}{ \left(1 + \:y^\transp\:y\right)\:x^\transp\:x}\cdot
\end{align*}
Defining the cosine between $\:y$ and $\:x$
as $\text{cos}(\:y,\:x) = \:y^\transp\:x/(\normsmall{\:x}\normsmall{\:y})$,
we have that
\begin{align*}
1 -  \frac{(\:y^\transp\:x)^{2}}{ \left(1 + \:y^\transp\:y\right)\:x^\transp\:x}
= 1 -  \frac{\:y^\transp\:y\:x^\transp\:x\text{cos}(\:y,\:x)^2}{ \left(1 + \:y^\transp\:y\right)\:x^\transp\:x}
= 1 - \frac{\normsmall{\:y}^2}{ 1 + \normsmall{\:y}^2}\text{cos}(\:y,\:x)^2,
\end{align*}
where $\frac{\normsmall{\:y}^2}{ 1 + \normsmall{\:y}^2}$ depends only on the
norm of $\:y$ and not its direction, and $\text{cos}(\:y,\:x)$ depends only
on the direction of $\:y$ and is
maximized when $\:y = \:x$. Therefore,
\begin{align*}
\frac{\tau_{t+1,i}}{\tau_{t,i}}
= 1 -  \frac{(\:y^\transp\:x)^{2}}{ \left(1 + \:y^\transp\:y\right)\:x^\transp\:x}
= 1 - \frac{\normsmall{\:y}^2}{ 1 + \normsmall{\:y}^2}\text{cos}(\:y,\:x)^2
\geq 1 - \frac{\normsmall{\:x}^2}{ 1 + \normsmall{\:x}^2}
= \frac{1}{ 1 + \normsmall{\:x}^2}
= \frac{1}{1+\tau_{t,i}}\CommaBin
\end{align*}
which concludes the proof of Lem.~\ref{lem:monotone-decrease-prob}.

For Lem.~\ref{lem:monotone-decrease-prob-merge}, the first point $\tau_{i,\dataset} \geq \tau_{i,\dataset \cup \dataset'}$ can be easily proven by choosing $\dataset$ to construct a kernel matrix $\kermatrix_{\dataset}$, and then invoke
Lem.~\ref{lem:monotone-decrease-prob} as we add one sample at a time
from $\dataset'$.
Also as easily for $\deff{\gamma}_{\dataset} + \deff{\gamma}_{\dataset'}$
we have
\begin{align*}
    \deff{\gamma}_{\dataset} + \deff{\gamma}_{\dataset'}
    \leq 2\max\{\deff{\gamma}_{\dataset}; \deff{\gamma}_{\dataset'}\}
    \leq 2\max\{\deff{\gamma}_{\dataset \cup \dataset'}; \deff{\gamma}_{\dataset \cup \dataset'}\}
    = 2\deff{\gamma}_{\dataset \cup \dataset'}.
\end{align*}
Finally, we prove the other side of the inequality for $\deff{\gamma}_{\dataset} + \deff{\gamma}_{\dataset}$.
Let $\featkermatrix_{\dataset}, \featkermatrix_{\dataset'}$ be the matrices
constructed using the feature vectors of the samples in $\dataset$ and
$\dataset'$ respectively. Then,
\begin{align*}
    \deff{\gamma}_{\dataset} + \deff{\gamma}_{\dataset'}
    &= \sum_{i \in \dataset} \tau_{\dataset, i} + \sum_{i \in \dataset'} \tau_{\dataset',i}
    = \sum_{i \in \dataset} \:\phi_i^\transp(\featkermatrix_{\dataset}\featkermatrix_{\dataset}^\transp + \gamma\:I_{D})^{-1}\:\phi_i + \sum_{i \in \dataset'} \:\phi_i^\transp(\featkermatrix_{\dataset'}\featkermatrix_{\dataset'}^\transp + \gamma\:I_{D})^{-1}\:\phi_i\\
    &\geq \sum_{i \in \dataset} \:\phi_i^\transp(\featkermatrix_{\dataset \cup \dataset'}\featkermatrix_{\dataset \cup \dataset'}^\transp + \gamma\:I_{D})^{-1}\:\phi_i + \sum_{i \in \dataset'} \:\phi_i^\transp(\featkermatrix_{\dataset \cup \dataset'}\featkermatrix_{\dataset \cup \dataset'}^\transp + \gamma\:I_{D})^{-1}\:\phi_i\\
    &= \sum_{i \in \dataset \cup \dataset'} \:\phi_i^\transp(\featkermatrix_{\dataset \cup \dataset'}\featkermatrix_{\dataset \cup \dataset'}^\transp + \gamma\:I_{D})^{-1}\:\phi_i
    =\sum_{i \in \dataset \cup \dataset'} \tau_{\dataset \cup \dataset',i}
    = \deff{\gamma}_{\dataset \cup \dataset'}.
    \end{align*}

\section{Ridge Leverage Scores Estimation (Proof of Lemma~\ref{lem:fast-rls} and \ref{lem:fast-rls-merge})}\label{sec:app-rls-estimation-proofs}

We begin with a convenient reformulation of the ridge leverage scores,
\begin{align*}
\tau_{t,i}
&= \:e_{t,i}^\transp\kermatrix_t(\kermatrix_t + \gamma\:I_t)^{-1}\:e_{t,i}
= \:e_{t,i}^\transp\featkermatrix_t^\transp\featkermatrix_t(\featkermatrix_t^\transp\featkermatrix_t + \gamma\:I_t)^{-1}\:e_{t,i}\\
&= \:e_{t,i}^\transp\featkermatrix_t^\transp(\featkermatrix_t\featkermatrix_t^\transp + \gamma\:I_D)^{-1}\featkermatrix_t\:e_{t,i}
= \:\phi_{i}^\transp(\featkermatrix_t\featkermatrix_t^\transp + \gamma\:I_D)^{-1}\:\phi_{i}.
\end{align*}
This formulation, combined with Def.~\ref{def:eps-acc-dict}, suggests
$\:\phi_{i}^\transp(\featkermatrix_t\:S_t\:S_t^\transp\featkermatrix_t^\transp + \gamma\:I_D)^{-1}\:\phi_{i}$
as an estimator for $\tau_{t,i}$. However, at step $t$, we only have access to an $\varepsilon$-accurate dictionary
\wrt$\featkermatrix_{t-1}$ and not \wrt$\featkermatrix_{t}$. Therefore, we augment it with $(t,1,\wb{q})$ to construct
$\wb{\coldict}_t$ and the corresponding $\wb{\selmatrix}_t$, which will have
$[\wb{\selmatrix}_t]_{t,t} = 1$.
We will now show how to implement this estimator efficiently.
From Prop.~\ref{sec:app-linalg1}, we apply Woodbury matrix identity
with $\:A = \gamma\:I$, $\:B = \:I$ and $\:C = \featkermatrix_t\wb{\:S}_t$,
\begin{align*}
\atau_{t,i}
&= (1-\varepsilon)\:\phi_{i}^\transp(\featkermatrix_t\wb{\:S}_t\wb{\:S}_t^\transp\featkermatrix_t^\transp + \gamma\:I_D)^{-1}\:\phi_{i}\\
&= (1-\varepsilon)\:\phi_{i}^\transp(\featkermatrix_t\wb{\:S}_t\:I_t\wb{\:S}_t^\transp\featkermatrix_t^\transp + \gamma\:I_D)^{-1}\:\phi_{i}\\
(Prop.~\ref{sec:app-linalg1}) &= (1-\varepsilon)\:\phi_{i}^\transp\left(\frac{1}{\gamma}\:I_{D} - \frac{1}{\gamma^2}\featkermatrix_t\wb{\:S}_t\left(\frac{1}{\gamma}\wb{\:S}_t^\transp\featkermatrix_t^\transp\featkermatrix_t\wb{\:S}_t + \:I_t\right)^{-1}\wb{\:S}_t^\transp\featkermatrix_t^\transp\right)\:\phi_{i}\\
 &= \frac{(1-\varepsilon)}{\gamma}\:\phi_{i}^\transp\left(\:I_{D} - \featkermatrix_t\wb{\:S}_t\left(\wb{\:S}_t^\transp\featkermatrix_t^\transp\featkermatrix_t\wb{\:S}_t + \gamma\:I_t\right)^{-1}\wb{\:S}_t^\transp\featkermatrix_t^\transp\right)\:\phi_{i}\\
 &= \frac{(1-\varepsilon)}{\gamma}\left(\:\phi_{i}^\transp\:\phi_{i} - \:\phi_{i}^\transp\featkermatrix_t\wb{\:S}_t\left(\wb{\:S}_t^\transp\featkermatrix_t^\transp\featkermatrix_t\wb{\:S}_t + \gamma\:I_t\right)^{-1}\wb{\:S}_t^\transp\featkermatrix_t^\transp\:\phi_{i}\right)\\
&= \frac{(1-\varepsilon)}{\gamma}\left(k_{i,i} - \:k_{t,i}\wb{\:S}_t(\wb{\:S}_t^\transp\kermatrix_t\wb{\:S}_t + \gamma\:I_t)^{-1}\wb{\:S}_t^\transp\:k_{t,i}\right),
\end{align*}
which is the estimator defined in Eq.~\ref{eq:rls-estimator}.

We can generalize this estimator to the case where instead of using
a single dictionary and fresh data to estimate $\tau_{t,i}$,
we are using $k$ $\varepsilon$-accurate dictionaries $\coldict_k$.
Given disjoint datasets $\{\dataset_i\}_{i=1}^k$ with associated
feature matrices~$\featkermatrix_i$. From each dataset, construct
an $\varepsilon$-accurate dictionary $\coldict_i$, with its associated
selection matrix $\selmatrix_i$.

To estimate the RLS $\tau_i$ of point $i$ w.r.t.\@ the whole
dataset $\dataset = \cup_{j=1}^k\dataset_j$, and corresponding feature matrix $\featkermatrix$, we set the estimator to be
\begin{align*}
\atau_{i}
&= (1-\varepsilon)\:\phi_{i}^\transp\left(\sum_{j=1}^k\featkermatrix_j\:S_j\:S_j^\transp\featkermatrix_j^\transp + (1+(k-1)\varepsilon)\gamma\:I_D\right)^{-1}\:\phi_{i}.
\end{align*}

\textbf{Part 1: accuracy of the RLS estimator $\bm{\wt{\tau}_{i}}$.} 
Since each of the dictionaries $\coldict_{i}$ used to generate $\:S_{i}$ is $\varepsilon$-accurate,
we can use the equivalence
from Lem.~\ref{lem:concentration-equivalence}, 
\begin{align*}
\norm{\:P_{i} - \wt{\:P}_{i}}{2}
&= \norm{(\featkermatrix_{i}\featkermatrix_{i}^\transp + \gamma\:I_D)^{-1/2}(\featkermatrix_{i}\featkermatrix_{i}^\transp - \featkermatrix_{i}\:S_{i}\:S_{i}^\transp\featkermatrix_{i}^\transp)(\featkermatrix_{i}\featkermatrix_{i}^\transp + \gamma\:I_D)^{-1/2}}{2} \leq \varepsilon,
\end{align*}
which implies that 
\begin{align*}
(1-\vareps)\featkermatrix_{i}\featkermatrix_{i}^\transp - \varepsilon\gamma\:I_D \preceq \featkermatrix_{i}\:S_{i}\:S_{i}^\transp\featkermatrix_{i}^\transp \preceq (1+\vareps)\featkermatrix_{i}\featkermatrix_{i}^\transp + \varepsilon\gamma\:I_D.
\end{align*}
Therefore, we have
\begin{align*}
\atau_{i}
&= (1-\varepsilon)\:\phi_{i}^\transp\left(\sum_{j=1}^k\featkermatrix_j\:S_j\:S_j^\transp\featkermatrix_j^\transp + (1+(k-1)\varepsilon)\gamma\:I_D\right)^{-1}\:\phi_{i}\\
&\leq (1-\varepsilon)\:\phi_{i}^\transp\left(\sum_{j=1}^k\left((1-\varepsilon)\featkermatrix_j\featkermatrix_j^\transp-\varepsilon\gamma\:I_D\right) + (1+(k-1)\varepsilon)\gamma\:I_D\right)^{-1}\:\phi_{i}\\
&\leq (1-\varepsilon)\:\phi_{i}^\transp\left((1-\varepsilon)\featkermatrix\featkermatrix^\transp - k\varepsilon\gamma\:I_D + (1+(k-1)\varepsilon)\gamma\:I_D\right)^{-1}\:\phi_{i}\\
&= (1-\varepsilon)\:\phi_{i}^\transp\left((1-\varepsilon)(\featkermatrix\featkermatrix^\transp + \gamma\:I_D)\right)^{-1}\:\phi_{i}
= \frac{(1-\varepsilon)}{(1-\varepsilon)}\:\phi_{i}^\transp\left(\featkermatrix\featkermatrix^\transp + \gamma\:I_D\right)^{-1}\:\phi_{i} = \tau_i,
\end{align*}
and
\begin{align*}
\atau_{i}
&= (1-\varepsilon)\:\phi_{i}^\transp\left(\sum_{j=1}^k\featkermatrix_j\:S_j\:S_j^\transp\featkermatrix_j^\transp + (1+(k-1)\varepsilon)\gamma\:I_D\right)^{-1}\:\phi_{i}\\
&\geq (1-\varepsilon)\:\phi_{i}^\transp\left(\sum_{j=1}^k\left((1+\varepsilon)\featkermatrix_j\featkermatrix_j^\transp + \varepsilon\gamma\:I_D\right) + (1+(k-1)\varepsilon)\gamma\:I_D\right)^{-1}\:\phi_{i}\\
&= (1-\varepsilon)\:\phi_{i}^\transp\left((1+\varepsilon)\featkermatrix\featkermatrix^\transp + k\varepsilon\gamma\:I_D + (1+(k-1)\varepsilon)\gamma\:I_D\right)^{-1}\:\phi_{i}\\
&= (1-\varepsilon)\:\phi_{i}^\transp\left((1+\varepsilon)\featkermatrix\featkermatrix^\transp +(1 + (2k -1)\varepsilon) \gamma\:I_D)\right)^{-1}\:\phi_{i}\\
&\geq (1-\varepsilon)\:\phi_{i}^\transp\left((1+(2k -1)\varepsilon)\featkermatrix\featkermatrix^\transp +(1 + (2k -1)\varepsilon) \gamma\:I_D)\right)^{-1}\:\phi_{i}\\
&= \frac{(1-\varepsilon)}{(1+(2k-1)\varepsilon)}\:\phi_{i}^\transp\left(\featkermatrix\featkermatrix^\transp - \gamma\:I_D\right)^{-1}\:\phi_{i} = \frac{(1-\varepsilon)}{(1+(2k-1)\varepsilon)}\tau_i.
\end{align*}
Then, we can instantiate this result with $k=1$ to prove the accuracy claim in Lem.~\ref{lem:fast-rls},
and with $k=2$ to prove the accuracy claim in Lem.~\ref{lem:fast-rls-merge}.

\textbf{Part 2: accuracy of $\bm{\min\left\{\atau_{t},\; \atau_{t-1} \right\}}$.}
To simplify the notation, for this part of the proof we indicate with
$\:\tau_{t} \leq \:\tau_{t-1}$ that for each $i \in \{1,\dots,t-1\}$ we have $\tau_{t,i} \leq \tau_{t-1,i}$.
From Lem.~\ref{lem:monotone-decrease-prob}, we know that $\tau_{t-1}  \geq \tau_{t}$.
Given $\alpha$-accurate $\atau_{t}$ and $\atau_{t-1}$ we have the upper bound
\begin{align*}
    \min\left\{\atau_{t},\; \atau_{t-1} \right\}
    \leq \min\left\{\tau_{t},\; \tau_{t-1} \right\}
    = \tau_{t},
\end{align*}
and the lower bound,
\begin{align*}
    \min\left\{\atau_{t},\; \atau_{t-1} \right\}
    \geq \frac{1}{\alpha}\min\left\{\tau_{t},\; \tau_{t-1} \right\}
     = \frac{1}{\alpha}\:\tau_{t},
\end{align*}
which combined gives us $\frac{1}{\alpha}\:\tau_{t}
    \leq \min\left\{\atau_{t},\; \atau_{t-1} \right\}
\leq \tau_{t}$ as required by the definition of $\alpha$-accuracy.

\vspace{-0.05in}
\section{Proof of Thm.~\ref{thm:sequential-alg-main} and Thm.~\ref{thm:parallel-alg-main}}
\vspace{-0.05in}
From the discussion of Thm.~\ref{thm:parallel-alg-main}, we know that
running \sequentialalg is equivalent to running \parallelalg on a specific
(fully unbalanced) merge tree. Therefore, we just prove Thm.~\ref{thm:parallel-alg-main},
and invoke it on this tree to prove Thm.~\ref{thm:sequential-alg-main}.

We begin by describing more in detail some notation introduced in the main paper
and necessary for this proof.

\begin{figure}[t]
\begin{tabular}{m{0.33\textwidth}|m{0.33\textwidth}|m{0.33\textwidth}}
\subfigure[arbitrary tree]{\includegraphics[height=0.45\textwidth]{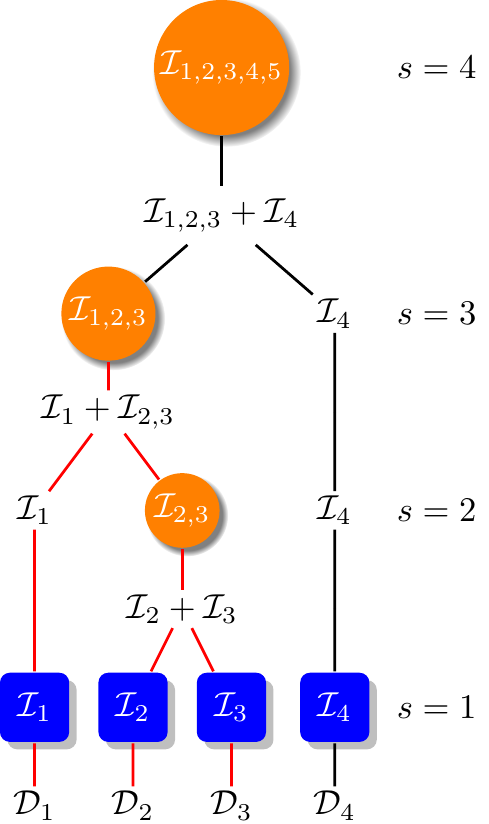} \label{fig:merge-trees-exp}}
& \subfigure[sequential tree]{\includegraphics[height=0.45\textwidth]{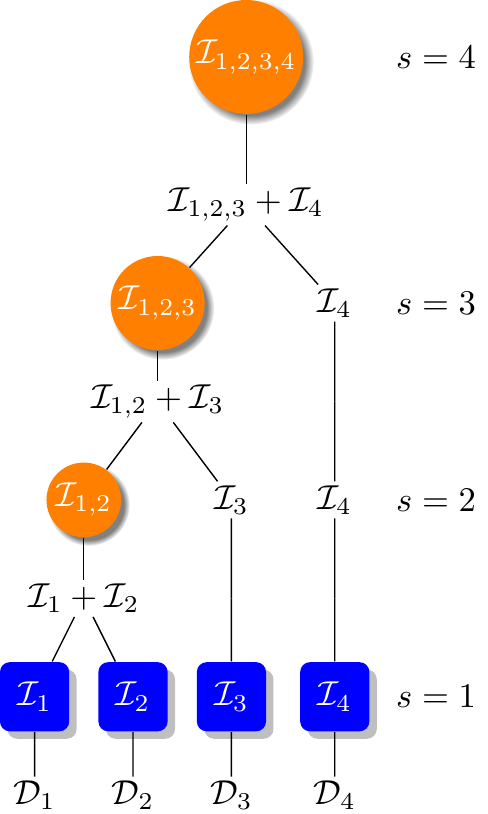}}
& \subfigure[minimum depth tree]{\includegraphics[height=0.45\textwidth]{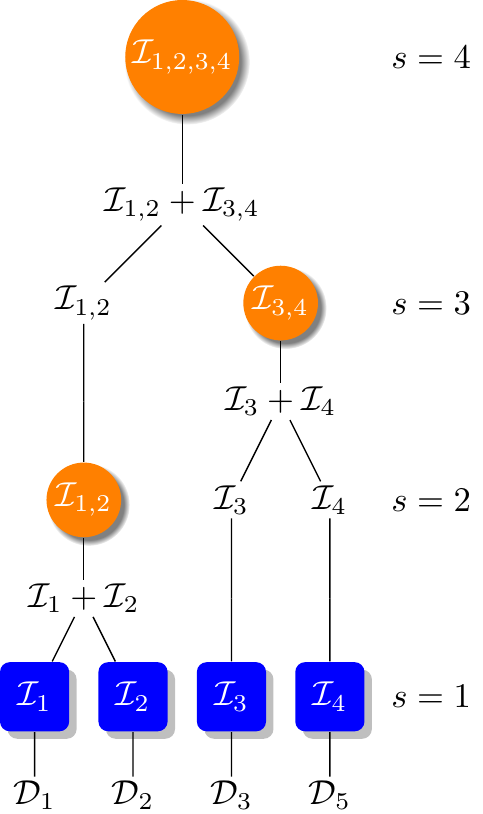}}
\end{tabular}
\caption{Merge trees for Algorithm \ref{alg:distributedalg}.}\label{fig:merge-trees}
\end{figure}

\textbf{Merge trees} We first formalize the random process induced by Alg.~\ref{alg:distributedalg}.

We partition $\dataset$ into $k$ disjoint sub-datasets $\dataset_i$
of size $n_i$, such that $\dataset = \cup_{i=1}^k\dataset_i$.
For each dataset $\dataset_i$, we construct an initial dictionary
$\coldict_{\{1,i\}} = \{(j,\wt{p}_{0,i} = 1, q_{0,i} = \wb{q}) : j \in \dataset_i\}$
by inserting all points from $\dataset_i$ into $\coldict_{\dataset_i}$ with
weight $\wt{p}_{0,i} = 1$ and number of copies $q_{0,i} = \wb{q}$.
It is easy to see that $\coldict_{\{1,i\}}$ is an $\epsilon$-accurate dictionary,
and we can split the data in small enough chunks to make sure that it can be
easily stored and manipulated in memory. Alternatively, if we want our
initial dictionaries to be small and cannot choose the size of
$\dataset_i$, we can run Alg.~\ref{alg:sequentialalg}
on $\dataset_i$ to generate $\coldict_{\{1,i\}}$, and the following proof
will remain valid.
Regardless of construction, the initial dictionaries $\coldict_{\{1,i\}}$ are included into the dictionary pool $\dictpool_{1}$.

At iteration $h$, the inner loop of Alg.~\ref{alg:distributedalg}
arbitrarily chooses  two dictionaries from
$\dictpool_h$ and merges them into a new dictionary.
Any arbitrary sequence of merges can be described by a full binary tree, i.e.,
a binary tree where each node is either a leaf or has exactly two children.
Figure \ref{fig:merge-trees} shows several different merge trees corresponding
to different choices for the order of the merges.
Note that starting from $k$ leaves, a full binary tree will always have exactly $k-1$ internal
nodes. Therefore, regardless of the structure of the merge tree, we can always
transform it into a tree of depth $k$, with all the initial dictionaries
$\coldict_{1,i}$ as leaves on its deepest layer.
After this transformation, we index the tree nodes using their height
(longest path from the node to a leaf, also defined as depth of the tree minus depth of the node),
where leaves have height 1 and the root has height $k$.
We can also see that at each layer, there is a single dictionary merge,
and the size of $\dictpool_h$ (number of dictionaries present at layer $h$)
is $|\dictpool_h| = k-h+1$.
Therefore, a node corresponding to a dictionary is uniquely identified with two indices $\{h,l\}$, where
$h$ is the height of the layer and $l\leq |\dictpool_h|$ is the index of the node in the
layer. For example, in Figure~\ref{fig:merge-trees-exp}, the node containing
$\coldict_{1,2,3}$ is indexed as $\{3,1\}$, and the highest
node containing $\coldict_4$ is indexed as $\{3,2\}$.

We also define the dataset
$\dataset_{\{h,l\}}$ as the union of all sub-datasets $\dataset_{l'}$ that are
 reachable from node $\{h,l\}$ as leaves.
For example, in Fig.~\ref{fig:merge-trees-exp}, dictionary $\coldict_{1,2,3}$
in node $\{3,1\}$ is constructed starting from all points in $\dataset_{\{3,1\}} = \dataset_1 \cup \dataset_2 \cup \dataset_3$,
where we highlight in red the descendant tree.
We now define $\kermatrix^h$ as the block diagonal kernel matrix
where each diagonal block $\kermatrix_{\{h,l\}}$ is constructed on
$\dataset_{\{h,l\}}$. Again, from Fig.~\ref{fig:merge-trees}, $\kermatrix^3$ is a $n \times n$ matrix with two blocks on the diagonal, a
first $(n_1 + n_2 + n_3) \times (n_1 + n_2 + n_3)$ block $\kermatrix_{3,1}$ constructed
on $\dataset_{\{3,1\}} = \dataset_1 \cup \dataset_2 \cup \dataset_3$, and a second $n_4 \times n_4$ block
$\kermatrix_{3,2}$ constructed on $\dataset_{\{3,2\}} = \dataset_4$. Similarly, we can 
adapt Def.~\ref{def:eps-acc-dict} to define $\:P^h$ as a block diagonal projection matrix,
where block $\:P_{\{h,l\}}$ is defined using $\kermatrix_{\{h,l\}}$,
and block diagonal $\wt{\:P}^h$,
where block $\wt{\:P}_{\{h,l\}}$ is defined using $\kermatrix_{\{h,l\}}$
and $\coldict_{\{h,l\}}$.

\textbf{The statement.}
Since $\:P^h - \wt{\:P}^h$ is block diagonal, we have that a bound on its largest eigenvalue implies an equal bound on each matrix on the diagonal, i.e.,
\begin{align*}
\normsmall{\:P^h - \wt{\:P}^h} = \max_{l} \normsmall{\:P_{\{h,l\}} - \wt{\:P}_{\{h,l\}}} \leq \varepsilon
\Rightarrow \normsmall{\:P_{\{h,l\}} - \wt{\:P}_{\{h,l\}}} \leq \varepsilon
\end{align*}
for all blocks $l$ on the diagonal, and since each block corresponds to
a dictionary $\coldict_{\{h,l\}}$, this means that if $\normsmall{\:P^h - \wt{\:P}^h} \leq \varepsilon$,
all dictionaries at layer $l$ are $\varepsilon$-accurate approximation
of their respective represented datasets.
Our goal is to show
\begin{align}\label{eq:distri-theorem-goal}
    &\probability\bigg(\exists  h\in\{1,\ldots,k\}: \normsmall{\:P^h - \wt{\:P}^h}_2 \geq \varepsilon \;\cup\; \max_{l=1,\dots,|\dictpool_h|}|\coldict_{\{h,l\}}| \geq 3\wb{q}\deff{\gamma}_{\{h,l\}}\bigg)\nonumber\\
     &= \probability\bigg(\exists  h\in\{1,\ldots,k\} : \underbrace{\left(\max_{l=1,\dots,|\dictpool_h|}\normsmall{\:P_{\{h,l\}} - \wt{\:P}_{\{h,l\}}}_2\right) \geq \varepsilon}_{A_h} \;\cup\; \underbrace{\left(\max_{l=1,\dots,|\dictpool_h|}|\coldict_{\{h,l\}}| \geq 3\wb{q}\deff{\gamma}_{\{h,l\}}\right)}_{B_h}\bigg) \leq \delta,
\end{align}
where event $A_h$ refers to the case when some dictionary $\coldict_{\{h,l\}}$ at an intermediate layer $h$ fails to accurately approximate $\kermatrix_{\{h,l\}}$ and event $B_h$ considers the case when the memory requirement is not met (i.e., too many points are kept in one of the dictionaries $\coldict_{\{h,l\}}$ at a certain layer $h$). We can conveniently decompose the previous joint (negative) event into two separate conditions as
\begin{align*}&\probability\bigg(\bigcup_{h = 1}^{k} A_h \cup B_h\bigg) =\probability\left(\left\{\bigcup_{h = 1}^{k} A_h\right\} \cup \left\{ \bigcup_{h = 1}^{k} B_h\right\}\right)
=\probability\left(\left\{\bigcup_{h = 1}^{k} A_h\right\} \right) +\probability\left(\left\{\bigcup_{h = 1}^{k} B_h\right\} \cap \left\{ \bigcup_{h = 1}^{k} A_h\right\}^{\complement}\right) \\
&=\probability\left(\left\{\bigcup_{h = 1}^{k} A_h\right\} \right) +\probability\left(\left\{\bigcup_{h = 1}^{k} B_h\right\} \cap \left\{ \bigcap_{h = 1}^{k} A_h^{\complement}\right\}\right)
=\probability\left(\left\{\bigcup_{h = 1}^{k} A_h\right\} \right) +\probability\left(\bigcup_{h = 1}^{k} \left\{B_h \cap \left\{ \bigcap_{h' = 1}^{k} A_{h'}^{\complement}\right\}\right\}\right).
\end{align*}
Applying this reformulation and a union bound we obtain
\begin{align}\label{eq:distri-theorem-goal}
    &\probability\bigg(\exists  h\in\{1,\ldots,k\}: \normsmall{\:P^h - \wt{\:P}^h}_2 \geq \varepsilon \;\cup\; \max_{l=1,\dots,|\dictpool_h|}|\coldict_{\{h,l\}}| \geq 3\wb{q}\deff{\gamma}_{\{h,l\}}\bigg)\nonumber\\
     &= \probability\left(\exists  h\in\{1,\ldots,k\} : \left(\max_{l=1,\dots,|\dictpool_h|}\normsmall{\:P_{\{h,l\}} - \wt{\:P}_{\{h,l\}}}_2\right) \geq \varepsilon\right)\nonumber\\
     &\quad\quad+ \probability\bigg(\exists  h\in\{1,\ldots,k\} : \max_{l=1,\dots,|\dictpool_h|}|\coldict_{\{h,l\}}| \geq 3\wb{q}\deff{\gamma}_{\{h,l\}}\cap \left\{\forall  h' \in \{1, \dots, h\} :  \normsmall{\:P^{h'} - \wt{\:P}^{h'}}_2 \leq \varepsilon\right\} \bigg)\nonumber\\
     &\leq \sum_{h = 1}^{k}\sum_{l=1}^{|\dictpool_h|} \probability\left(\normsmall{\:P_{\{h,l\}} - \wt{\:P}_{\{h,l\}}}_2 \geq \varepsilon \right)\nonumber\\
     &\quad\quad+\sum_{h = 1}^{k}\sum_{l=1}^{|\dictpool_h|}\probability\left(|\coldict_{\{h,l\}}| \geq 3\wb{q}\deff{\gamma}_{\{h,l\}} \cap \left\{\forall  h' \in \{1, \dots, h\} :  \normsmall{\:P^{h'} - \wt{\:P}^{h'}}_2 \leq \varepsilon\right\}\right) \leq \delta.
\end{align}
As discussed in Sect.~\ref{sec:sequential-alg}, the accuracy of the dictionary (first term in the previous bound) is guaranteed by the fact that given an $\varepsilon$-accurate dictionary we obtain RLS estimates which are at least a fraction of the true RLS, thus forcing the algorithm to sample each column \textit{enough}. On the other hand, the space complexity bound is achieved by exploiting the fact that RLS estimates are always upper-bounded by the true RLS, thus ensuring that Alg.~\ref{alg:distributedalg} does not oversample columns w.r.t.\ the sampling process following the exact RLS. 

In the reminder of the proof, we will show that both events happen with probability
smaller than $\delta/(2k^2)$. Since $|\dictpool_h| = k - h + 1$,
we have
\begin{align*}
\sum_{h = 1}^{k}\sum_{l=1}^{|\dictpool_h|}\frac{\delta}{2k^2} = \sum_{h = 1}^{k}(k - h + 1)\frac{\delta}{2k^2} = k(k+1)\frac{\delta}{4k^2} \leq k^2\frac{\delta}{2k^2} = \delta/2,
\end{align*}
and the union bound over all events is smaller than $\delta$.
The main advantage of splitting the failure probability as we did in Eq.~\ref{eq:distri-theorem-goal}
is that we can now analyze the processes that generated each $\:P_{\{h,l\}} - \wt{\:P}_{\{h,l\}}$
(and each dictionary $\coldict_{\{h,l\}}$) separately.
Focusing on a single node $\{h,l\}$ restricts our
problem on a well defined dataset $\dataset_{\{h,l\}}$,
where we can analyze the evolution of $\coldict_{\{h,l\}}$ sequentially.

\textbf{Challenges.}
Due to its sequential nature, the ``theoretical'' structure of the process generating $\:P_{\{h,l\}} -
\wt{\:P}_{\{h,l\}}$ is considerably complicated, where
each step in the sampling process (layer in the merge tree) is highly
correlated with the previous steps. \todod{fixed vs random tree} This prevents us from using concentration
inequalities for i.i.d.\ processes that are at the basis of the analysis of
uniform sampling~\cite{bach2013sharp} and the method proposed
by~\citet{alaoui2014fast}. As a result, we first show that the project error is
a martingale process. The main difficulty technical difficulty in analyzing how
the projection error evolves over iterations is that the projection matrices
change dimension every time a new point is processed. In fact, all of the matrices
$\:P_{\{h',l'\}} - \wt{\:P}_{\{h',l'\}}$ descending from $\:P_{\{h,l\}} -
\wt{\:P}_{\{h,l\}}$ have potentially a different size
since they are based on different
kernel matrices $\kermatrix_{\{h',l'\}}$. This requires a careful definition
of the martingale process to still use matrix concentration inequalities for
\textit{fixed-size} matrices (see Sect.~\ref{ss:setting.stage}). Another major
technical challenge is to control the variance of the martingale increments. In
fact, at each the projection error may increase by a quantity whose cumulative
variance can be arbitrarily large. As a result, a direct use of the Freedman
matrix inequality in Sect.~\ref{ssec:bounding_y} would not return an
accurate result. In order to provide a tighter bound on the total variance of
the martingale process of the projection error, we need to introduce an i.i.d.\
stochastically dominant process (Sect.~\ref{ssec:bounding_w} (step 2)), which
finally allows us to use an i.i.d.\ matrix concentration inequality to bound
the total variance (Sect.~\ref{ssec:bounding_w}-(step 5)). This finally leads
to the bound on the accuracy. The bound on the space complexity (Sect.~\ref{ssec:space-complexity})follows similar
(but simpler) steps.

\subsection{Bounding the projection error $\normsmall{\mathbf{P}_{\{h,l\}} - \wt{\mathbf{P}}_{\{h,l\}}}$}\label{ss:setting.stage}

\textbf{The sequential process.} Thanks to the union bound in Eq.~\ref{eq:distri-theorem-goal},
instead of having to consider the whole merge tree followed by
Alg.~\ref{alg:distributedalg}, we can focus on each individual node
$\{h,l\}$ and study the sequential process that generated its dictionary $\coldict_{\{h,l\}}$.
We will now map more clearly the actions
taken by Alg.~\ref{alg:distributedalg} to the process that
generated $\:P_{\{h,l\}} - \wt{\:P}_{\{h,l\}}$.
We begin by focusing on $\wt{\:P}_{\{h,l\}}$, which is a random matrix defined starting from
the fixed kernel matrix $\kermatrix_{\{h,l\}}$ and the random dictionary $\coldict_{\{h,l\}}$,
where the randomness influences both which points are included in $\coldict_{\{h,l\}}$,
and the weight with which they are added.
\todod{fixed vs random tree}

Note that since the merge tree is decided in advance, the dataset $\dataset_{\{h,l\}}$ is
not a random object, and is fixed for the whole process.
Consider now a point
$i \in \dataset_{\{h,l\}}$. Since the starting datasets in the leaves are disjoint,
there is a single path in the tree, with length $h$, from the leaves to $\{h,l\}$.
This means that for all $s<h$, we can properly define a unique $\wt{p}_{s,i}$
and $q_{s,i}$ associated with that point. More in detail, if at layer $s$ point $i$ is present
in $\dataset_{\{s,l'\}}$, it means that either (1) Alg.~\ref{alg:distributedalg}
used $\coldict_{\{s,l'\}}$ to compute $\wt{p}_{s,i}$, and $\wt{p}_{s,i}$ to compute $q_{s,i}$,
or (2) at layer $h$ Alg.~\ref{alg:distributedalg} did not have any merge scheduled
for point $i$, and we simply propagate $\wt{p}_{s,i} = \wt{p}_{s-1,i}$ and $q_{s,i} = q_{s-1,i}$.
Consistently with the algorithm, we initialize
$\wt{p}_{0,i} = 1$ and $q_{0,i}=\wb{q}$.

Denote $\nhl_{\{h,l\}} = |\dataset_{\{h,l\}}|$ so that we can use index $i \in [\nhl_{\{h,l\}}]$ to index all points in
$\dataset_{\{h,l\}}$. Given the symmetric matrix $\concmat = (\kermatrix_{\{h,l\}} + \gamma\:I)^{-1/2}\kermatrix_{\{h,l\}}^{1/2}$ with its $i$-th column $\concvec_{i} = (\kermatrix_{\{h,l\}} + \gamma\:I)^{-1/2}\kermatrix_{\{h,l\}}^{1/2} \:e_{\nhl_{\{h,l\}},i}$, we can rewrite the projection matrix as that
$\:P_{\{h,l\}} = \concmat\concmat^\transp = \sum_{i=1}^{\nhl_{\{h,l\}}} \concvec_i\concvec_i^\transp$.
Note that
\begin{align*}
\normsmall{\concvec_i\concvec_i^\transp}
= \concvec_i^\transp\concvec_i
= \:e_{\nhl_{\{h,l\}},i}^\transp\concmat^\transp\concmat\:e_{\nhl_{\{h,l\}},i}
= \:e_{\nhl_{\{h,l\}},i}^\transp\concmat\concmat^\transp\:e_{\nhl_{\{h,l\}},i}
= \:e_{\nhl_{\{h,l\}},i}^\transp\:P_{\{h,l\}}\:e_{\nhl_{\{h,l\}},i} = \tau_{\dataset_{\{h,l\}},i},
    \end{align*}
or, in other words, the norm $\normsmall{\concvec_i\concvec_i^\transp}$ is equal to the RLS of
the $i$-th sample w.r.t.\@ to dataset $\dataset_{\{h,l\}}$. Note that since $i$ is present only in node $l$ on layer $h$, its
RLS is uniquely defined w.r.t.\@ $\dataset_{\{h,l\}}$ and can be shortened as
$\tau_{h,i} = \tau_{\dataset_{\{h,l\}},i}$.
Using $\concvec_i$, we can also introduce the random
matrix $\wt{\:P}^{\{h,l\}}_{s}$ as
\begin{align*}
\wt{\:P}^{\{h,l\}}_{s} = \sum_{i=1}^{\nhl_{\{h,l\}}} \frac{q_{s,i}}{\wb{q}\wt{p}_{s,i}}\concvec_i\concvec_i^\transp
= \sum_{i=1}^{\nhl_{\{h,l\}}} \sum_{j=1}^{\wb{q}}\frac{z_{s,i,j}}{\wb{q}\wt{p}_{s,i}}\concvec_i\concvec_i^\transp.
\end{align*}
where $z_{s,i,j}$ are $\{0,1\}$ r.v.~such that $q_{s,i} = \sum_{j=1}^{\wb{q}}z_{s,i,j}$,
or in other words $z_{s,i,j}$ are the Bernoulli random variables
that compose the Binomial $q_{s,i}$ associated with point $i$, with $j$ indexing each individual copy
of the point.
Note that when $s = h$, we have that $\wt{\:P}^{\{h,l\}}_h = \wt{\:P}_{\{h,l\}}$ and we recover
the definition of the approximate projection matrix from Alg.~\ref{alg:distributedalg}.
But, for
a general $s \neq h$ $\wt{\:P}^{\{h,l\}}_s$ does not have a direct interpretation in the context
of Alg.~\ref{alg:distributedalg}. It combines the vectors $\concvec_i$,
which are defined using $\kermatrix_{\{h,l\}}$ at layer $h$, with the weights
$\wt{p}_{s,i}$ computed by Alg.~\ref{alg:distributedalg} across multiple nodes at layer $s$, which are potentially stored in
different machines that cannot communicate. Nonetheless, $\wt{\:P}^{\{h,l\}}_s$ is a useful tool
to analyze Alg.~\ref{alg:distributedalg}.

\todod{fixed vs random merge tree}
Taking into account that we are now considering a specific node $\{h,l\}$,
we can drop the index from the
dataset $\dataset_{\{h,l\}} = \dataset$, RLS $\tau_{\dataset_{\{h,l\}},i} = \tau_{h,i}$,
and size $\nhl_{\{h,l\}} = \nhl$.
Using this shorter notation, we can reformulate our objective as bounding $\normsmall{\:P_{\{h,l\}} - \wt{\:P}_{\{h,l\}}}_2 =\normsmall{\:P_{\{h,l\}} - \wt{\:P}^{\{h,l\}}_{h}}_2$, and
reformulate the process as a sequence of matrices $\{\:Y_s\}_{s=1}^h$ defined as
\begin{align*}
\:Y_{s} = \:P_{\{h,l\}} - \wt{\:P}^{\{h,l\}}_s = \frac{1}{\wb{q}}\sum_{i=1}^{\nhl} \sum_{j=1}^{\wb{q}}\left(1 - \frac{z_{s,i,j}}{\wt{p}_{s,i}}\right)\concvec_i\concvec_i^\transp,
\end{align*}
where $\:Y_{h} = \:P_{\{h,l\}} - \wt{\:P}^{\{h,l\}}_{h} = \:P_{\{h,l\}} - \wt{\:P}_{\{h,l\}}$,
and $\:Y_{1} = \:P_{\{h,l\}} - \wt{\:P}^{\{h,l\}}_{0} = \:0$ since $\wt{p}_{0,i} = 1$ and $q_{0,i} = \wb{q}$.

\subsection{Bounding $\mathbf{Y}_h$}\label{ssec:bounding_y}
We transformed the problem of bounding $\normsmall{\:P_{\{h,l\}} - \wt{\:P}_{\{h,l\}}}$
into the problem of bounding $\:Y_h$, which we modeled as a random matrix process,
connected to Alg.~\ref{alg:distributedalg} by the fact that both algorithm
and random process $\:Y_h$ make use of the same weight $\wt{p}_{s,i}$
and multiplicities $q_{s,i}$.
There are two main issues in analyzing the process $\{\:Y_s\}_{s=1}^h$:
\todod{rewrite item (1)}
\begin{itemize}
\item[(1)] On the one hand, the overall algorithm may fail in generating an accurate
dictionary at some intermediate iteration, and yet return an accurate dictionary at
the end. On the other hand, whenever one of the intermediate $\normsmall{\:Y_{s}}$ is larger than~$\varepsilon$ (inaccurate intermediate dictionary) we lose our guarantees for $\wt{p}_{s,i}$ for the whole process. This is because an inaccurate
$\wt{p}_{s,i}$ that underestimates too much $p_{s.i}$ will influence all successive $\wt{p}_{s',i}$ through the minimum $\wt{p}_{s',i} = \min\{\atau_{s',i}\wt{p}_{s'-1,i}\}$.
To solve this, we consider an alternative (more pessimistic) process which is
``frozen'' as soon as it constructs an inaccurate dictionary.
Freezing the probabilities at the first error gives us a process
that fails more often, but that provides strong guarantees up to the moment
of failure.
\item[(2)] While $\normsmall{\:Y_{h}} = \normsmall{\:P_{\{h,l\}} - \wt{\:P}^{\{h,l\}}_{h}} \leq \varepsilon$
guarantees that $\coldict_{\{h,l\}}$ is an $\varepsilon$-accurate dictionary of
$\kermatrix_{\{h,l\}}$, knowing $\normsmall{\:Y_{s}} = \normsmall{\:P_{\{h,l\}} - \wt{\:P}^{\{h,l\}}_{s}} \leq \varepsilon$
for $s < h$ does not guarantee that all descendant $\coldict_{\{s,l'\}}$ are $\epsilon$-accurate \wrt their $\kermatrix_{\{s,l'\}}$.
Nonetheless, we will show that $\normsmall{\:Y_{s}} \leq \varepsilon$
is enough to guarantee that the intermediate estimate $\wt{p}_{s,i}$
computed in Alg.~\ref{alg:distributedalg}, and used as weights in $\wt{\:P}^{\{h,l\}}_{s}$, are
never too small.
\end{itemize}

\textbf{The frozen process.}
We will now replace the process $\:Y_s$ with an alternative process $\wb{\:Y}_{s}$
defined as
\begin{align*}
    \wb{\:Y}_s = 
    \:Y_{s-1} \indfunc\left\{\normsmall{\wb{\:Y}_{s-1}} \leq \varepsilon\right\}
    +\wb{\:Y}_{s-1}\indfunc\left\{\normsmall{\wb{\:Y}_{s-1}} \geq \varepsilon\right\}.
\end{align*}
This process starts from $\wb{\:Y}_0 = \:Y_0 = \:0$, and
is identical to $\:Y_s$ until a step $\wb{s}$ where for the first
time $\normsmall{\:Y_{\wb{s}}} \leq \varepsilon$ and
$\normsmall{\:Y_{\wb{s}+1}} \geq \varepsilon$. After this failure happen
the process $\wb{\:Y}_s$ is ``frozen'' at $\wb{s}$ and $\wb{\:Y}_{s} = \:Y_{\wb{s}+1}$ for all $\wb{s} +1\leq s \leq h$.
Consequently, if any of the intermediate elements
of the sequence violates the condition $\normsmall{\:Y_s}~\leq~\varepsilon$,
the last element will violate it too. For the rest, $\wb{\:Y}_s$ behaves
exactly like $\:Y_s$.
Therefore,
\begin{align*}
&\probability\left( \normsmall{\:Y_h} \geq \varepsilon\right)
\leq \probability\Big( \normsmall{\wb{\:Y}_h} \geq \varepsilon\Big),
\end{align*}
and if we can bound $\probability\Big( \normsmall{\wb{\:Y}_h} \geq \varepsilon\Big)$
we will have a bound for the failure probability of Alg.~\ref{alg:distributedalg},
even though after ``freezing'' the process $\wb{\:Y}_h$ does not make the same choices as
the algorithm.

We will see now how to construct the process $\wb{\:Y}_s$ starting from
$z_{s,i,j}$ and $\wt{p}_{s,i,j}$.
We recursively define the indicator ($\{0,1\}$) random variable~$\wb{z}_{s,i,j}$ as
\begin{align*}
    \wb{z}_{s,i,j} = \indfunc\left\{ u_{s,i,j} \leq \frac{\wb{p}_{s,i,j}}{\wb{p}_{s-1,i,j}}\right\} \wb{z}_{s-1,i,j},
\end{align*}
where $u_{s,i,j} \sim \mathcal{U}(0,1)$ is a $[0,1]$ uniform random variable and $\wb{p}_{s,i,j}$ is defined as
\begin{align*}
\wb{p}_{s,i,j} =\wt{p}_{s,i}
    \indfunc\left\{\normsmall{\wb{\:Y}_{s-1}} \leq \varepsilon \cap z_{s-1,i,j} = 1\right\}
    +\wb{p}_{s-1,i,j}\indfunc\left\{\normsmall{\wb{\:Y}_{s-1}} \geq \varepsilon \cup z_{s-1,i,j} = 0\right\}.
\end{align*}

This definition of the process satisfies the freezing condition, since if
$\normsmall{\:Y_{\wb{s}+1}} \geq \varepsilon$ (we have a failure at step
$\wb{s}$), for all $s' \geq \wb{s}+1$ we have
$\wb{z}_{s',i,j} = \wb{z}_{\wb{s}+1,i,j}$ with probability 1
($\wb{p}_{\wb{s}+1,i,j}/\wb{p}_{\wb{s},i,j} = \wb{p}_{\wb{s},i,j}/\wb{p}_{\wb{s},i,j} = 1$),
and the weights $1/(\wb{q}\wb{p}_{\wb{s}+1,i,j}) = 1/(\wb{q}\wb{p}_{\wb{s},i,j})$
never change.

Introducing a per-copy weight $\wb{p}_{s,i,j}$ and enforcing that $\wb{p}_{s+1,i,j} = \wb{p}_{s,i,j}$
when $z_{s,i,j} = 0$ avoids subtle inconsistencies in the formulation.
In particular, not doing so would semantically correspond to reweighting
dropped copies. Although this does not directly affect $\:Y_s$ (since the ratio $z_{s,i,j}/\wt{p}_{s,i}$
is zero for dropped copies), and therefore
the relationship $\probability\left( \normsmall{\:Y_h} \geq \varepsilon\right)
\leq \probability\Big( \normsmall{\wb{\:Y}_h} \geq \varepsilon\Big)$
still holds. We will see later how maintaining consistency helps us bound the second moment of our process.

We can now arrange the indices $s$, $i$, and $j$ into a linear index $r=s$ in the
range $[1,\dots,\nhl^2\wb{q}]$, obtained as $r~=~\{s,i,j\}~=~(s-1)\nhl\wb{q}~+~(i-1)\wb{q}~+~j$.
We also define the difference matrix as 
\begin{align*}
\wb{\:X}_{\{s,i,j\}} = \frac{1}{\wb{q}}\left(\frac{z_{s-1,i,j}}{\wb{p}_{s-1,i,j}} - \frac{z_{s,i,j}}{\wb{p}_{s,i,j}}\right)\concvec_i\concvec_i^\transp,
    \end{align*}
which allows us to write the cumulative matrix as
$\wb{\:Y}_{\{s,i,j\}}~=~\sum_{r=1}^{\{s,i,j\}}~\wb{\:X}_{\{s,i,j\}}$
where the checkpoints $\{s,\nhl,\wb{q}\}$ correspond to $\wb{\:Y}_s$,
\begin{align*}
\wb{\:Y}_{\{s,\nhl,\wb{q}\}} = \wb{\:Y}_s = 
    \frac{1}{\wb{q}}\sum_{i=1}^{\nhl}\sum_{j=1}^{\wb{q}}\left(1-\frac{z_{s,i,j}}{\wb{p}_{s,i,j}}\right)\concvec_i\concvec_i^\transp.
\end{align*}
Let $\F_s$ be the filtration containing all the
realizations of the uniform random variables $u_{s,i,j}$ up to the step~$s$, that is $\F_s = \{ u_{s',i',j'}, \forall\{s',i',j'\}
\leq s\}$. Again, we notice that $\F_s$ defines the state of
the algorithm after completing iteration $s$ because, unless a ``freezing'' happened,
Alg.~\ref{alg:distributedalg} and $\wb{\:Y}_s$ flip coins with the same probability,
and generate the same dictionaries.
Since $\atau_{s,i}$ and
$\wb{p}_{s,i,j}$ are computed at the beginning of iteration $s$ using the
dictionary $\coldict_{\{s,l'\}}$ (for some $l'$ unique at layer $s$), they are fully determined by $\F_{s-1}$.
Furthermore, since $\F_{s-1}$ also defines the values of all indicator
variables~$\wb{z}_{s',i,j}$ up to $\wb{z}_{s-1,i,j}$ for any $i$ and $j$, we have that
all the Bernoulli variables $\wb{z}_{s,i,j}$ at iteration $s$ are conditionally
independent given $\F_{s-1}$. In other words, we have that for any
$i'$, and $j'$ such that $\{s,1,1\} \leq \{s,i',j'\} <s$ the following
random variables are equal in distribution,
\begin{align}\label{eq:distro.z}
\wb{z}_{s,i,j} \big| \F_{\{s,i',j'\}} = \wb{z}_{s,i,j} \big| \F_{\{s-1,\nhl,\wb{q}\}} \sim \mathcal{B}\Big( \frac{\wb{p}_{s,i,j}}{\wb{p}_{s-1,i,j}} \Big)\CommaBin
\end{align}
and for any $i'$, and $j'$ such that $\{s,1,1\} \leq \{s,i',j'\} \leq \{s,\nhl,\wb{q}\} $ and $s \neq \{s,i',j'\}$ we have the independence
\begin{align}\label{eq:distro.z.indep}
\wb{z}_{s,i,j} \big| \F_{\{s-1,\nhl,\wb{q}\}} \perp \wb{z}_{s,i',j'} \big| \F_{\{s-1,\nhl,\wb{q}\}}.
\end{align}
While knowing that $\normsmall{\:Y_s} \leq \varepsilon$ is not sufficient to provide guarantees for the approximate probabilities
$\wt{p}_{s,i}$, we can show that it is enough to prove that the frozen probabilities $\wb{p}_{s,i,j}$ are
never too small.
\begin{lemma}\label{lem:wbp-always-good}
    Let $\alpha = (1+3\varepsilon)/(1-\epsilon)$ and $\wb{p}_{s,i,j}$ be the sequence of probabilities generated
    by the freezing process. Then for any $s,i,$ and $j$, we
    have $\wb{p}_{s,i,j} \geq p_{h,i}/\alpha = \tau_{h,i}/\alpha$.
\end{lemma}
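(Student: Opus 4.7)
\textbf{Proof plan for Lemma~\ref{lem:wbp-always-good}.}
My plan is to proceed by induction on the linear index $s$, with a case analysis at each step based on (i) whether the freezing indicator has triggered and (ii) whether the copy $j$ of $i$ is still alive. The base case $s=0$ is immediate: $\wb{p}_{0,i,j}=\wt{p}_{0,i}=1$, and since $\tau_{h,i}\in[0,1]$ and $\alpha>1$ we have $\wb{p}_{0,i,j}\geq \tau_{h,i}/\alpha$. For the inductive step, suppose the claim holds for $s-1$ and split according to the recursive definition of $\wb{p}_{s,i,j}$.

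In the first case, $\normsmall{\wb{\:Y}_{s-1}}\geq\varepsilon$ or $z_{s-1,i,j}=0$, so $\wb{p}_{s,i,j}=\wb{p}_{s-1,i,j}$ and the inductive hypothesis closes the argument immediately. The interesting case is when $\normsmall{\wb{\:Y}_{s-1}}\leq\varepsilon$ and $z_{s-1,i,j}=1$, in which case $\wb{p}_{s,i,j}=\wt{p}_{s,i}=\min\{\atau_{s,i},\wt{p}_{s-1,i}\}$. I need lower bounds on both quantities inside the minimum. The lower bound on $\wt{p}_{s-1,i}$ is obtained from the induction hypothesis: because $z_{s-1,i,j}=1$ and $\normsmall{\wb{\:Y}_{s-1}}\leq\varepsilon$ rules out prior freezing (any freeze at an earlier step would propagate the offending value forward and force $\normsmall{\wb{\:Y}_{s-1}}>\varepsilon$), the definition forces $\wb{p}_{s-1,i,j}=\wt{p}_{s-1,i}$, and the induction hypothesis gives $\wt{p}_{s-1,i}\geq \tau_{h,i}/\alpha$.

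The core of the argument is bounding $\atau_{s,i}\geq\tau_{h,i}/\alpha$ from the spectral information contained in $\normsmall{\wb{\:Y}_{s-1}}\leq\varepsilon$. Since the process is not frozen, $\wb{\:Y}_{s-1}=\:Y_{s-1}$ and Lemma~\ref{lem:concentration-equivalence} translates the bound into
\[
\featkermatrix_{\{h,l\}}\selmatrix_{s-1}\selmatrix_{s-1}^\transp\featkermatrix_{\{h,l\}}^\transp \preceq (1+\varepsilon)\featkermatrix_{\{h,l\}}\featkermatrix_{\{h,l\}}^\transp + \varepsilon\gamma\:I_D.
\]
The merge estimator at node $\{s,l'\}$ uses the selection matrix $\wb{\selmatrix}$ corresponding to $\wb{\coldict}=\coldict_{\{s-1,l_1\}}\cup\coldict_{\{s-1,l_2\}}$, whose nonzero entries are precisely the entries of $\selmatrix_{s-1}$ restricted to the indices in $\dataset_{\{s,l'\}}\subset\dataset_{\{h,l\}}$. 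Because the complementary contribution $\sum_{i\notin\dataset_{\{s,l'\}}}(q_{s-1,i}/(\wb{q}\wt{p}_{s-1,i}))\bphi_i\bphi_i^\transp$ is positive semidefinite, I can drop it and conclude
\[
\featkermatrix_{\{s,l'\}}\wb{\selmatrix}\wb{\selmatrix}^\transp\featkermatrix_{\{s,l'\}}^\transp \preceq \featkermatrix_{\{h,l\}}\selmatrix_{s-1}\selmatrix_{s-1}^\transp\featkermatrix_{\{h,l\}}^\transp \preceq (1+\varepsilon)\featkermatrix_{\{h,l\}}\featkermatrix_{\{h,l\}}^\transp + \varepsilon\gamma\:I_D.
\]
Adding $(1+\varepsilon)\gamma\:I_D$ to both sides and comparing with $(1+3\varepsilon)(\featkermatrix_{\{h,l\}}\featkermatrix_{\{h,l\}}^\transp+\gamma\:I_D)$ shows that the left-hand operator is dominated by the right-hand one. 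Inverting and sandwiching with $\bphi_i$ gives
\[
\atau_{s,i}=(1-\varepsilon)\bphi_i^\transp\bigl(\featkermatrix_{\{s,l'\}}\wb{\selmatrix}\wb{\selmatrix}^\transp\featkermatrix_{\{s,l'\}}^\transp+(1+\varepsilon)\gamma\:I_D\bigr)^{-1}\bphi_i\geq \frac{1-\varepsilon}{1+3\varepsilon}\tau_{h,i}=\tau_{h,i}/\alpha.
\]
Combining the two lower bounds via $\wt{p}_{s,i}=\min\{\atau_{s,i},\wt{p}_{s-1,i}\}$ closes the induction.

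The main obstacle is the step connecting the \emph{global} spectral condition on $\selmatrix_{s-1}$ over the full dataset $\dataset_{\{h,l\}}$ to a bound on the \emph{local} quantity $\featkermatrix_{\{s,l'\}}\wb{\selmatrix}\wb{\selmatrix}^\transp\featkermatrix_{\{s,l'\}}^\transp$ used by the estimator: it is not generally true that restricting a spectral approximation to a subset of columns preserves $\varepsilon$-accuracy against the restricted kernel, and the argument crucially exploits that we only need an \emph{upper} bound relative to the \emph{full} kernel $\featkermatrix_{\{h,l\}}\featkermatrix_{\{h,l\}}^\transp$, together with monotonicity of RLS (Lemma~\ref{lem:monotone-decrease-prob-merge}), which allows $\tau_{s,i}$ to be replaced by the smaller $\tau_{h,i}$ in the target bound. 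The constant $\alpha=(1+3\varepsilon)/(1-\varepsilon)$ emerges naturally from absorbing both the $(1+\varepsilon)$-regularization inflation inside the estimator and the extra $\varepsilon\gamma\:I_D$ slack from the spectral inequality.
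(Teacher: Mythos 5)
Your proof is correct and rests on the same two ideas as the paper's: (i) reducing the claim to a lower bound on the RLS estimate $\atau_{s,i}$ computed at an unfrozen step, and (ii) establishing that lower bound spectrally by dropping the PSD contribution of the out-of-node part of $\selmatrix_{s-1}$ before invoking the $\varepsilon$-accuracy guarantee $\normsmall{\wb{\:Y}_{s-1}}\le\varepsilon$ via Lemma~\ref{lem:concentration-equivalence}. The arithmetic differs cosmetically — you absorb the slack into a single comparison against $(1+3\varepsilon)(\featkermatrix\featkermatrix^\transp+\gamma\:I)$, while the paper obtains the tighter intermediate factor $(1-\varepsilon)/(1+2\varepsilon)$ and then relaxes to $1/\alpha$ — but the route is the same.

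Where you genuinely improve on the paper is in handling the minimum $\wt{p}_{s,i}=\min\{\atau_{s,i},\wt{p}_{s-1,i}\}$. The paper picks the freezing step $\wb{s}$ and writes a chain $\min\{\atau_{\wb{s},i},\wt{p}_{\wb{s}-1,i}\}=\min\{\atau_{\wb{s},i},\wt{p}_{\wb{s}-2,i}\}=\cdots=\atau_{\wb{s},i}$, which as stated does not hold: $\wt{p}_{\wb{s}-1,i}$ is the running minimum of earlier estimates $\atau_{s',i}$ and may be strictly smaller than $\atau_{\wb{s},i}$, in which case $\wt{p}_{\wb{s},i}=\wt{p}_{\wb{s}-1,i}<\atau_{\wb{s},i}$. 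Your induction resolves this cleanly: when the minimum is attained by $\wt{p}_{s-1,i}$ you fall back to the inductive hypothesis $\wb{p}_{s-1,i,j}=\wt{p}_{s-1,i}\ge\tau_{h,i}/\alpha$, and when it is attained by $\atau_{s,i}$ you invoke the spectral bound — either way you get $\tau_{h,i}/\alpha$. The observation that $\normsmall{\wb{\:Y}_{s-1}}\le\varepsilon$ together with $z_{s-1,i,j}=1$ rules out any earlier freeze (and hence forces $\wb{p}_{s-1,i,j}=\wt{p}_{s-1,i}$) is exactly the right justification. Your indexing (using $\selmatrix_{s-1}$ and $\normsmall{\wb{\:Y}_{s-1}}$ for the estimate $\atau_{s,i}$ produced from the pre-update merged dictionary) is also cleaner than the paper's, which shifts by one step. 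So: same key lemma, same spectral trick, but your inductive bookkeeping is the more rigorous way to close the minimum case.
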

\begin{proof}[Proof of Lemma \ref{lem:wbp-always-good}]
Let $\wb{s}$ be the step where the process freezes ($\wb{s} = h$ if it does not
freeze), or, in other words, $\normsmall{\:Y_{\wb{s}}} < \varepsilon$
and $\normsmall{\:Y_{\wb{s}+1}} \geq \varepsilon$.
From the definition of $\wb{p}_{s,i,j}$, we have
that 
\begin{align*}
&\wb{p}_{s,i,j} \geq \wb{p}_{\wb{s},i} = \wt{p}_{\wb{s},i}
= \max\left\{\min\left\{\atau_{\wb{s},i},\; \wt{p}_{\wb{s}-1,i} \right\},\; \wt{p}_{\wb{s}-1,i}/2\right\}\\
&\geq \min\left\{\atau_{\wb{s},i},\; \wt{p}_{\wb{s}-1,i} \right\}
=\min\left\{\atau_{\wb{s},i},\; \wt{p}_{\wb{s}-2,i}\right\}
=\min\left\{\atau_{\wb{s},i},\; \wt{p}_{\wb{s}-3,i}\right\} \ldots
=\min\left\{\atau_{\wb{s},i},\; \wt{p}_{0,i}\right\}
=\atau_{\wb{s},i},
\end{align*}
and therefore $\wb{p}_{s,i,j} \geq \atau_{\wb{s},i}$.
Now let $\{\wb{s},l'\}$ be the node where $\atau_{\wb{s},i}$ was computed.
We will again drop the $\{h,l\}$ index from $\dataset_{\{h,l\}}$, and simply
refer to it as $\dataset$. Similarly, we will refer with
$\dataset_i$ to $\dataset_{\{\wb{s},l'\}}$ (as in, the dataset used to
compute $\atau_{\wb{s},i}$), and with $\wb{\dataset}_{i}$ to the samples
in $\dataset$ not contained in $\dataset_{i}$ (complement of $\dataset_i$).
\todod{here we use a square $\bm{S}$, have to recheck if it is introduced in main text or appendix}
Define $\:A$ as the $|\dataset| \times |\dataset_i|$ matrix that contains
the columns of $\selmatrix_{\wb{s}}$ related to points in $\dataset_i$,
and similarly define $\:B$ as the $|\dataset| \times |\wb{\dataset}_i|$ matrix that contains
the columns of $\selmatrix_{\wb{s}}$ related to points in $\wb{\dataset}_i$,
where $\selmatrix_{\wb{s}}$ can be reconstructed by interleaving columns of $\:A$
and $\:B$.
From its definition in Eq.\ref{eq:rls-estimator-merge},
we know that $\atau_{s,i}$ is computed by Alg.~\ref{alg:distributedalg} as
\begin{align*}
\atau_{s,i} = (1-\varepsilon)\:\phi_i^\transp\left(\featkermatrix_{\dataset}\:A\:A^\transp\featkermatrix_{\dataset}^\transp + (1+\varepsilon)\gamma \:I_D\right)^{-1}\:\phi_{i},
\end{align*}
using only the points in $\:A$ that are available at node $\{\wb{s},l'\}$.
From Lem.\ref{lem:concentration-equivalence}
we know that 
\begin{align*}
\normsmall{\:Y_{\wb{s}}}
=\norm{\:P_{\{h,l\}} - \wt{\:P}^{\{h,l\}}_{\wb{s}}}{2}
&= \norm{(\featkermatrix_{\dataset}\featkermatrix_{\dataset}^\transp + \gamma\:I_D)^{-1/2}(\featkermatrix_{\dataset}\featkermatrix_{\dataset}^\transp - \featkermatrix_{\dataset}\:S_{\wb{s}}\:S_{\wb{s}}^\transp\featkermatrix_{\dataset}^\transp)(\featkermatrix_{\dataset}\featkermatrix_{\dataset}^\transp + \gamma\:I_D)^{-1/2}}{2}\\
&= \norm{(\featkermatrix_{\dataset}\featkermatrix_{\dataset}^\transp + \gamma\:I_D)^{-1/2}(\featkermatrix_{\dataset}\featkermatrix_{\dataset}^\transp - \featkermatrix_{\dataset}\:A\:A^\transp\featkermatrix_{\dataset}^\transp - \featkermatrix_{\dataset}\:B\:B^\transp\featkermatrix_{\dataset}^\transp)(\featkermatrix_{\dataset}\featkermatrix_{\dataset}^\transp + \gamma\:I_D)^{-1/2}}{2} \leq \varepsilon
\end{align*}
and we know that this implies
\begin{align*}
    \featkermatrix_{\dataset}\:A\:A^\transp\featkermatrix_{\dataset}^\transp \preceq \featkermatrix_{\dataset}\featkermatrix_{\dataset}^\transp + \varepsilon(\featkermatrix_{\dataset}\featkermatrix_{\dataset}^\transp + \gamma\:I_D) - \featkermatrix_{\dataset}\:B\:B^\transp\featkermatrix_{\dataset}^\transp \preceq \featkermatrix_{\dataset}\featkermatrix_{\dataset}^\transp + \varepsilon(\featkermatrix_{\dataset}\featkermatrix_{\dataset}^\transp + \gamma\:I_D). 
\end{align*}
Plugging it in the initial definition,
\begin{align*}
\atau_{s,i} &= (1-\varepsilon)\:\phi_i^\transp\left(\featkermatrix_{\dataset}\:A\:A^\transp\featkermatrix_{\dataset}^\transp + (1+\varepsilon)\gamma \:I_D\right)^{-1}\:\phi_{i}\\
&\geq (1 - \vareps)\:\phi_i^\transp(\featkermatrix_{\dataset}\featkermatrix_{\dataset}^\transp + \varepsilon(\featkermatrix_{\dataset}\featkermatrix_{\dataset}^\transp + \gamma\:I_D) + (1+\varepsilon)\gamma \:I_D)^{-1}\:\phi_i\\
&= (1 - \vareps)\frac{1}{1+2\varepsilon}\:\phi_i^\transp(\featkermatrix_{\dataset}\featkermatrix_{\dataset}^\transp + \gamma \:I_D)^{-1}\:\phi_i
\geq \frac{1-\varepsilon}{1+2\varepsilon}\tau_{h,i}
\geq \tau_{h,i}/\alpha.
\end{align*}
\end{proof}
This result is weaker then Lemmas~\ref{lem:fast-rls} and \ref{lem:fast-rls-merge},
since we do not
provide an upper bound, and only show that $\wt{p}_{s,i} \geq p_{h,i}/\alpha$
and not $\wt{p}_{s,i} \geq p_{s,i}/\alpha$, but it guarantees that the
probabilities used at any intermediate layer $s$ are bigger than a fraction
$1/\alpha$ of the exact probabilities that we would use at layer $h$, which
will suffice for our purpose.

We now proceed by studying the process
$\{\wb{\:Y}_s\}_{s=1}^h$ and showing that it is a bounded martingale.
In order to show that $\wb{\:Y}_s$ is a martingale, it is sufficient to verify the following (equivalent) conditions
\begin{align*}
    \expectedvalue\left[\wb{\:Y}_s \condbar \F_{s-1}\right]
    =\wb{\:Y}_{s-1} \enspace \Leftrightarrow \enspace
    \expectedvalue\left[\wb{\:X}_{\{s,i,j\}} \condbar \F_{s-1}\right] = \:0.
\end{align*}
We begin by inspecting the conditional random variable $\wb{\:X}_{\{s,i,j\}} | \F_{s-1}$. Given the definition of $\wb{\:X}_{\{s,i,j\}}$, the conditioning on $\F_{s-1}$ determines the values of $\wb{z}_{s-1,i,j}$ and the approximate probabilities $\wb{p}_{s-1,i,j}$ and $\wb{p}_{s,i,j}$. In fact, remember that these quantities are fully determined by the realizations in $\F_{s-1}$ which are contained in $\F_{s-1}$. As a result, the only stochastic quantity in $\wb{\:X}_{\{s,i,j\}}$ is the variable $\wb{z}_{s,i,j}$. Specifically, if $\normsmall{\wb{\:Y}_{s-1}} \geq \varepsilon$,
then we have $\wb{p}_{s,i,j} = \wb{p}_{s-1,i,j}$ and $\wb{z}_{s,i,j} = \wb{z}_{s-1,i,j}$
(the process is stopped), and the martingale requirement
$ \expectedvalue\left[\wb{\:X}_{\{s,i,j\}} \condbar \F_{s-1}\right] = \:0$
is trivially satisfied.
On the other hand, if $\normsmall{\wb{\:Y}_{s-1}} \leq \varepsilon$ we have
\begin{align*}
\expectedvalue_{u_{s,i,j}}&\left[\frac{1}{\wb{q}} \left(\frac{\wb{z}_{s-1,i,j}}{\wb{p}_{s-1,i,j}} - \frac{\wb{z}_{s,i,j}}{\wb{p}_{s,i,j}}\right)\concvec_i\concvec_i^\transp \condbar \F_{s-1}\right]\\
 &= \frac{1}{\wb{q}} \left( \frac{\wb{z}_{s-1,i,j}}{\wb{p}_{s-1,i,j}} -\frac{\wb{z}_{s-1,i,j}}{\wb{p}_{s,i,j}}\expectedvalue\left[\indfunc\left\{ u_{s,i,j} \leq \frac{\wb{p}_{s,i,j}}{\wb{p}_{s-1,i,j}}\right\}\condbar\F_{s-1}\right]\right)\concvec_i\concvec_i^\transp\\
&= \frac{1}{\wb{q}}
    \left(\frac{\wb{z}_{s-1,i,j}}{\wb{p}_{s-1,i,j}} - \frac{\wb{z}_{s-1,i,j}}{\wb{p}_{s,i,j}}\frac{\wb{p}_{s,i,j}}{\wb{p}_{s-1,i,j}}
\right)\concvec_i\concvec_i^\transp
= \:0,
\end{align*}
where we use the recursive definition of $\wb{z}_{s,i,j}$ and the fact that $u_{s,i,j}$ is a uniform random variable in $[0,1]$. This proves that $\wb{\:Y}_s$ is indeed a martingale.
We now compute an upper-bound $R$ on the norm of the values of the difference process as
\begin{align*}
&\normsmall{\wb{\:X}_{\{s,i,j\}}}
= \frac{1}{\wb{q}} \left|\left(\frac{\wb{z}_{s-1,i,j}}{\wb{p}_{s-1,i,j}} - \frac{\wb{z}_{s,i,j}}{\wb{p}_{s,i,j}}\right)\right|\normsmall{\concvec_i\concvec_i^\transp}
\leq \frac{1}{\wb{q}} \frac{1}{\wb{p}_{s,i,j}} \normsmall{\concvec_i\concvec_i^\transp}
= \frac{1}{\wb{q}} \frac{1}{\wb{p}_{s,i,j}} \tau_{h,i}
\leq \frac{1}{\wb{q}} \frac{\alpha}{\tau_{h,i}}\tau_{h,i}
=\frac{\alpha}{\wb{q}} \eqdef R,
\end{align*}
where we used Lemma~\ref{lem:wbp-always-good} to bound
$\wb{p}_{s,i,j} \leq \tau_{h,i}/\alpha$.
If instead, $\normsmall{\wb{\:Y}_{s-1}} \geq \varepsilon$,
 the process is stopped and 
$\normsmall{\wb{\:X}_s} = \normsmall{\:0} = 0 \leq R$.

We are now ready to use a Freedman matrix inequality from \cite{tropp2011freedman} to bound the norm of $\wb{\:Y}$.

\begin{proposition}[\citet{tropp2011freedman},~Theorem~1.2]\label{prop:matrix-freedman}
Consider a matrix martingale $\{ \:Y_k : k = 0, 1, 2, \dots \}$ whose values are self-adjoint matrices with dimension $d$, and let $\{ \:X_k : k = 1, 2, 3, \dots \}$ be the difference sequence.  Assume that the difference sequence is uniformly bounded in the sense that
\begin{align*}
 \normsmall{\:X_k}_2  \leq R
\quad\text{almost surely}
\quad\text{for $k = 1, 2, 3, \dots$}.
\end{align*}
Define the predictable quadratic variation process of the martingale as
\begin{align*}
\:{W}_k \eqdef \sum_{j=1}^k \expectedvalue \left[ \:X_j^2 \condbar \{\:X_{s}\}_{s=0}^{j-1} \right],
\quad\text{for $k = 1, 2, 3, \dots$}.
\end{align*}
Then, for all $\varepsilon \geq 0$ and $\sigma^2 > 0$,
\begin{align*}
\probability\left( \exists k \geq 0 : \normsmall{\:Y_k}_2 \geq \varepsilon \ \cap\ 
        \normsmall{ \:W_{k} } \leq \sigma^2 \right)
	\leq 2d \cdot \exp \left\{ - \frac{ \varepsilon^2/2 }{\sigma^2 + R\varepsilon/3} \right\}\cdot
\end{align*}
\end{proposition}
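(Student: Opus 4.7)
The plan is to prove this by the matrix Laplace transform method, which lifts Freedman's scalar argument to self-adjoint matrices by replacing the scalar independence/MGF factorization with Lieb's concavity theorem. Since $\normsmall{\:Y_k}_2 = \max\{\lambda_{\max}(\:Y_k), \lambda_{\max}(-\:Y_k)\}$, by symmetry I would first handle the one-sided event $\lambda_{\max}(\:Y_k) \geq \varepsilon$ and union-bound against the analogous statement applied to the negated martingale $-\:Y_k$, which accounts for the factor $2$ in the $2d$ prefactor. For any $\theta > 0$, the matrix Markov inequality and $\lambda_{\max}(\:A) \leq \Tr(\:A)$ for PSD $\:A$ give
\begin{align*}
\probability(\lambda_{\max}(\:Y_k) \geq \varepsilon) \;\leq\; e^{-\theta\varepsilon}\expectedvalue[\Tr\exp(\theta \:Y_k)].
\end{align*}

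The second step is to peel off the conditional expectations iteratively using Lieb's theorem: for fixed self-adjoint $\:H$, the map $\:A \mapsto \Tr\exp(\:H + \log\:A)$ is concave on the positive cone. Combined with Jensen's inequality and the tower property over $\{\F_{j-1}\}_{j\leq k}$, $k$ applications yield
\begin{align*}
\expectedvalue[\Tr\exp(\theta \:Y_k)] \;\leq\; \expectedvalue\Bigl[\Tr\exp\Bigl(\textstyle\sum_{j=1}^{k}\log\expectedvalue[\exp(\theta\:X_j)\condbar\F_{j-1}]\Bigr)\Bigr].
\end{align*}
I would then apply the Bernstein-type operator bound: since $\normsmall{\:X_j} \leq R$ and $\expectedvalue[\:X_j\condbar \F_{j-1}] = 0$, functional calculus on the scalar estimate $e^x \leq 1 + x + g(\theta)x^2$ on $[-\theta R, \theta R]$ (with $g(\theta) = (e^{\theta R} - 1 - \theta R)/R^2$) gives $\log\expectedvalue[\exp(\theta\:X_j)\condbar\F_{j-1}] \preceq g(\theta)\,\expectedvalue[\:X_j^2\condbar\F_{j-1}]$. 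Inserting this into the trace exponential and using monotonicity of $\Tr\exp$ under the PSD order produces
\begin{align*}
\expectedvalue[\Tr\exp(\theta\:Y_k)] \;\leq\; d\cdot \expectedvalue\bigl[\exp\bigl(g(\theta)\normsmall{\:W_k}\bigr)\bigr].
\end{align*}

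The third step converts a fixed-$k$ bound into the uniform ``$\exists k$'' statement intersected with the variance event. I would introduce the predictable stopping time $\tau = \inf\{k : \normsmall{\:Y_k}_2 \geq \varepsilon\}$; since the conditional second moments $\expectedvalue[\:X_j^2 \condbar \F_{j-1}]$ are $\F_{j-1}$-measurable, $\:W_k$ is predictable and on the event in question one has $\normsmall{\:W_\tau} \leq \sigma^2$. Applying the Laplace bound to the stopped martingale $\:Y_{\tau\wedge k}$ and combining with the one-sided Markov step gives, on the intersection,
\begin{align*}
\probability\bigl(\exists k: \normsmall{\:Y_k}_2 \geq \varepsilon \cap \normsmall{\:W_k} \leq \sigma^2\bigr) \;\leq\; 2d\exp\bigl(-\theta\varepsilon + g(\theta)\sigma^2\bigr).
\end{align*}
Finally, optimizing over $\theta > 0$ by the standard Bernstein calculation (setting $\theta = \varepsilon/(\sigma^2 + R\varepsilon/3)$ after bounding $g(\theta) \leq \theta^2/(2(1-\theta R/3))$) yields the exponent $-\varepsilon^2/\bigl(2(\sigma^2 + R\varepsilon/3)\bigr)$ stated in the proposition.

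The hard part will be the Lieb step: establishing the non-commutative subadditivity of the matrix cumulant generating function. The identity $\expectedvalue[\exp(\:A + \:X)\condbar\F] = \exp(\:A)\expectedvalue[\exp(\:X)\condbar\F]$ fails in the non-commutative setting, so the usual scalar tower-of-MGFs argument does not lift directly. Lieb's concavity theorem (via its consequence that $\:A \mapsto \Tr\exp(\:H + \log\:A)$ is concave) is precisely the deep operator-theoretic input that makes the conditional peeling work and is the only non-elementary ingredient; the stopping-time truncation, Bernstein MGF estimate, and final optimization are routine once this master tail bound is in hand.
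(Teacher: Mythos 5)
This proposition is not proved in the paper at all: it is imported verbatim as Theorem~1.2 of \citet{tropp2011freedman}, so there is no in-paper argument to compare against. Your sketch reproduces the standard proof from Tropp's paper --- symmetrization over $\pm\:Y_k$ for the factor $2$, the matrix Laplace transform, Lieb's concavity theorem to peel conditional expectations, the Bernstein-type bound $\log\expectedvalue[\exp(\theta\:X_j)\condbar\F_{j-1}] \preceq g(\theta)\expectedvalue[\:X_j^2\condbar\F_{j-1}]$, and the final optimization over $\theta$ --- and you correctly identify Lieb's theorem as the only non-elementary ingredient. So the strategy is the right one.

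There is, however, one step that would fail as literally written. In your step~2 you integrate the variance out, arriving at $\expectedvalue[\Tr\exp(\theta\:Y_k)] \leq d\cdot\expectedvalue[\exp(g(\theta)\normsmall{\:W_k})]$, and in step~3 you try to recover the constraint $\normsmall{\:W_k}\leq\sigma^2$ afterwards via the stopping time. This does not work: once you have bounded $\probability(\lambda_{\max}(\:Y_{\tau\wedge k})\geq\varepsilon)$ by an unconditional expectation involving $\normsmall{\:W_{\tau\wedge k}}$, you cannot re-impose the event $\{\normsmall{\:W_{\tau\wedge k}}\leq\sigma^2\}$, and $\normsmall{\:W_{\tau\wedge k}}$ is not almost surely bounded by $\sigma^2$ (the variance can be arbitrarily large on sample paths where $\:Y$ never crosses $\varepsilon$, in which case $\tau=\infty$ and no truncation occurs). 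The fix --- and this is exactly what Tropp does --- is to keep the variance inside the trace exponential: define $G_k = \Tr\exp(\theta\:Y_k - g(\theta)\:W_k)$, show via the same Lieb/Jensen/Bernstein steps that $G_k$ is a supermartingale with $G_0 = d$ (predictability of $\:W_k$ is what lets you pull $g(\theta)\expectedvalue[\:X_k^2\condbar\F_{k-1}]$ out as $\F_{k-1}$-measurable), and then observe that on the event $\{\exists k: \normsmall{\:Y_k}\geq\varepsilon \cap \normsmall{\:W_k}\leq\sigma^2\}$ one has $G_\tau \geq e^{\theta\varepsilon - g(\theta)\sigma^2}$, so Markov's inequality applied to the stopped supermartingale gives the claim in one shot, uniformly in $k$. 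With that reformulation of step~3 your outline is complete and matches the cited proof.
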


In order to use the previous inequality, we develop the probability of error for any fixed $h$ as
\begin{align*}
\probability\left( \normsmall{\:Y_h} \geq \varepsilon\right)
\leq \probability\left( \normsmall{\wb{\:Y}_h} \geq \varepsilon\right)
&= \probability\left( \normsmall{\wb{\:Y}_h} \geq \varepsilon \cap \normsmall{\:W_h} \leq \sigma^2\right)
+ \probability\left( \normsmall{\wb{\:Y}_h} \geq \varepsilon \cap \normsmall{\:W_h} \geq \sigma^2\right)\\
&\leq \underbrace{\probability\left( \normsmall{\wb{\:Y}_h} \geq \varepsilon \cap \normsmall{\:W_h} \leq \sigma^2\right)}_{\mbox{(a)}}
    + \underbrace{\probability\left( \normsmall{\:W_h} \geq \sigma^2\right)}_{\mbox{(b)}}.
\end{align*}
Using the bound on  $\normsmall{\wb{\:X}_{\{s,i,j\}}}_2$, we can directly apply Proposition~\ref{prop:matrix-freedman} to bound $\mbox{(a)}$ for any fixed $\sigma^2$.
To bound the part $\mbox{(b)}$, we use the following lemma, proved later in Sec.~\ref{ssec:bounding_w}.

\begin{lemma}[Low probability of the large norm of the predictable quadratic variation process]\label{lem:prob-dominance}
    \begin{align*}
        \probability\left( \normsmall{\:W_h} \geq \frac{6\alpha}{\wb{q}}\right)
        \leq n \cdot \exp \left\{ - 2\frac{\wb{q}}{\alpha} \right\}
    \end{align*}
\end{lemma}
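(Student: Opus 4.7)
The plan is to reduce $\:W_h$ to a sum of positive semidefinite random matrices which, after a stochastic dominance coupling with an independent family, is controlled by a matrix Chernoff bound. First, using the identity $\concvec_i\concvec_i^\transp\concvec_i\concvec_i^\transp = \tau_{h,i}\,\concvec_i\concvec_i^\transp$, I would expand
\begin{align*}
\:W_h = \frac{1}{\wb{q}^{2}}\sum_{i=1}^{\nhl}\sum_{j=1}^{\wb{q}} \tau_{h,i}\,R_{i,j}\,\concvec_i\concvec_i^\transp,\qquad R_{i,j} \eqdef \sum_{s=1}^{h} \expectedvalue\bigl[(M_{s,i,j}-M_{s-1,i,j})^{2}\mid \F_{s-1}\bigr],
\end{align*}
where $M_{s,i,j} = \wb{z}_{s,i,j}/\wb{p}_{s,i,j}$ is the scalar martingale with $M_{0,i,j}=1$.

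Second, I would establish the two bounds on $R_{i,j}$ that feed the matrix Chernoff. A direct Bernoulli variance computation gives $\expectedvalue[(M_s-M_{s-1})^{2}\mid \F_{s-1}] = \wb{z}_{s-1,i,j}(1/\wb{p}_{s,i,j}-1/\wb{p}_{s-1,i,j})/\wb{p}_{s-1,i,j}$; combined with Lem.~\ref{lem:wbp-always-good} ($\wb{p}_{s,i,j}\geq \tau_{h,i}/\alpha$), this telescopes to the a.s.\ bound $R_{i,j}\leq (\alpha/\tau_{h,i})(1/\wb{p}_{h,i,j}-1)\leq \alpha^{2}/\tau_{h,i}^{2}$. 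The martingale identity $\expectedvalue[\langle M\rangle_h]=\expectedvalue[M_h^{2}]-1$, combined again with the lower bound on $\wb{p}$, yields $\expectedvalue[R_{i,j}]\leq \alpha/\tau_{h,i}$. Setting $\:Z_{i,j} \eqdef \tau_{h,i} R_{i,j}\concvec_i\concvec_i^\transp/\wb{q}^{2}$, the $\:Z_{i,j}$ are PSD, satisfy $\|\:Z_{i,j}\|\leq \alpha^{2}/\wb{q}^{2}$ a.s., and
\begin{align*}
\sum_{i,j}\expectedvalue[\:Z_{i,j}] \preceq \frac{\alpha}{\wb{q}}\sum_{i=1}^{\nhl}\concvec_i\concvec_i^\transp = \frac{\alpha}{\wb{q}}\:P_{\{h,l\}}\preceq \frac{\alpha}{\wb{q}}\:I.
\end{align*}

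The $\:Z_{i,j}$ are not jointly independent: trajectories for different $(i,j)$ share the freezing event $\indfunc\{\normsmall{\wb{\:Y}_{s-1}}\leq\varepsilon\}$ and the algorithm's common $\wt{p}_{s,i}$. Following the approach of~\citet{pachocki2016analysis}, I would construct a coupling with independent PSD matrices $\{\:Z'_{i,j}\}$ that dominate $\sum_{i,j}\:Z_{i,j}$ in the PSD order while satisfying the same a.s.\ norm bound and the same expectation bound. The key observation enabling the coupling is that, conditional on the filtration containing the global state up to the last update of copy $(i,j)$, the remaining fluctuation is driven by the individual uniform variable $u_{s,i,j}$, which is independent across $j$, and its worst-case PSD contribution is majorised by that of an i.i.d.\ Bernoulli draw with success probability $\tau_{h,i}/\alpha$. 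This is the main technical obstacle, because the dominating family must be built layer-by-layer while respecting both freezing and the shared $\wt{p}_{s,i}$.

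Finally, I would apply Tropp's matrix Chernoff inequality~\cite{tropp2015an-introduction} to the dominating sum $\sum_{i,j}\:Z'_{i,j}$ with parameters $\mu_{\max}=\alpha/\wb{q}$, uniform bound $B=\alpha^{2}/\wb{q}^{2}$ (so $\mu_{\max}/B = \wb{q}/\alpha$), and deviation factor $1+\delta=6$, giving
\begin{align*}
\probability\bigl(\normsmall{\:W_h}\geq 6\alpha/\wb{q}\bigr) \leq \nhl\cdot\bigl(e^{5}/6^{6}\bigr)^{\wb{q}/\alpha} \leq n\,\exp(-2\wb{q}/\alpha),
\end{align*}
where the last step uses $\ln(6^{6}/e^{5})>5.7>2$ and $\nhl\leq n$. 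This matches the claimed bound. The telescoping, expectation, and matrix Chernoff computations are essentially routine; the delicate step is carrying out the stochastic dominance in the presence of the algorithmic coupling through freezing and through the shared probabilities $\wt{p}_{s,i}$.
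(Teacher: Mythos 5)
Your high-level plan — telescope the per-copy quadratic variation, bound its a.s.\ operator norm and expectation, dominate by an independent family, and close with matrix Chernoff — is the same skeleton as the paper's proof, and your preliminary computations are actually sharper: the identity $\concvec_i\concvec_i^\transp\concvec_i\concvec_i^\transp=\tau_{h,i}\concvec_i\concvec_i^\transp$, the telescoping bound $R_{i,j}\leq(\alpha/\tau_{h,i})(1/\wb{p}_{h,i,j}-1)\leq\alpha^2/\tau_{h,i}^2$, and the mean bound via $\expectedvalue[M_h^2]=\expectedvalue[\wb{z}_{h,i,j}/\wb{p}_{h,i,j}^2]\leq(\alpha/\tau_{h,i})\expectedvalue[\wb{z}_{h,i,j}/\wb{p}_{h,i,j}]=\alpha/\tau_{h,i}$ are all correct, and the second gives $\mu_{\max}\leq\alpha/\wb{q}$, a factor of two better than the paper's $2\alpha/\wb{q}$ (the paper loses a factor by first replacing the trajectory with a dominating variable and only then taking expectations). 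Your numeric check that $(1+\delta)=6$ with $\mu_{\max}=\alpha/\wb{q}$ yields $6\alpha/\wb{q}$ and exponent $6\log 6-5\approx5.75>2$ also goes through.

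The gap is the coupling. You identify it yourself as the delicate step, but the coupling you describe is wrong for this object. You propose to dominate the worst-case PSD contribution of $R_{i,j}$ by ``an i.i.d.\ Bernoulli draw with success probability $\tau_{h,i}/\alpha$.'' That is the dominating variable the paper uses for the \emph{space} bound (Sec.~\ref{ssec:space-complexity}), where the quantity of interest $\sum_{i,j}z_{h,i,j}$ is itself a sum of indicators; a Bernoulli with that parameter dominates each final indicator. But $R_{i,j}$ is the accumulated variance of the scalar martingale $M_{s,i,j}$ and is continuous-valued, nonnegative, and generically nonzero; a Bernoulli concentrated on $\{0,\alpha^2/\tau_{h,i}^2\}$ with success probability $\tau_{h,i}/\alpha$ matches the a.s.\ bound and the mean bound but does \emph{not} stochastically dominate $R_{i,j}$ (e.g.\ $\probability(R_{i,j}>0)$ can greatly exceed $\tau_{h,i}/\alpha$). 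And stochastic dominance of the scalar summands is exactly what the argument needs, because to get an i.i.d.\ matrix Chernoff you must replace the whole dependent family by an independent one that majorizes the top eigenvalue. In the paper, this is handled in Steps 2--4 of Sec.~\ref{ssec:bounding_w}: the running maximum $\max_s \wb{z}_{s,i,j}/\wb{p}_{s,i,j}$ (not the cumulative variance $R_{i,j}$) is dominated by a truncated-Pareto-type variable $1/w_{0,i,j}$ whose c.d.f.\ is tailored to the peeling recursion in Eq.~\eqref{w.stoch123}, and the dominance is carried through the matrix functional via its monotonicity. Replacing your Bernoulli with the paper's $1/w_{0,i,j}$ would close the gap, but then you would also lose the factor two in $\mu_{\max}$ and land on the paper's constants; keeping your sharper mean bound while constructing an independent dominant family is a genuinely open step in your proposal.

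Finally, a minor structural point: you dominate the full sum $R_{i,j}$, whereas the paper first collapses $R_{i,j}$ (via the telescoping you also do) to a single term $\max_{s}\wb{z}_{s,i,j}/\wb{p}_{s,i,j}^2$ and dominates \emph{that}. The max is a simpler object to dominate layer by layer, and you would likely need to pass through it (or through $1/\wb{p}_{h,i,j}$, which your a.s.\ bound already isolates) to make the coupling tractable.
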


Combining Prop.~\ref{prop:matrix-freedman} with $\sigma^2 = 6\alpha/\wb{q}$, Lem~\ref{lem:prob-dominance}, the fact that $2\varepsilon/3 \leq 1$ and the value used by Alg.~\ref{alg:distributedalg} $\wb{q} = 39\alpha\log(2n/\delta)/\varepsilon^2$ we obtain
\begin{align*}
\probability\left( \normsmall{\:P_{\{h,l\}} - \wt{\:P}_{\{h,l\}}}_2 \geq \varepsilon\right)
& = \probability\left( \normsmall{\:Y_h} \geq \varepsilon\right)
\leq \probability\left( \normsmall{\wb{\:Y}_h} \geq \varepsilon \cap \normsmall{\:W_h} \leq \sigma^2\right)
+ \probability\left( \normsmall{\:W_h} \geq \sigma^2\right)\\
&\leq 2\nu \cdot \exp\left\{-\frac{\varepsilon^2\wb{q}}{\alpha}\left(\frac{1}{12+2\varepsilon/3}\right)\right\}
+ n \cdot \exp\left\{- 2\frac{\wb{q}}{\alpha}\right\}\\
&\leq 3n \cdot \exp \left\{ - \frac{\varepsilon^2}{13\alpha}\wb{q}\right\}
= 3n \cdot \exp \left\{ - 3\log\left(\frac{2n}{\delta}\right)\right\}\\
&= 3n \cdot \exp \left\{ - \log\left(\left(\frac{2n}{\delta}\right)^3\right)\right\}
= 3n \frac{\delta^3}{8n^3} \leq \frac{\delta}{2n^2}\cdot
\end{align*}
This, combined with the fact that $k \leq n$ since at most we can split our
dataset in $n$ parts, concludes this part of the proof.

\vspace{-0.05in}
\subsection{Proof of Lemma \ref{lem:prob-dominance} (bound on predictable quadratic variation)}\label{ssec:bounding_w}
\vspace{-0.05in}

\newcounter{cnt-lem-quad-variation}
\setcounter{cnt-lem-quad-variation}{1}

\textbf{Step \arabic{cnt-lem-quad-variation}\stepcounter{cnt-lem-quad-variation} (a preliminary bound).}
We start by writing out $\:W_{r}$ for the process $\wb{\:Y}_s$,
\begin{align*}
\:W_r =\frac{1}{\wb{q}^2}\sum_{\{s,i,j\}\leq r} \expectedvalue \left[\left(\frac{\wb{z}_{s-1,i,j}}{\wb{p}_{s-1,i,j}} - \frac{\wb{z}_{s,i,j}}{\wb{p}_{s,i,j}}\right)^{2} \condbar \F_{\{s,i,j\}-1} \right]\concvec_i\concvec_i^\transp\concvec_i\concvec_i^\transp.
\end{align*}
We rewrite the expectation terms in the equation above as
\begin{align*}
\expectedvalue &\left[ 
\left(\frac{\wb{z}_{s-1,i,j}}{\wb{p}_{s-1,i,j}} - \frac{\wb{z}_{s,i,j}}{\wb{p}_{s,i,j}}\right)^{2} \condbar \F_{\{s,i,j\}-1} \right]\\
&= \expectedvalue \left[\frac{\wb{z}_{s-1,i,j}^2}{\wb{p}_{s-1,i,j}^2} -2 \frac{\wb{z}_{s-1,i,j}}{\wb{p}_{s-1,i,j}}\frac{\wb{z}_{s,i,j}}{\wb{p}_{s,i,j}} +\frac{\wb{z}_{s,i,j}^2}{\wb{p}_{s,i,j}^2} \condbar \F_{\{s,i,j\}-1} \right]\\
&\stackrel{(a)}{=} \expectedvalue \left[\frac{\wb{z}_{s-1,i,j}^2}{\wb{p}_{s-1,i,j}^2} -2 \frac{\wb{z}_{s-1,i,j}}{\wb{p}_{s-1,i,j}}\frac{\wb{z}_{s,i,j}}{\wb{p}_{s,i,j}} +\frac{\wb{z}_{s,i,j}^2}{\wb{p}_{s,i,j}^2} \condbar \F_{s-1} \right]\\
&= \frac{\wb{z}_{s-1,i,j}^2}{\wb{p}_{s-1,i,j}^2} -2 \frac{\wb{z}_{s-1,i,j}}{\wb{p}_{s-1,i,j}}\frac{1}{\wb{p}_{s,i,j}}\expectedvalue \left[\wb{z}_{s,i,j}\condbar \F_{s-1} \right] +\frac{1}{\wb{p}_{s,i,j}^2}\expectedvalue \left[\wb{z}_{s,i,j}^2 \condbar \F_{s-1} \right]\\
&\stackrel{(b)}{=} \frac{\wb{z}_{s-1,i,j}}{\wb{p}_{s-1,i,j}^2}
-2 \frac{\wb{z}_{s-1,i,j}}{\wb{p}_{s-1,i,j}}\frac{\wb{z}_{s-1,i,j}}{\wb{p}_{s-1,i,j}}
+\frac{1}{\wb{p}_{s,i,j}^2}\expectedvalue \left[\wb{z}_{s,i,j}\condbar \F_{s-1} \right]\\
&=\frac{1}{\wb{p}_{s,i,j}^2}\expectedvalue \left[\wb{z}_{s,i,j} \condbar \F_{s-1} \right] - \frac{\wb{z}_{s-1,i,j}}{\wb{p}_{s-1,i,j}^2}\\
&\stackrel{(c)}{=}\frac{1}{\wb{p}_{s,i,j}}\frac{\wb{z}_{s-1,i,j}}{\wb{p}_{s-1,i,j}} - \frac{\wb{z}_{s-1,i,j}}{\wb{p}_{s-1,i,j}^2}
=\frac{\wb{z}_{s-1,i,j}}{\wb{p}_{s-1,i,j}}\left(\frac{1}{\wb{p}_{s,i,j}} - \frac{1}{\wb{p}_{s-1,i,j}}\right)\CommaBin
\end{align*}
where in $(a)$ we use the fact that the approximate probabilities $\wb{p}_{s-1,i,j}$ and $\wb{p}_{s,i,j}$ and $\wb{z}_{s-1,i,j}$ are fixed at the end of the previous iteration, while in $(b)$ and $(c)$ we use the fact that $\wb{z}_{s,i,j}$ is a Bernoulli of parameter $\wb{p}_{s,i,j}/\wb{p}_{s-1,i,j}$ (whenever $\wb{z}_{s-1,i,j}$ is equal to 1).
Therefore, we can write $\:W_r$ at the end of the process as
\begin{align*}
\:W_{h} = \:W_{\{h,m,\wb{q}\}} &= \frac{1}{\wb{q}^2}\sum_{j=1}^{\wb{q}} \sum_{i=1}^{\nhl} \sum_{s=1}^h \frac{\wb{z}_{s-1,i,j}}{\wb{p}_{s-1,i,j}}\left(\frac{1}{\wb{p}_{s,i,j}} - \frac{1}{\wb{p}_{s-1,i,j}}\right)\concvec_i\concvec_i^\transp\concvec_i\concvec_i^\transp.
\end{align*}

We can now upper-bound $\:W_h$ as
\begin{align*}
\:W_{h} &\preceq \frac{1}{\wb{q}^2}\sum_{j=1}^{\wb{q}} \sum_{i=1}^{\nhl} \sum_{s=1}^h \frac{\wb{z}_{s-1,i,j}}{\wb{p}_{s-1,i,j}}\left(\frac{1}{\wb{p}_{s,i,j}} - \frac{1}{\wb{p}_{s-1,i,j}}\right)\concvec_i\concvec_i^\transp\concvec_i\concvec_i^\transp\\
&= \frac{1}{\wb{q}^2}\sum_{j=1}^{\wb{q}} \sum_{i=1}^{\nhl} \left(\frac{\wb{z}_{h,i,j}}{\wb{p}_{h,i,j}^2} - \frac{\wb{z}_{h,i,j}}{\wb{p}_{h,i,j}^2} + \sum_{s=1}^h \frac{\wb{z}_{s-1,i,j}}{\wb{p}_{s-1,i,j}}\left(\frac{1}{\wb{p}_{s,i,j}} - \frac{1}{\wb{p}_{s-1,i,j}}\right)\right)\concvec_i\concvec_i^\transp\concvec_i\concvec_i^\transp\\
&= \frac{1}{\wb{q}^2}\sum_{j=1}^{\wb{q}} \sum_{i=1}^{\nhl} \left(\frac{\wb{z}_{h,i,j}}{\wb{p}_{h,i,j}^2} + \left(\sum_{s=1}^{h} -\frac{\wb{z}_{s,i,j}}{\wb{p}_{s,i,j}^2} + \frac{\wb{z}_{s-1,i,j}}{\wb{p}_{s,i,j}\wb{p}_{s-1,i,j}}\right) - \frac{\wb{z}_{0,i,j}}{\wb{p}_{0,i,j}^2}\right)\concvec_i\concvec_i^\transp\concvec_i\concvec_i^\transp\\
&\preceq \frac{1}{\wb{q}^2}\sum_{j=1}^{\wb{q}} \sum_{i=1}^{\nhl} \left(\frac{\wb{z}_{h,i,j}}{\wb{p}_{h,i,j}^2} + \left(\sum_{s=1}^{h}  \frac{\wb{z}_{s-1,i,j}}{\wb{p}_{s,i,j}\wb{p}_{s-1,i,j}} - \frac{\wb{z}_{s,i,j}}{\wb{p}_{s,i,j}\wb{p}_{s-1,i,j}}\right)\right)\concvec_i\concvec_i^\transp\concvec_i\concvec_i^\transp\\
&= \frac{1}{\wb{q}^2}\sum_{j=1}^{\wb{q}} \sum_{i=1}^{\nhl} \left(\frac{\wb{z}_{h,i,j}}{\wb{p}_{h,i,j}^2} + \sum_{s=1}^{h}  \frac{\wb{z}_{s-1,i,j}(1 - \wb{z}_{s,i,j})}{\wb{p}_{s,i,j}\wb{p}_{s-1,i,j}} \right)\concvec_i\concvec_i^\transp\concvec_i\concvec_i^\transp,
\end{align*}
where in the inequality we use the fact $\wb{p}_{s,i,j} \leq \wb{p}_{s-1,i,j}$. From the definition
of $\wb{p}_{s,i,j}$, we know that when $\wb{z}_{s,i,j} = 0$, $\wb{p}_{s,i,j} = \wb{p}_{s-1,i,j}$.
Therefore $\frac{\wb{z}_{s-1,i,j}(1 - \wb{z}_{s,i,j})}{\wb{p}_{s,i,j}\wb{p}_{s-1,i,j}} = \frac{\wb{z}_{s-1,i,j}(1 - \wb{z}_{s,i,j})}{\wb{p}_{s-1,i,j}^2}$, since the term is non-zero only when $\wb{z}_{s,i,j} = 0$.
Finally, we see that only one of the $\wb{z}_{s-1,i,j}(1-\wb{z}_{s,i,j})$ terms can be active for $s \in [h]$ and thus
\begin{align}
\:W_{h}
& \preceq \frac{1}{\wb{q}^2}\sum_{j=1}^{\wb{q}} \sum_{i=1}^{\nhl} \left(\frac{\wb{z}_{h,i,j}}{\wb{p}_{h,i,j}^2} + \sum_{s=1}^{h}  \frac{\wb{z}_{s-1,i,j}(1 - \wb{z}_{s,i,j})}{\wb{p}_{s-1,i,j}^2} \right)\concvec_i\concvec_i^\transp\concvec_i\concvec_i^\transp\nonumber\\
&=\frac{1}{\wb{q}^2}\sum_{j=1}^{\wb{q}} \sum_{i=1}^{\nhl} \left(\max\left\{ \max_{s=1,\dots,h}  \left\{\frac{\wb{z}_{s-1,i,j}(1 - \wb{z}_{s,i,j})}{\wb{p}_{s-1,i,j}^2}\right\} ; \frac{\wb{z}_{h,i,j}}{\wb{p}_{h,i,j}^2}\right\} \right)\concvec_i\concvec_i^\transp\concvec_i\concvec_i^\transp\nonumber\\
&= \frac{1}{\wb{q}^2}\sum_{j=1}^{\wb{q}}\sum_{i=1}^{\nhl}\concvec_i\concvec_i^\transp\concvec_i\concvec_i^\transp\left(\max_{s=0,\dots,h}\left\{\frac{\wb{z}_{s,i,j}}{\wb{p}_{s,i,j}^2}\right\}\right)\cdot\label{eq:dominance-W}
\end{align}

\textbf{Step \arabic{cnt-lem-quad-variation}\stepcounter{cnt-lem-quad-variation} (introduction of a stochastically dominant process).}
We want to study $\max_{s=0,\dots,h}\left\{\frac{\wb{z}_{s,i,j}}{\wb{p}_{s,i,j}^2}\right\}$.
To simplify notation, we will consider $\max_{s=0,\dots,h}\left\{\frac{\wb{z}_{s,i,j}}{\wb{p}_{s,i,j}}\right\}$,
where we removed the square, which will be re-added in the end.
We know trivially that this quantity is larger or equal than, 1 because $\wb{z}_{0,i,j}/\wb{p}_{0,i,j} = 1$,
but upper-bounding this quantity is not trivial as the evolution
of the various $\wb{p}_{s,i,j}$ depends in a complex way on the interaction
between the random variables $\wb{z}_{s,i,j}$.
Nonetheless, whenever $\wb{p}_{s,i,j}$ is significantly
smaller than $\wb{p}_{s-1,i,j}$, the probability of keeping a copy of point $i$ at
iteration $s$ (i.e., $\wb{z}_{s,i,j}=1$) is also very small. As a result, we expect
the ratio $\frac{\wb{z}_{s,i,j}}{\wb{p}_{s,i,j}}$ to be still small with
high probability.

Unfortunately, due to the dependence between different copies
of the point at different iterations, it seems difficult to exploit this intuition directly
to provide an overall high-probability bound on $\:W_{h}$. For this
reason, we simplify the analysis by replacing each of the (potentially
dependent) chains $\{\wb{z}_{s,i,j}/\wb{p}_{s,i,j}\}_{s=0}^{h}$ with a set of
(independent) random variables $w_{0,i,j}$ that will stochastically dominate
them.

We define the random variable $w_{s,i,j}$ using the following conditional
distribution,\footnote{
Notice that unlike $\wb{z}_{s,i,j}$, $w_{s,i,j}$ is no longer $\F_{s}$-measurable but it is $\F'_{s}$-measurable, where 
\begin{align*}
\F'_{\{s,i,j\}} = \left\{ u_{s',i',j'},\; \forall\{s',i',j'\} \leq \{s,i,j\} \right\} \cup \left\{  w_{s,i,j} \right\}
= \F_{\{s,i,j\}} \cup \left\{  w_{s,i,j} \right\}.
\end{align*}
} 
\begin{align*}
\probability\left(\frac{1}{w_{s,i,j}} \leq a \condbar \F_{s}\right)
 = \begin{cases}
0 &\text{ for }\quad a < 1/\wb{p}_{s,i,j}\\
1-\frac{1}{\wb{p}_{s,i,j}a} &\text{ for }\quad 1/\wb{p}_{s,i,j} \leq a < \alpha/p_{h,i}\\
1 &\text{ for }\quad \alpha/p_{h,i} \leq a
\end{cases}.
\end{align*}
To show that this distribution is well defined, we use Lem.~\ref{lem:wbp-always-good}
to guarantee that $1/\wb{p}_{s,i,j} \leq a < \alpha/p_{h,i}$.
Note that the distribution of $\frac{1}{w_{s,i,j}}$ conditioned on $\F_{s}$
is determined by only $\wb{p}_{s,i,j}$, $p_{h,i}$, and~$\alpha$, where~$p_{h,i}$ and~$\alpha$
are fixed. Remembering that $\wb{p}_{s,i,j}$ is a function of
$\F_{s-1}$ (computed using the previous iteration),
we have that  
\begin{align*}
\probability\left(\frac{1}{w_{s,i,j}} \leq a \condbar \F_{s}\right)
= \probability\left(\frac{1}{w_{s,i,j}} \leq a \condbar \F_{s-1}\right).
    \end{align*}
Notice that  in the definition of $w_{s,i,j}$, none of the other $w_{s',i',j'}$
(for any different $s'$, $i'$, or $j'$) appears
and $\wb{p}_{s,i,j}$ is a function of
$\F_{s-1}$. It follows that given  $\F_{s-1}$, $w_{s,i,j}$ is independent from all other $w_{s',i',j'}$
(for any different $s'$, $i'$, or $j'$).
\begin{figure}[t]\label{fig:rand-var-dep-graph}
\begin{center}
\includegraphics[width=0.8\textwidth]{./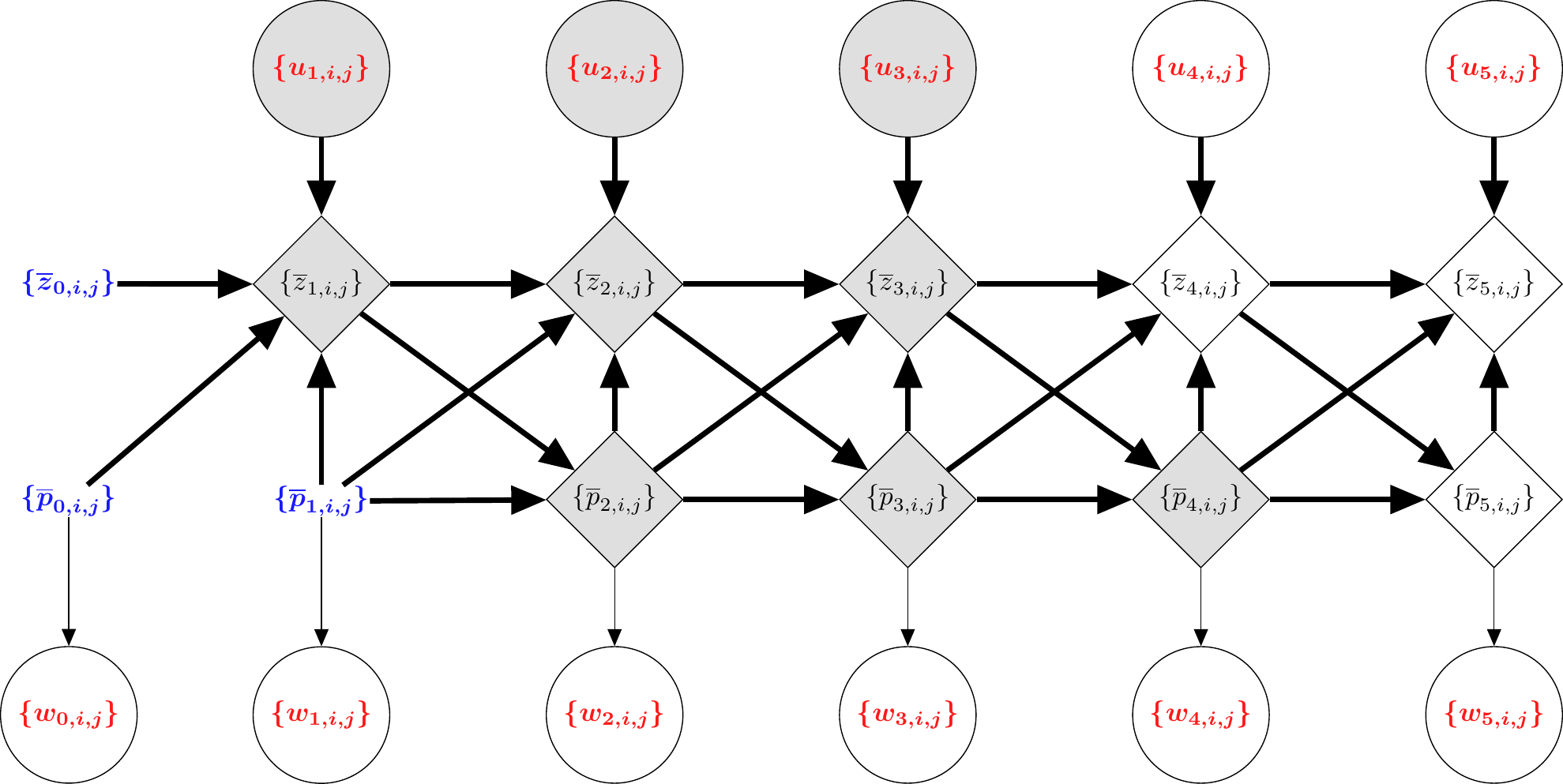}
\end{center}
\caption{The dependence graph of the considered variables. \rcolb{Red} variables are \rcolb{random}. Black variables are deterministically computed using their input (a function of their input), with bold lines indicating the deterministic (functional) relation. \bcolb{Blue} variables are \bcolb{constants}. A \colorbox{gray!30}{grey filling} indicates that a random variable is \colorbox{gray!30}{observed} or a function of observed variables.}
\label{fig:rand-var-dep-graph}
\end{figure}
This is easier to see in the probabilistic graphical model reported in
Fig.~\ref{fig:rand-var-dep-graph}, which illustrates the dependence between
the various variables.

Finally for  the special case $w_{0,i,j}$ the definition above reduces to
\begin{align}\label{eq:distro.w}
\probability\left(\frac{1}{w_{0,i,j}} \leq a\right)
 = \begin{cases}
0 &\text{ for }\quad a < 1\\
1-\frac{1}{a} &\text{ for }\quad 1 \leq a < \alpha/p_{h,i}\\
1 &\text{ for }\quad \alpha/p_{h,i} \leq a
\end{cases},
\end{align}
since $\wb{p}_{0,i,j}=1$ by definition.
From this definition, $w_{0,i,j}$ and $w_{0,i',j'}$ are all
independent, and this will allow us to use stronger concentration inequalities
for independent random variables.

\textbf{Step \arabic{cnt-lem-quad-variation}\stepcounter{cnt-lem-quad-variation} {Proving the dominance}.}
We remind the reader that a random variable $A$ stochastically dominates random
variable $B$, if for all values $a$ the two equivalent conditions are verified,
\begin{align*}
\probability(A \geq a) \geq \probability(B \geq a) \Leftrightarrow \probability(A \leq a) \leq \probability(B \leq a).
\end{align*}
As a consequence, if $A$ dominates $B$, the following implication holds,
\begin{align*}
\probability(A \geq a) \geq \probability(B \geq a) \implies \expectedvalue[A] \geq \expectedvalue[B],
\end{align*}
while the reverse ($A$ dominates $B$, if $\expectedvalue[A] \geq \expectedvalue[B]$) is not
true in general.
Following this definition of  stochastic dominance, our goal is to prove
\begin{align*}
\probability\left(\max_{s=0}^h\frac{\wb{z}_{s,i,j}}{\wb{p}_{s,i,j}} \leq a\right)
\geq 
\probability\left(\frac{1}{w_{0,i,j}} \leq a \right).
\end{align*}
We prove this inequality by proceeding backwards with a sequence of conditional probabilities.
We first study the distribution of the maximum conditional to the state of the algorithm at the end of iteration $h$, i.e., $\Fp_{h}$. From the definition of $w_{h,i,j}$, we know that,
w.p.~1, $1/\wb{p}_{h,i} \leq 1/w_{h,i,j}$.
Therefore,
\begin{align*}
&\probability\left(\max_{s=0,\dots,h}\frac{\wb{z}_{s,i,j}}{\wb{p}_{s,i,j}} \leq a\right)
\geq \probability\left(\max\left\{ \max_{s=0,\dots,h-1}\frac{\wb{z}_{s,i,j}}{\wb{p}_{s,i,j}} ; \frac{\wb{z}_{h,i,j}}{w_{h,i,j}}\right\} \leq a \right).
\end{align*}Now focus on an arbitrary intermediate step $1\leq k\leq h$, where we fix $\Fp_{k-1}$. Since~$u_{k,i,j}$ and
$w_{k,i,j}$ are independent given $\Fp_{k-1}$, we have
\begin{align}
\probability\left( \frac{\wb{z}_{k,i,j}}{w_{k,i,j}} \leq a \condbar \Fp_{k-1}\right)
&=\probability\left( \indfunc\left\{u_{k,i,j} \leq \frac{\wb{p}_{k,i,j}}{\wb{p}_{k-1,i,j}}\right\}\frac{1}{w_{k,i,j}} \leq a \condbar \Fp_{k-1}\right)\nonumber\\
&=
\begin{cases}
0 &\text{ for }\quad a \leq 0 \nonumber\\
1 - \frac{\wb{p}_{k,i,j}}{\wb{p}_{k-1,i,j}} &\text{ for }\quad  0 \leq a < 1/\wb{p}_{k,i,j}\\
1-\frac{\wb{p}_{k,i,j}}{\wb{p}_{k-1,i,j}} +\frac{\wb{p}_{k,i,j}}{\wb{p}_{k-1,i,j}}\left(1-\frac{1}{\wb{p}_{k,i,j}a}\right) = 1-\frac{1}{\wb{p}_{k-1,i,j}a}  &\text{ for }\quad 1/\wb{p}_{k,i,j} \leq a < \alpha/p_{h,i} \\ 
1 &\text{ for }\quad \alpha/p_{h,i} \leq a
\end{cases}\nonumber\\
&\geq 
\begin{cases}
0 &\text{ for }\quad a < 1/\wb{p}_{k-1,i,j} \\
1-\frac{1}{\wb{p}_{k-1,i,j}a} &\text{ for }\quad  1/\wb{p}_{k-1,i,j} \leq a < 1/\wb{p}_{k,i,j} \\
1-\frac{1}{\wb{p}_{k-1,i,j}a}  &\text{ for }\quad 1/\wb{p}_{k,i,j} \leq a < \alpha/p_{h,i}  \\
1 &\text{ for }\quad \alpha/p_{h,i}  \leq a 
\end{cases}\label{w.stoch123}\\
&=\probability\left(\frac{1}{w_{k-1,i,j}} \leq a \condbar \Fp_{k-2}\right)
=\probability\left(\frac{1}{w_{k-1,i,j}} \leq a \condbar \Fp_{k-1}\right) \nonumber,
\end{align}
where the inequality is also represented in Fig.~\ref{fig:dominance}.
\begin{figure}[t]
\centering
\includegraphics[width=0.6\textwidth]{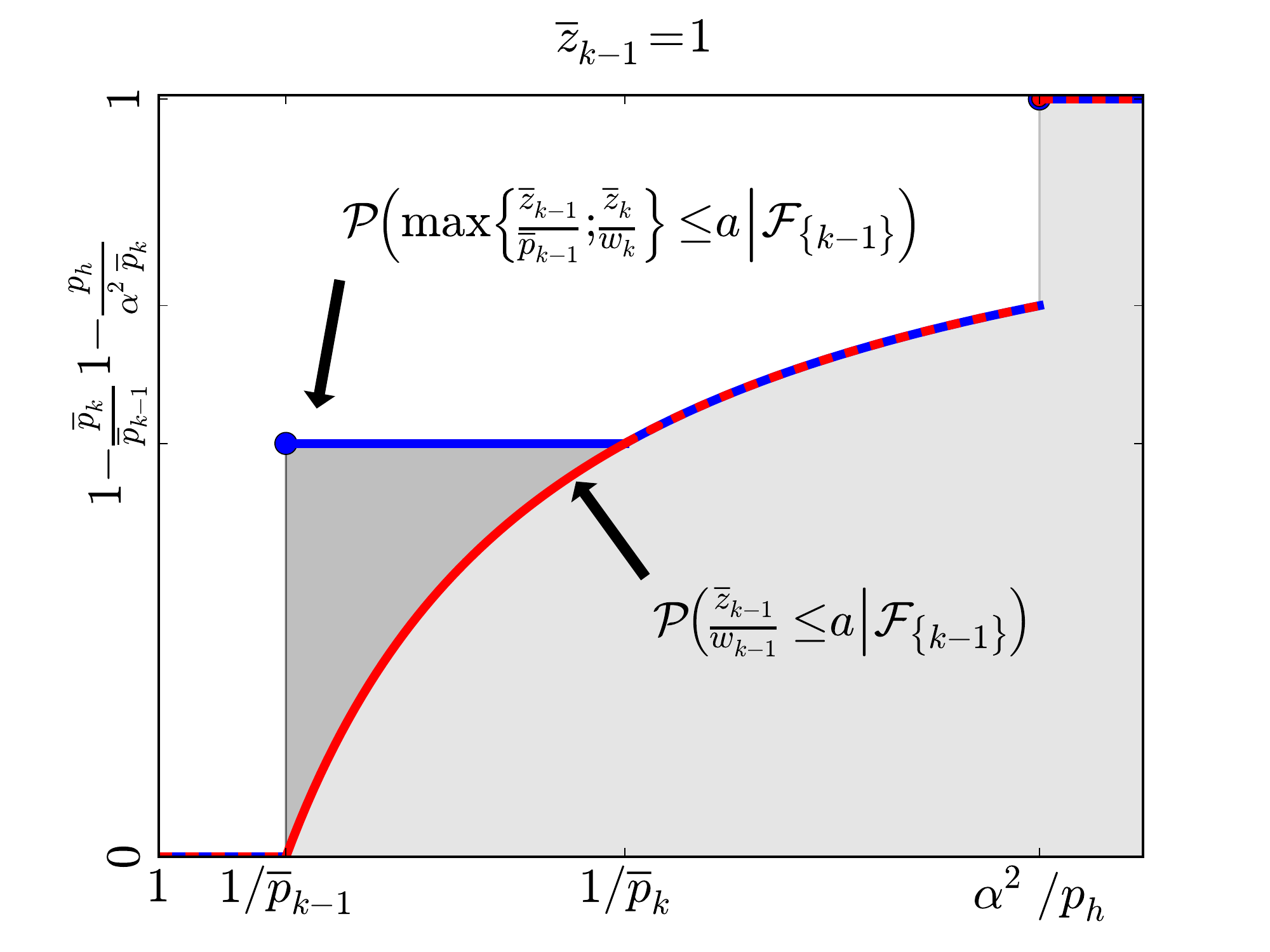}
\caption{C.d.f.\,of $\max\left\{ \wb{z}_{k-1,i,j}/\wb{p}_{t,i,j} ; \wb{z}_{k,i,j}/\wb{p}_{k,i,j}\right\}$ and
$\wb{z}_{k-1,i,j}/w_{k-1,i,j}$ conditioned on $\mathcal{F}_{\{k-1\}}$. For conciseness, we omit the $i,j$ indices.}\label{fig:dominance}
\end{figure}
We now proceed by peeling off layers from the end of the chain one by one, taking advantage of the dominance we just proved. Fig.~\ref{fig:dominance} visualizes one step of the peeling when $\wb{z}_{k-1,i,j} = 1$ (note that the peeling is trivially true when $\wb{z}_{k-1,i,j} = 0$ since the whole chain terminated at step $\wb{z}_{k-1,i,j}$). We show how to move from an iteration $k\leq h$ to $k-1$.
\begin{align*}\label{eq:generic.case}
& \probability\left(\max\left\{ \max_{s=0 \dots k-1}\frac{\wb{z}_{s,i,j}}{\wb{p}_{s,i,j}} ; \frac{\wb{z}_{k,i,j}}{w_{k,i,j}}\right\} \leq a \right)
=\expectedvalue_{\Fp_{k-1}}\left[\probability\left(\max\left\{ \max_{s=0 \dots k-1}\frac{\wb{z}_{s,i,j}}{\wb{p}_{s,i,j}} ; \frac{\wb{z}_{k,i,j}}{w_{k,i,j}}\right\} \leq a \condbar \Fp_{k-1} \right)\right]\nonumber\\
&\stackrel{(a)}{\geq}\expectedvalue_{\Fp_{k-1}}\left[\probability\left(\max\left\{ \max_{s=0 \dots k-1}\frac{\wb{z}_{s,i,j}}{\wb{p}_{s,i,j}} ; \frac{\wb{z}_{k-1,i,j}}{w_{k-1,i,j}}\right\} \leq a \condbar \Fp_{k-1} \right)\right]\nonumber\\
&=\expectedvalue_{\Fp_{k-1}}\left[\probability\left(\max\left\{ \max_{s=0 \dots k-2}\frac{\wb{z}_{s,i,j}}{\wb{p}_{s,i,j}} ;\frac{\wb{z}_{k-1,i,j}}{\wb{p}_{k-1,i,j}}; \frac{\wb{z}_{k-1,i,j}}{w_{k-1,i,j}}\right\} \leq a \condbar \Fp_{k-1} \right)\right]\nonumber\\
&=\expectedvalue_{\Fp_{k-1}}\left[\probability\left(\max\left\{ \max_{s=0 \dots k-2}\frac{\wb{z}_{s,i,j}}{\wb{p}_{s,i,j}} ;\wb{z}_{k-1,i,j}\max\left\{\frac{1}{\wb{p}_{k-1,i,j}}; \frac{1}{w_{k-1,i,j}}\right\}\right\} \leq a \condbar \Fp_{k-1} \right)\right]\nonumber\\
&\stackrel{(b)}{=}\expectedvalue_{\Fp_{k-1}}\left[\probability\left(\max\left\{ \max_{s=0 \dots k-2}\frac{\wb{z}_{s,i,j}}{\wb{p}_{s,i,j}} ;\frac{\wb{z}_{k-1,i,j}}{w_{k-1,i,j}}\right\} \leq a \condbar \Fp_{k-1} \right)\right]
= \probability\left(\max\left\{ \max_{s=0 \dots k-2}\frac{\wb{z}_{s,i,j}}{\wb{p}_{s,i,j}} ; \frac{\wb{z}_{k-1,i,j}}{w_{k-1,i,j}}\right\} \leq a \right),\nonumber
\end{align*}where in $(a)$, given $\F_{k-1}$, everything is fixed except $u_{k,i,j}$ and
$w_{k,i,j}$ and we can use the stochastic dominance in~\eqref{w.stoch123}, and
in $(b)$ we use the fact that the inner maximum is always attained by
$1/w_{k,i,j}$ since by definition $1/w_{k-1,i,j}$ is lower-bounded by
$1/\wb{p}_{k-1,i,j}$.
Applying the inequality recursively from $k=h$ to $k=1$ 
removes all $\wb{z}_{s,i,j}$ from the maximum and we
are finally left with only $w_{0,i,j}$ as we wanted,
\begin{align*}
\probability\left(\max_{s=0,\dots,h}\frac{\wb{z}_{s,i,j}}{\wb{p}_{s,i,j}} \leq a\right)
\geq 
\probability\left(\max\left\{\frac{\wb{z}_{0,i,j}}{\wb{p}_{0,i,j}} ; \frac{\wb{z}_{0,i,j}}{w_{0,i,j}}\right\} \leq a \right)
\geq 
\probability\left(\frac{1}{w_{0,i,j}} \leq a \right),
\end{align*}
where in the last inequality we used that $\wb{z}_{0,i,j} = 1$ from the definition of the algorithm 
and $\wb{p}_{0,i,j} = 1$ while $w_{0,i,j} \leq 1$ by~\eqref{eq:distro.w}.

\textbf{Step \arabic{cnt-lem-quad-variation}\stepcounter{cnt-lem-quad-variation} (stochastic dominance on $\:W_{h}$).}
Now that we proved the stochastic dominance of $1/w_{0,i,j}$, we plug this
result in the definition of $\:W_{h}$. For the sake of notation, we introduce
the term $\wb{p}_{h',i,j}^{\max}$
to indicate the maximum over the first $h'$ step of copy $i,j$ such that
\begin{align*}
\max_{s=0,\dots,h'}\frac{\wb{z}_{s,i,j}}{\wb{p}_{s,i,j}} = \frac{1}{\wb{p}_{h',i,j}^{\max}}\cdot
\end{align*}
We first notice that while $\wb{\:Y}_{h}$ is not
necessarily PSD, $\:W_{h}$ is a sum of PSD matrices.
Introducing the function $\Lambda(\{1/\wb{p}_{h,i,j}^{\max}\}_{i,j})$ we can restate Eq.~\ref{eq:dominance-W} as
\begin{align*}
\normsmall{\:W_{h}}
=\lambda_{\max}(\:W_{h})
\leq \Lambda(\{1/\wb{p}_{h,i,j}^{\max}\}_{i,j}) \eqdef
\lambda_{\max}\left(\frac{1}{\wb{q}^2}\sum_{j=1}^{\wb{q}}\sum_{i=1}^{\nhl}\left(\frac{1}{\wb{p}_{h,i,j}^{\max}}\right)^2\concvec_i\concvec_i^\transp\concvec_i\concvec_i^\transp\right).
\end{align*}
In Step 4, we showed that $1/\wb{p}_{h,i,j}^{\max}$ is stochastically
dominated by $1/w_{0,i,j}$ for every $i$ and~$j$. In order to bound
$\Lambda(\{1/\wb{p}_{h,i,j}^{\max}\}_{i,j})$,
we need to show that this dominance also applies to the summation
over all columns inside the matrix norm. We
can reformulate $\Lambda(\{1/\wb{p}_{h,i,j}^{\max}\}_{i,j})$ as
\begin{align*}
&\lambda_{\max}\left(\frac{1}{\wb{q}^2}\sum_{j=1}^{\wb{q}}\sum_{i=1}^{\nhl}\left(\frac{1}{\wb{p}_{h,i,j}^{\max}}\right)^2\concvec_i\concvec_i^\transp\concvec_i\concvec_i^\transp\right)
= \max_{\:x : \normsmall{\:x} = 1} \:x^\transp\left(\frac{1}{\wb{q}^2}\sum_{j=1}^{\wb{q}}\sum_{i=1}^{\nhl}\left(\frac{1}{\wb{p}_{h,i,j}^{\max}}\right)^2\concvec_i\concvec_i^\transp\concvec_i\concvec_i^\transp\right)\:x\\
&= \max_{\:x : \normsmall{\:x} = 1}\frac{1}{\wb{q}^2}\sum_{j=1}^{\wb{q}}\sum_{i=1}^{\nhl}\left(\frac{1}{\wb{p}_{h,i,j}^{\max}}\right)^2\normsmall{\concvec_i}_{2}^2\:x^\transp\concvec_i\concvec_i^\transp\:x
= \max_{\:x : \normsmall{\:x} = 1}\frac{1}{\wb{q}^2}\sum_{j=1}^{\wb{q}}\sum_{i=1}^{\nhl}\left(\frac{1}{\wb{p}_{h,i,j}^{\max}}\right)^2\left(\normsmall{\concvec_i}_{2}\concvec_i^\transp\:x\right)^2.
\end{align*}
From this reformulation, it is easy to see that, because $1/\wb{p}_{h,i,j}^{\max}$
is strictly positive,
the function $\Lambda(\{1/\wb{p}_{h,i,j}^{\max}\}_{i,j})$ is
monotonically increasing w.r.t.\,the individual $1/\wb{p}_{h,i,j}^{\max}$, or
in other words that increasing an $1/\wb{p}_{h,i,j}^{\max}$ without decreasing
the others can only increase the maximum.
Introducing $\Lambda(\{1/w_{0,i,j}\}_{i,j})$ as
\begin{align*}
\Lambda(\{1/w_{0,i,j}\}_{i,j}) \eqdef \max_{\:x : \normsmall{\:x} = 1}\frac{1}{\wb{q}^2}\sum_{j=1}^{\wb{q}}\sum_{i=1}^{\nhl}\left(\frac{1}{w_{0,i,j}}\right)^2\left(\normsmall{\concvec_i}_{2}\concvec_i^\transp\:x\right)^2,
\end{align*}
we now need to prove the stochastic dominance of $\Lambda(\{1/w_{0,i,j}\}_{i,j})$ over
$\Lambda(\{1/\wb{p}_{h,i,j}^{\max}\}_{i,j})$.
Using the definition of $1/\wb{p}_{h,i,j}^{\max}$, $w_{h,i,j}$, and the
monotonicity of $\Lambda$ we have
\begin{align*}
\probability\left(\Lambda\left(\left\{\frac{1}{\wb{p}_{h,i,j}^{\max}}\right\}_{i,j}\right) \leq a\right)
&=\probability\left(\Lambda\left(\left\{\max\left\{ \max_{s=0,\dots,h-1}\frac{\wb{z}_{s,i,j}}{\wb{p}_{s,i,j}} ; \frac{\wb{z}_{h,i,j}}{\wb{p}_{h,i,j}}\right\}\right\}_{i,j}\right) \leq a \right)\\
&\geq \probability\left(\Lambda\left(\left\{\max\left\{ \max_{s=0,\dots,h-1}\frac{\wb{z}_{s,i,j}}{\wb{p}_{s,i,j}} ; \frac{\wb{z}_{h,i,j}}{w_{h,i,j}}\right\}\right\}_{i,j}\right) \leq a \right).
\end{align*}
Now pick $1\leq k \leq h$, for a fixed $\Fp_{k-1}$, $\frac{1}{\wb{p}_{k-1,i,j}^{\max}}$ is a constant
and $\max\left\{\frac{1}{\wb{p}_{k,i,j}^{\max}}; x\right\}$ is a monotonically
increasing function in $x$, making $\Lambda\left(\max\left\{\frac{1}{\wb{p}_{k,i,j}^{\max}}; x\right\}\right)$ also an increasing function. Therefore, we have
\begin{align*}
&\probability\left(\Lambda\left(\left\{\max\left\{ \frac{1}{\wb{p}_{k-1,i,j}^{\max}} ; \frac{\wb{z}_{k,e,j}}{w_{k,i,j}}\right\} \right\}_{i,j}\right) \leq a \right)
=\expectedvalue_{\Fp_{k-1}}\left[\probability\left(\Lambda\left(\left\{\max\left\{ \frac{1}{\wb{p}_{k-1,i,j}^{\max}} ; \frac{\wb{z}_{k,e,j}}{w_{k,i,j}}\right\}\right\}_{i,j}\right) \leq a \condbar \Fp_{k-1}\right)\right]\\
&\stackrel{(a)}{\geq}\expectedvalue_{\Fp_{k-1}}\left[\probability\left(\Lambda\left(\left\{\max\left\{ \frac{1}{\wb{p}_{k-1,i,j}^{\max}} ; \frac{\wb{z}_{k-1,i,j}}{w_{k-1,i,j}}\right\}\right\}_{i,j}\right) \leq a \condbar \Fp_{k-1}\right)\right]\\
&\stackrel{(b)}{=}\expectedvalue_{\Fp_{k-1}}\left[\probability\left(\Lambda\left(\left\{\max\left\{ \frac{1}{\wb{p}_{k-2,i,j}^{\max}} ; \frac{\wb{z}_{k-1,i,j}}{w_{k-1,i,j}}\right\}\right\}_{i,j}\right) \leq a \condbar \Fp_{k-1}\right)\right],
\end{align*}
where inequality (a) follows from the fact that stochastic dominance is preserved
by monotonically increasing functions \cite{levy2015stochastic}, such as
$\Lambda$, combined with the fact that for
a fixed $\Fp_{k-1}$ the variables $\wb{z}_{k,i,j}$ and $w_{k,i,j}$ are all
independent and (b) from the definition of $1/\wb{p}_{k-1,i,j}^{\max}$ and the
fact that by definition $1/w_{k-1,i,j}$ is lower-bounded by $1/\wb{p}_{k-1,i,j}$.
We can iterate this inequality to obtain the desired result
\begin{align*}
\probability(\normsmall{\:W_{h}} \geq \sigma^2)
&\leq \probability\left(\Lambda\left(\left\{\frac{1}{\wb{p}_{h,i,j}^{\max}}\right\}_{i,j}\right) \geq \sigma^2\right)
\leq \probability\left(\lambda_{\max}\left(\frac{1}{\wb{q}^2}\sum_{j=1}^{\wb{q}}\sum_{i=1}^{\nhl}\left(\frac{1}{w_{0,i,j}}\right)^2\concvec_i\concvec_i^\transp\concvec_i\concvec_i^\transp\right) \geq \sigma^2\right).
\end{align*}

\textbf{Step \arabic{cnt-lem-quad-variation}\stepcounter{cnt-lem-quad-variation} (concentration inequality).}
Since all $w_{0,i,j}$ are (unconditionally) independent from each other, we can apply the following theorem.
\begin{proposition}[\citet{tropp2015an-introduction}, Theorem~5.1.1]\label{prop:matrix-chernoff}
Consider a finite sequence $\{ \:X_k : k = 1, 2, 3, \dots \}$ whose values are independent, random, PSD Hermitian matrices with dimension $d$.  Assume that each term in the sequence is uniformly bounded in the sense that
\begin{align*}
\lambda_{\max}( \:X_k ) \leq L
\quad\text{almost surely}
\quad\text{for $k = 1, 2, 3, \dots$}.
\end{align*}
Introduce the random matrix
$\:V \eqdef \sum_{k}  \:X_k$, and
the maximum eigenvalue of its expectation
\begin{align*}
\mu_{\max} \eqdef \lambda_{\max}(\expectedvalue\left[\:V\right]) = \lambda_{\max}\left(\sum_{k} \expectedvalue\left[\:X_k\right]\right).
\end{align*}
Then, for all $h \geq 0$,
\begin{align*}
\probability\left(  \lambda_{\max}(\:V) \geq (1+h)\mu_{\max}  \right)
	&\leq d \cdot \left[\frac{e^{h}}{(1+h)^{1+h}}\right]^{\frac{\mu_{\max}}{L}}\\
	&\leq d \cdot \exp \left\{ - \frac{\mu_{\max}}{L}((h+1)\log(h+1) - h) \right\}\cdot
\end{align*}
\end{proposition}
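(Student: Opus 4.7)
The plan is to follow the matrix Laplace transform approach. The starting point is the matrix Markov inequality: for any $\theta > 0$,
\begin{align*}
\probability(\lambda_{\max}(\:V) \geq t) \leq e^{-\theta t}\, \expectedvalue[\Tr \exp(\theta \:V)],
\end{align*}
which follows because $e^{\theta \lambda_{\max}(\:V)} = \lambda_{\max}(\exp(\theta \:V)) \leq \Tr \exp(\theta \:V)$ (the trace dominates the largest eigenvalue of a PSD matrix). Setting $t = (1+h)\mu_{\max}$, what remains is to control $\expectedvalue[\Tr \exp(\theta \:V)]$ and then tune $\theta$.

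The main technical ingredient is Lieb's concavity theorem, which asserts that $\:H \mapsto \Tr \exp(\:C + \log \:H)$ is concave on the positive definite cone for any fixed self-adjoint $\:C$. Applying Jensen's inequality iteratively across the independent summands $\:X_k$ yields the subadditivity of the matrix cumulant generating function,
\begin{align*}
\expectedvalue[\Tr \exp(\theta \:V)] \leq \Tr \exp\!\Big(\textstyle\sum_{k} \log \expectedvalue[e^{\theta \:X_k}]\Big).
\end{align*}
Each matrix MGF is then bounded using the hypothesis $\:0 \preceq \:X_k \preceq L\:I$: the scalar inequality $e^{\theta x} \leq 1 + (e^{\theta L}-1)x/L$ for $x \in [0,L]$ lifts, by applying it to the eigenvalues of $\:X_k$, to the semidefinite inequality $e^{\theta \:X_k} \preceq \:I + g(\theta)\,\:X_k$, where $g(\theta) = (e^{\theta L}-1)/L$. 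Taking expectations and then using the operator inequality $\log(\:I + \:A) \preceq \:A$ (valid for $\:A \succeq \:0$) gives $\log \expectedvalue[e^{\theta \:X_k}] \preceq g(\theta)\, \expectedvalue[\:X_k]$. Summing over $k$ and combining the monotonicity of $\lambda_{\max}$ under the semidefinite order with $\Tr \exp(\:A) \leq d\, e^{\lambda_{\max}(\:A)}$ yields
\begin{align*}
\expectedvalue[\Tr \exp(\theta \:V)] \leq d\, \exp\!\big(g(\theta)\, \mu_{\max}\big).
\end{align*}

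Combining this with the Markov step produces $\probability(\lambda_{\max}(\:V) \geq (1+h)\mu_{\max}) \leq d \exp(-\theta(1+h)\mu_{\max} + g(\theta)\mu_{\max})$, and choosing $\theta = L^{-1}\log(1+h)$ makes $g(\theta) L = h$, so the exponent collapses to $(\mu_{\max}/L)\bigl(h - (1+h)\log(1+h)\bigr)$, which is exactly the stated bound; the second form is just the first rewritten. I expect the main obstacle, were one deriving this from scratch rather than citing Tropp, to be the non-commutativity: Lieb's concavity is what replaces the scalar identity $\expectedvalue[e^{\theta \sum_k X_k}] = \prod_k \expectedvalue[e^{\theta X_k}]$, which fails here because the $\:X_k$ need not commute. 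Once Lieb's theorem (or the weaker Ahlswede--Winter substitute) is granted, the rest --- the operator Chernoff lemma $e^{\theta \:X_k} \preceq \:I + g(\theta)\:X_k$, the inequality $\log(\:I+\:A) \preceq \:A$, and the scalar optimization of $\theta$ --- mirrors the classical scalar Chernoff derivation.
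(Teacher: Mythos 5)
The paper does not prove this proposition at all; it is imported verbatim from \citet{tropp2015an-introduction} (Theorem~5.1.1) and used as a black box. Your sketch correctly reproduces the standard argument from that reference --- matrix Laplace transform, Lieb/Jensen subadditivity of the cumulant generating function, the operator Chernoff bound $e^{\theta \:X_k} \preceq \:I + g(\theta)\:X_k$, and the choice $\theta = L^{-1}\log(1+h)$, which indeed collapses the exponent to $(\mu_{\max}/L)(h-(1+h)\log(1+h))$ --- so it is correct and matches the source's approach.
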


In our case, we have
\begin{align*}
\:X_{\{i,j\}}
= \frac{1}{\wb{q}^2}\frac{1}{w_{0,i,j}}\concvec_i\concvec_i^\transp\concvec_i\concvec_i^\transp
\preceq \frac{1}{\wb{q}^2}\frac{\alpha^2}{p_{h,i}^2}\concvec_i\concvec_i^\transp\concvec_i\concvec_i^\transp
\preceq \frac{1}{\wb{q}^2}\frac{\alpha^2}{p_{h,i}^2}\normsmall{\concvec_i\concvec_i^\transp}^2\:I
\preceq \frac{\alpha^2}{\wb{q}^2}\:I,
\end{align*}
where the first inequality follows from the definition of $w_{0,i,j}$ in Eq.~\ref{eq:distro.w}, the second
from the PSD ordering, and the third from the definition of $\normsmall{\concvec_i\concvec_i^\transp}$.

Therefore, we can use $L \eqdef \alpha^2/\wb{q}^2$ for the purpose of Prop.~\ref{prop:matrix-chernoff}. We need now to compute $\expectedvalue\left[\:X_k\right]$,
that we can use in turn to compute $\mu_{\max}$. We begin by computing the expected value of $1/w_{0,i,j}$.
Let us denote the c.d.f.\@ of $1/w_{0,i,j}^2$ as
\begin{align*}
F_{1/w_{0,i,j}^2}(a) = \probability\left(\frac{1}{w_{0,i,j}^2} \leq a\right) = \probability\left(\frac{1}{w_{0,i,j}} \leq \sqrt{a}\right)
 = \begin{cases}
0 &\text{ for }\quad a < 1\\
1-\frac{1}{\sqrt{a}} &\text{ for }\quad 1 \leq a < \alpha^2/p_{h,i}^2\\
1 &\text{ for }\quad \alpha^2/p_{h,i}^2 \leq a
\end{cases}.
\end{align*}
Since $\probability\left(1/w_{0,i,j}^2 \ge 0\right) = 1$, we have that 
\newcommand*\diff{\mathop{}\!\mathrm{d}}
\begin{align*}
&\expectedvalue\left[\frac{1}{w_{0,i,j}}\right]
= \int_{a=0}^\infty \left[1-F_{1/w_{0,i,j}}(a)\right]\diff a\\
&= \int_{a=0}^1 \left(1 - F_{1/w_{0,i,j}}(a)\right)\diff a + \int_{a=1}^{\alpha^2/p_{h,i}^2} \left(1 - F_{1/w_{0,i,j}}(a)\right)\diff a + \int_{a=\alpha^2/p_{h,i}^2}^{\infty} \left(1 - F_{1/w_{0,i,j}}(a)\right)\diff a\\
&= \int_{a=0}^1 \left(1 - 0\right)\diff a + \int_{a=1}^{\alpha^2/p_{h,i}^2} \left(1 - \left(1 - \frac{1}{\sqrt{a}}\right) \right)\diff a +\int_{a=\alpha^2/p_{h,i}^2}^{\infty}(1-1)\diff a \\
&= \int_{a=0}^1 \diff a + \int_{a=1}^{\alpha^2/p_{h,i}^2} \frac{1}{\sqrt{a}}\diff a
= 1+[2\sqrt{a}]_{1}^{\alpha^2/p_{h,i}^2}
= 2\alpha/p_{h,i} - 1.
\end{align*}
Therefore,
\begin{align*}
\mu_{\max} &= \lambda_{\max}(\expectedvalue\left[\:V\right]) = \lambda_{\max}\Big(\sum_{\{i,j\}} \expectedvalue\left[\:X_{\{i,j\}}\right]\Big)
= \lambda_{\max}\left(\frac{1}{\wb{q}^2}\sum_{j=1}^{\wb{q}}\sum_{i=1}^{\nhl}\expectedvalue\left[\frac{1}{w_{0,i,j}^2}\right]\concvec_i\concvec_i^\transp\concvec_i\concvec_i^\transp\right)\\
&= \lambda_{\max}\left(\frac{1}{\wb{q}}\sum_{i=1}^{\nhl}\left(\frac{2\alpha}{p_{h,i}} - 1\right)p_{h,i}\concvec_i\concvec_i^\transp\right)
\leq \lambda_{\max}\left(\frac{2\alpha}{\wb{q}}\sum_{i=1}^{\nhl}\concvec_i\concvec_i^\transp\right)
= \frac{2\alpha}{\wb{q}}\lambda_{\max}\left(\:P\right)
\leq \frac{2\alpha}{\wb{q}} \eqdef L.
\end{align*}
Therefore, selecting $h = 2$, $\sigma^2=6\alpha/\wb{q}$ and applying Prop.~\ref{prop:matrix-chernoff} we have
\begin{align*}
\probability\left(\normsmall{\:W_h} \geq \sigma^2\right)
&\leq \probability\left(\lambda_{\max}\left(\frac{1}{\wb{q}^2}\sum_{j=1}^{\wb{q}}\sum_{i=1}^{\nhl}\frac{1}{w_{0,i,j}^2}\concvec_i\concvec_i^\transp\concvec_i\concvec_i^\transp\right) \geq (1+2)\frac{2\alpha}{\wb{q}}\right)\\
&\leq \nhl \cdot \exp \left\{ - \frac{2\alpha}{\wb{q}}\frac{\wb{q}^2}{\alpha^2}(3\log(3) - 2) \right\}
\leq n \cdot \exp \left\{ - \frac{2\wb{q}}{\alpha} \right\}\cdot
\end{align*}

\vspace{-0.05in}
\subsection{Space complexity bound}\label{ssec:space-complexity}
\vspace{-0.05in}
Denote with $A$ the event $A = \left\{\forall  h' \in \{1, \dots, h\} :  \normsmall{\:P^{h'} - \wt{\:P}^{h'}}_2 \leq \varepsilon\right\}$, and again  $\nhl = |\dataset_{\{h,l\}}|$. Letting $q~=~|\coldict_{\{h,l\}}| = \sum_{i=1}^{\nhl} q_{h,i} = \sum_{j=1}^{\wb{q}}\sum_{i=1}^{\nhl}  z_{h,i,j}$ be the random number
of points in $\coldict_{\{h,l\}}$, we reformulate
\begin{align*}
    &\probability\left(|\coldict_{\{h,l\}}| \geq 3\wb{q}\deff{\gamma}_{\{h,l\}} \cap \left\{\forall  h' \in \{1, \dots, h\} :  \left(\normsmall{\:P^{h'} - \wt{\:P}^{h'}}_2 \leq \varepsilon\right) \leq \varepsilon\right\}\right)\\
    &= \probability\left(|\coldict_{\{h,l\}}| \geq 3\wb{q}\deff{\gamma}_{\{h,l\}} \cap A\right)
= \probability\left(\sum_{j=1}^{\wb{q}}\sum_{i=1}^{\nhl}  z_{h,i,j} \geq 3\wb{q}\deff{\gamma}_{\{h,l\}} \cap A\right)\\
&=\probability\left(  \sum_{j=1}^{\wb{q}}\sum_{i=1}^{\nhl} z_{h,i,j} \geq 3\wb{q}\deff{\gamma}_{\{h,l\}} \condbar A \right)
\probability\left(A \right).
\end{align*}
While we do know that the $z_{h,i,j}$ are Bernoulli random variables (since
they are either 0 or 1), it is not easy to compute the success probability
of each $z_{h,i,j}$, and in addition there could be dependencies between
$z_{h,i,j}$ and $z_{h,i',j'}$. Similarly to Lem.~\ref{lem:prob-dominance},
we are going to find a stochastic variable to dominate $z_{h,i,j}$.
Denoting with $u'_{s,i,j} \sim \mathcal{U}(0,1)$ a uniform random variable,
we will define $w'_{s,i,j}$ as
\begin{align*}
w'_{s,i,j} \vert \F_{\{s,i',j'\}} = w'_{s,i,j} \vert   \F_{s-2}  \eqdef \indfunc\left\{u'_{s,i,j} \leq \frac{p_{h,i}}{\wt{p}_{s-1,i}}\right\} \sim \mathcal{B}\left(\frac{p_{h,i}}{\wt{p}_{s-1,i}}\right)
\end{align*}
for any
$i'$ and $j'$ such that $\{s,1,1\} \leq \{s,i',j'\} <\{s,i,j\}$.
Note that $w'_{s,i,j}$, unlike $z_{s,i,j}$, does not have a recursive
definition, and its only dependence on any other variable comes from
$\wt{p}_{s-1,i}$.
First, we peel off the last step
\begin{align*}
&\probability\left( \sum_{j=1}^{\wb{q}} \sum_{i=1}^{\nhl} z_{h,i,j} \geq g \condbar A\right)
= \expectedvalue_{\Fp_{t-1}|A}\left[\probability\left( \sum_{j=1}^{\wb{q}} \sum_{i=1}^{\nhl} \indfunc\left\{u_{h,i,j} \leq \frac{\wt{p}_{h,i}}{\wt{p}_{t-1,i}}\right\}z_{h-1,i,j} \geq g \condbar \Fp_{t-1} \cap A\right)\right]\\
&\leq \expectedvalue_{\Fp_{t-1}|A}\left[\probability\left( \sum_{j=1}^{\wb{q}} \sum_{i=1}^{\nhl} \indfunc\left\{u'_{h,i,j} \leq \frac{p_{h,i}}{\wt{p}_{t-1,i}}\right\}z_{h-1,i,j} \geq g \condbar \Fp_{t-1} \cap A\right)\right]
= \probability\left( \sum_{j=1}^{\wb{q}} \sum_{i=1}^{\nhl} w'_{h,i,j}z_{h-1,i,j} \geq g \condbar A\right),
\end{align*}
where we used the fact that conditioned on $A$, $\coldict_{\{h,l\}}$ is accurate \wrt
$\kermatrix_{\{h,l\}}$, which guarantees that $\wt{p}_{h,i} \leq p_{h,i}$.
Plugging this in the previous bound,
\begin{align*}
\probability&\left( \sum_{j=1}^{\wb{q}} \sum_{i=1}^{\nhl} z_{h,i,j} \geq g \condbar A \right)
\probability\left(A \right)
\leq\probability\left( \sum_{j=1}^{\wb{q}} \sum_{i=1}^{\nhl} w'_{h,i,j}z_{h-1,i,j} \geq g \cap A\right)
\leq\probability\left( \sum_{j=1}^{\wb{q}} \sum_{i=1}^{\nhl} w'_{h,i,j}z_{h-1,i,j} \geq g \right).
\end{align*}
We now proceed by peeling off layers from the end of the chain one by one. We show how to move from an iteration $s\leq h$ to $s-1$.
\begin{align*}
&\probability\left( \sum_{j=1}^{\wb{q}} \sum_{i=1}^{\nhl} w'_{s,i,j}z_{s-1,i,j} \geq g \right)
= \expectedvalue_{\Fp_{s-2}}\left[\probability\left( \sum_{j=1}^{\wb{q}} \sum_{i=1}^{\nhl} \indfunc\left\{u'_{s,i,j} \leq \frac{p_{h,i}}{\wt{p}_{s-1,i}}\right\}z_{s-1,i,j} \geq g \condbar \Fp_{s-2} \right)\right]\\
&= \expectedvalue_{\Fp_{s-2}}\left[\probability\left( \sum_{j=1}^{\wb{q}} \sum_{i=1}^{\nhl} \indfunc\left\{u'_{s,i,j} \leq \frac{p_{h,i}}{\wt{p}_{s-1,i}}\right\}\indfunc\left\{u_{s-1,i,j} \leq \frac{\wt{p}_{s-1,i}}{\wt{p}_{s-2,i}}\right\}z_{s-2,i,j} \geq g \condbar \Fp_{s-2} \right)\right]\\
&= \expectedvalue_{\Fp_{s-2}}\left[\probability\left( \sum_{j=1}^{\wb{q}} \sum_{i=1}^{\nhl} \indfunc\left\{u'_{s-1,i,j} \leq \frac{p_{h,i}}{\wt{p}_{s-2,i}}\right\}z_{s-2,i,j} \geq g \condbar \Fp_{s-2} \right)\right]
= \probability\left( \sum_{j=1}^{\wb{q}} \sum_{i=1}^{\nhl} w'_{s-1,i,j}z_{s-2,i,j} \geq g \right)
\end{align*}
Applying this repeatedly from $s=h$ to $s=2$ we have,
\begin{align*}
&\probability\left( \sum_{j=1}^{\wb{q}} \sum_{i=1}^{\nhl} w'_{h,i,j}z_{h-1,i,j} \geq g \right)
=\probability\left( \sum_{j=1}^{\wb{q}} \sum_{i=1}^{\nhl} w'_{1,i,j}z_{0,i,j} \geq g \right)
=\probability\left( \sum_{j=1}^{\wb{q}} \sum_{i=1}^{\nhl} w'_{1,i,j} \geq g \right). 
\end{align*}
Now, all the $w'_{1,i,j}$ are independent Bernoulli random variables,
and we can bound their sum with a Hoeffding-like bound using Markov inequality,
\begin{align*}
\probability&\left( \sum_{j=1}^{\wb{q}} \sum_{i=1}^{\nhl} w'_{1,i,j} \geq g \right) = \inf_{\theta > 0}\probability\left( e^{\sum_{j=1}^{\wb{q}} \sum_{i=1}^{\nhl} \theta w'_{1,i,j}} \geq e^{\theta g} \right)\\
&\leq \inf_{\theta > 0} \frac{\expectedvalue \left[ e^{\sum_{j=1}^{\wb{q}} \sum_{i=1}^{\nhl} \theta w'_{1,i,j}}\right]}{e^{\theta g}}
= \inf_{\theta > 0} \frac{\expectedvalue \left[ \prod_{j=1}^{\wb{q}} \prod_{i=1}^{\nhl} e^{ \theta w'_{1,i,j}}\right]}{e^{\theta g}}
= \inf_{\theta > 0} \frac{\prod_{j=1}^{\wb{q}} \prod_{i=1}^{\nhl} \expectedvalue \left[ e^{ \theta w'_{1,i,j}}\right]}{e^{\theta g}}\\
&=\inf_{\theta > 0} \frac{\prod_{j=1}^{\wb{q}} \prod_{i=1}^{\nhl} (p_{h,i} e^\theta + (1-p_{h,i}))}{e^{\theta g}}
= \inf_{\theta > 0} \frac{\prod_{j=1}^{\wb{q}} \prod_{i=1}^{\nhl} (1+p_{h,i}( e^\theta -1))}{e^{\theta g}}\\
&\leq \inf_{\theta > 0} \frac{\prod_{j=1}^{\wb{q}} \prod_{i=1}^{\nhl} e^{p_{h,i}( e^\theta -1)}}{e^{\theta g}}
\leq \inf_{\theta > 0} \frac{e^{\wb{q}( e^\theta -1)\sum_{i=1}^{\nhl} p_{h,i}}}{e^{\theta g}}
= \inf_{\theta > 0} e^{(\deff{\gamma}_{\{h,l\}}\wb{q}(e^\theta - 1) - \theta g)}
\leq \inf_{\theta > 0} e^{(\deff{\gamma}_{\{h,l\}}\wb{q}(e^\theta - 1) - \theta g)},
\end{align*}
where we use the fact that $1 + x \leq e^x$, $w'_{1,i,j} \sim \mathcal{B}(p_{h,i})$ and by Def.~\ref{def:exact-lev-scores}, $\sum_{i=1}^{\nhl} p_{h,i} = \sum_{i=1}^{\nhl} \tau_{h,i} = \deff{\gamma}_{\{h,l\}}$.
The choice of $\theta$ minimizing the previous expression is obtained as
\begin{align*}
\frac{d}{d\theta}e^{\left(\wb{q}\deff{\gamma}_{\{h,l\}}(e^\theta - 1) - \theta g \right)}
= e^{\left(\wb{q}\deff{\gamma}_{\{h,l\}}(e^\theta - 1) - \theta g \right)}\left(\wb{q}\deff{\gamma}_{\{h,l\}}e^\theta - g\right) = 0,
\end{align*}
and thus $\theta = \log (g/(\wb{q}\deff{\gamma}_{\{h,l\}}))$. Plugging this in the previous bound,
\begin{align*}
\inf_\theta \exp\left\{ \wb{q}\deff{\gamma}_{\{h,l\}}(e^\theta - 1) - \theta g)\right\}
&= \exp\left\{ g  - \wb{q}\deff{\gamma}_{\{h,l\}} -  g \log \left(\frac{g}{\wb{q}\deff{\gamma}_{\{h,l\}}}\right)\right\}\\
&= \exp\left\{ -g\left(\log \left(\frac{g}{\wb{q}\deff{\gamma}_{\{h,l\}}}\right) - 1\right)\right\} e^{- \wb{q}\deff{\gamma}_{\{h,l\}}},
\end{align*}
and choosing $g = 3\wb{q}\deff{\gamma}_{\{h,l\}}$, we conclude our proof.
 \section{Applications}\label{sec:app-generalization}
\begin{proof}[Proof of Lemma \ref{lem:nyst-app-guar}]
    \begin{align*}
    &\wt{\kermatrix}_n = \kermatrix_n \selmatrix_n(\selmatrix_n^\transp \kermatrix_n \selmatrix_n + \gamma\:I_n)^{-1}\selmatrix_n^\transp \kermatrix_n\\
    &= \featkermatrix_n^\transp\featkermatrix_n \selmatrix_n(\selmatrix_n^\transp \featkermatrix_n^\transp\featkermatrix_n \selmatrix_n + \gamma\:I_n)^{-1}\selmatrix_n^\transp \featkermatrix_n^\transp\featkermatrix_n\\
    &= \featkermatrix_n^\transp\featkermatrix_n \selmatrix_n\selmatrix_n^\transp \featkermatrix_n^\transp(\featkermatrix_n \selmatrix_n\selmatrix_n^\transp \featkermatrix_n^\transp + \gamma\:I_D)^{-1}\featkermatrix_n\\
    &= \featkermatrix_n^\transp(\featkermatrix_n \selmatrix_n\selmatrix_n^\transp \featkermatrix_n^\transp + \gamma\:I_D - \gamma\:I_D)(\featkermatrix_n \selmatrix_n\selmatrix_n^\transp \featkermatrix_n^\transp + \gamma\:I_D)^{-1}\featkermatrix_n\\
    &= \featkermatrix_n^\transp(\:I_D - \gamma(\featkermatrix_n \selmatrix_n\selmatrix_n^\transp \featkermatrix_n^\transp + \gamma\:I_D)^{-1})\featkermatrix_n\\
    &= \featkermatrix_n^\transp\:I_D\featkermatrix_n - \gamma\featkermatrix_n^\transp(\featkermatrix_n \selmatrix_n\selmatrix_n^\transp \featkermatrix_n^\transp + \gamma\:I_D)^{-1}\featkermatrix_n\\
    &= \kermatrix_n - \gamma\featkermatrix_n^\transp(\featkermatrix_n \selmatrix_n\selmatrix_n^\transp \featkermatrix_n^\transp + \gamma\:I_D)^{-1}\featkermatrix_n
    \end{align*}
    From Lem.~\ref{lem:concentration-equivalence}, and the fact that $\coldict_n$ is $\varepsilon$-approximate we have that
    \begin{align*}
    \norm{(\featkermatrix_t\featkermatrix_t^\transp + \gamma\:I_D)^{-1/2}\featkermatrix_t(\:I_t - \:S_s\:S_s^\transp)\featkermatrix_t^\transp(\featkermatrix_t\featkermatrix_t^\transp + \gamma\:I_D)^{-1/2}}{2} \leq \varepsilon,
    \end{align*}
    which implies
    \begin{align*}
    \featkermatrix_t\featkermatrix_t^\transp - \featkermatrix_t\:S_s\:S_s^\transp\featkermatrix_t^\transp \preceq \varepsilon(\featkermatrix_t\featkermatrix_t^\transp + \gamma\:I_D)
    \end{align*}
    and
    \begin{align*}
    \featkermatrix_t\featkermatrix_t^\transp - \varepsilon(\featkermatrix_t\featkermatrix_t^\transp + \gamma\:I_D) \preceq \featkermatrix_t\:S_s\:S_s^\transp\featkermatrix_t^.\transp
    \end{align*}
    Therefore,
    \begin{align*}
    \kermatrix_n - \wt{\kermatrix}_n &= \gamma\featkermatrix_n^\transp(\featkermatrix_n \selmatrix_n\selmatrix_n^\transp \featkermatrix_n^\transp + \gamma\:I_D)^{-1}\featkermatrix_n
     \preceq \gamma\featkermatrix_n^\transp(\featkermatrix_t\featkermatrix_t^\transp - \varepsilon(\featkermatrix_t\featkermatrix_t^\transp + \gamma\:I_D) + \gamma\:I_D)^{-1}\featkermatrix_n\\
     &= \frac{\gamma}{1-\varepsilon}\featkermatrix_n^\transp(\featkermatrix_t\featkermatrix_t^\transp + \gamma\:I_D)^{-1}\featkermatrix_n
     = \frac{\gamma}{1-\varepsilon}\kermatrix_t(\kermatrix_t + \gamma\:I)^{-1} \preceq \frac{\gamma}{1-\varepsilon}\:I.
    \end{align*}
\end{proof}
 
\end{document}